\documentclass{article}

\usepackage{microtype}
\usepackage{graphicx}
\usepackage{float}
\usepackage{subcaption}
\usepackage{booktabs} % for professional tables
\usepackage{longtable}
\usepackage{xcolor}

\usepackage{hyperref}
\usepackage{xurl}

\usepackage[preprint]{icml2026}

\usepackage{amsmath}
\usepackage{amssymb}
\usepackage{bbm}
\usepackage{mathtools}
\usepackage{amsthm}
\DeclareMathOperator*{\argmin}{arg\,min}
\DeclareMathOperator*{\argmax}{arg\,max}
\usepackage{algorithm}
\usepackage{algorithmic}
\usepackage{tikz}
\usetikzlibrary{arrows.meta,positioning,shapes.geometric,fit,calc,backgrounds}

\usepackage[capitalize,noabbrev]{cleveref}

\theoremstyle{plain}
\newtheorem{theorem}{Theorem}[section]
\newtheorem{proposition}[theorem]{Proposition}

\theoremstyle{definition}

\theoremstyle{remark}

\usepackage[disable,textsize=tiny]{todonotes}

\icmltitlerunning{Beyond Additive Decompositions}

\begin{document}

\twocolumn[
  \icmltitle{Beyond Additive Decompositions: Interpretability through Separability}

  % It is OKAY to include author information, even for blind submissions: the
  % style file will automatically remove it for you unless you've provided
  % the [accepted] option to the icml2026 package.

  % List of affiliations: The first argument should be a (short) identifier you
  % will use later to specify author affiliations Academic affiliations
  % should list Department, University, City, Region, Country Industry
  % affiliations should list Company, City, Region, Country

  % You can specify symbols, otherwise they are numbered in order. Ideally, you
  % should not use this facility. Affiliations will be numbered in order of
  % appearance and this is the preferred way.
  \begin{icmlauthorlist}
    \icmlauthor{Jinyang Liu}{yyy}
    \icmlauthor{Munir Eberhardt Hiabu}{yyy}
  \end{icmlauthorlist}

  \icmlaffiliation{yyy}{Department of Mathematical Sciences, University of Copenhagen, Denmark}

  \icmlcorrespondingauthor{Jinyang Liu}{jl@math.ku.dk}

  % You may provide any keywords that you find helpful for describing your
  % paper; these are used to populate the "keywords" metadata in the PDF but
  % will not be shown in the document
  \icmlkeywords{Machine Learning, ICML, Interpretability, Tensor Decomposition}

  \vskip 0.3in
]

% this must go after the closing bracket ] following \twocolumn[ ...

% This command actually creates the footnote in the first column listing the
% affiliations and the copyright notice. The command takes one argument, which
% is text to display at the start of the footnote. The \icmlEqualContribution
% command is standard text for equal contribution. Remove it (just {}) if you
% do not need this facility.

% Use ONE of the following lines. DO NOT remove the command.
% If you have no special notice, KEEP empty braces:
\printAffiliationsAndNotice{}  % no special notice (required even if empty)
% Or, if applicable, use the standard equal contribution text:
% \printAffiliationsAndNotice{\icmlEqualContribution}

\begin{abstract}
  Interpretable machine learning requires models that are accurate and structurally faithful to the data.
  Existing explainability methods rely heavily on additive representations (e.g., Generalized Additive Models (GAMs), SHapley Additive exPlanations (SHAP), functional ANOVA), which can suffer from signal cancellation and off-support extrapolation in the presence of strong interactions.
  We propose Tensor Separation Learning (TSL), a regression model that learns a sum of rank-1 products of univariate per-feature functions via a stagewise greedy procedure with orthogonal refitting. By enforcing separability, TSL avoids the information loss inherent in additive projections caused by marginalizing higher-order interactions.
  The learned TSL model can be fully reconstructed from first-order partial dependence functions, up to constant factors. This stage-wise correspondence ensures that the resulting visualizations are faithful to the fitted components.
  We establish approximation-rate guarantees for functions with bounded mixed $p$-th order partial derivatives and demonstrate that TSL competes with black-box models on regression benchmarks.
    \begingroup
  \renewcommand{\thefootnote}{\fnsymbol{footnote}}%
  \footnotemark[2]%
  \endgroup
\end{abstract}

\begingroup
\renewcommand{\thefootnote}{\fnsymbol{footnote}}%
\footnotetext[2]{Source code: \url{https://github.com/jyliuu/TSL}.}%
\endgroup

\section{Introduction}
Interpretable machine learning faces a fundamental tension: practitioners need both accurate predictions and stable, readable model structure. Deep neural networks and gradient boosting ensembles achieve state-of-the-art accuracy but operate as ``black boxes,'' making it difficult to understand predictions or assess robustness \cite{rudin2019stop,murdoch2019definitions}. Glass-box models like Generalized Additive Models (GAMs) \cite{hastie1990generalized} provide readable structure but often miss complex interactions or rely on parametric assumptions that may be misspecified.

The field of interpretable machine learning -- that includes glass-box models and post-hoc interpretations -- 
has largely addressed interpretability through \emph{additive decompositions}, where models are expressed as sums of main effects and explicit interaction terms. 
 However, additive decompositions suffer from fundamental limitations in the presence of strong interactions, which we detail in \cref{sec:limitations}. Three failure modes recur: (i) local SHAP explanations can cancel across main and interaction effects; (ii) partial dependence plots ignore higher-order interactions; (iii) spurious patterns appear in low-density regions \cite{hooker2007generalized}.

We propose \emph{Tensor Separation Learning} (TSL), a glass-box regression model of the form
\[
  \hat{m}(\mathbf{x}) = \sum_{\ell=1}^{R} \Bigl(\lambda_{+}^{(\ell)}\prod_{j=1}^{p}\hat{m}_{+,j}^{(\ell)}(x_j) - \lambda_{-}^{(\ell)}\prod_{j=1}^{p}\hat{m}_{-,j}^{(\ell)}(x_j)\Bigr),
\]
where $R\geq 1$ is the number of TSL stages and each stage $\ell$ is the difference of two positive separable products, and every component function $\hat{m}_{\pm,j}^{(\ell)}: \mathbb{R} \to \mathbb{R}_{>0}$ is a univariate positive function of a single feature. By enforcing positive components $\hat{m}_{\pm,j}^{(\ell)} > 0$, each component unambiguously amplifies or suppresses its product. TSL fits the model greedily. At each stage $\ell$, the products $(\prod_{j=1}^{p}\hat{m}_{+,j}^{(\ell)}(x_j), \prod_{j=1}^{p}\hat{m}_{-,j}^{(\ell)}(x_j))$ are fit against the current residuals, then all current and previous stage coefficients $\{\lambda_{+}^{(k)}, \lambda_{-}^{(k)}\}_{k =1}^\ell$ are jointly optimized via least squares. Because each stage is a difference of two products of 1D functions, marginalizing over the other features rescales each side by a constant, so the partial dependence functions are directly proportional to the fitted components.

Our contributions are as follows:
\begin{itemize}
  \item \textbf{Tensor Separation Learning (TSL)}: A regression algorithm that estimates the conditional mean as a sum of differences of positive rank-1 separable products.
\item \textbf{Model-Native Interpretability}: Each separable stage is recoverable from its branch-wise 1D partial dependence curves, up to one scalar normalizer per stage, so the plots faithfully represent fitted components rather than post-hoc surrogates. We further introduce a backbone/tilt representation that separates stage activity from signed branch imbalance. Exemplary visualizations are found at: \cref{app:california_figures,app:bike_sharing_figures}
  \item \textbf{Approximation Theory}: We establish $O(1/\sqrt{r})$ approximation rates for the TSL hypothesis class on functions with bounded mixed $p$-th order partial derivatives, via the orthogonal greedy algorithm (OGA) framework of \citet{barron2008approximation}.
\end{itemize}

\textbf{Conflict of Interest Disclosure.} The authors declare no financial conflicts of interest.

\section{Related Work}

Interpretability methods in machine learning can be broadly categorized into two approaches: \emph{post-hoc explanations} that analyze black-box models after training, and \emph{glass-box models} that are interpretable by design \cite{murdoch2019definitions,rudin2019stop}. Despite this distinction, both categories share a fundamental characteristic: they rely on \emph{additive decompositions} to provide interpretable views of the predictor.

\subsection{Additive Post-Hoc Explanations and Glass-Box Models}

Post-hoc explanation methods apply additive decompositions to fixed black-box predictors. Examples include SHAP \cite{lundberg2017unified}, which decomposes a prediction into feature contributions at a point; Integrated Gradients \cite{sundararajan2017axiomatic}, whose Completeness axiom forces an additive decomposition of the prediction across features; and the Functional ANOVA decomposition \cite{hooker2007generalized}, which provides a global additive view.
%but computing the full decomposition requires evaluating marginal expectations over many subsets $u \subseteq [p] = {1, \ldots, p}$ of features, where each expectation involves integration over the remaining $p-|u|$ dimensions. This exponential growth in both the number of terms and the dimensionality of each integration makes the approach computationally prohibitive in high dimensions.

Glass-box models enforce interpretability by constraining the predictor itself to an additive structure. Generalized Additive Models (GAMs) represent the predictor as a sum of univariate ``shape functions'' \cite{hastie1990generalized}, achieving transparency by restricting components to 1D terms. GAMs can be extended to include pairwise interaction surfaces (GA²M) that remain visualizable in one or two dimensions \cite{lou2013accurate}, decomposing the predictive model as $m(\mathbf{x}) = m_0 + \sum_j m_j(x_j) + \sum_{i < j} m_{ij}(x_i, x_j)$. Random Planted Forests \cite{hiabu2020random} generalizes this idea by allowing interaction components up to a prespecified order, each represented by a regression tree and fitted in parallel.
Neural Additive Models (NAMs) replace the parametric 1D shape functions with flexible neural components \cite{agarwal2021neural}; Neural Basis Models share a low-rank shape-function basis across features \cite{radenovic2022neural}; NODE-GAM instantiates GAM and GA²M variants via differentiable oblivious decision trees \cite{chang2022node}; and models such as GAMI-Net explicitly incorporate structured interaction terms under heredity-type constraints \cite{yang2021gami}. 

Across both perspectives, the central challenge is not only to obtain an additive representation that is interpretable, but to do so without sacrificing predictive performance. The tension is two-sided: black-box models may achieve high predictive accuracy because they are not constrained to additively decompose into low-dimensional components. This flexibility renders faithful post-hoc explanation difficult (see \cref{sec:limitations}), whereas glass-box models are interpretable by construction but may sacrifice accuracy due to their structural restrictions. 
% Both post-hoc methods and additive model classes face identifiability issues: without explicit constraints, effects can be redistributed between main and interaction terms \cite{hooker2007generalized,lengerich2020purifying}.
% While these methods provide explicit structure, restricting high-dimensional functions to additive sums creates specific representational challenges in the presence of strong interactions, including signal cancellation and extrapolation issues. We reserve the detailed analysis of these limitations for \cref{sec:limitations}.

\subsection{Tensor Decomposition Methods}
A closely related line of work is classical tensor decomposition. In CP/PARAFAC, one models an observed $p$-way tensor $\mathcal{X}\in\mathbb{R}^{I_1\times\cdots\times I_p}$, where $I_k \in \{1, 2, \dots\}$, by a sum of outer-products $\mathcal{X}\approx\sum_{\ell=1}^{r}\lambda_{\ell}\,a_{\ell}^{(1)}\circ\cdots\circ a_{\ell}^{(p)}$ with factor vectors $a_{\ell}^{(k)}\in\mathbb{R}^{I_k}$ and weights $\lambda_{\ell}\in\mathbb{R}$ \cite{harshman1970foundations,carroll1970analysis,kolda2009tensor,sidiropoulos2017tensor}. Equivalently, in entrywise form, $X_{i_1,\ldots,i_p}\approx\sum_{\ell=1}^{r}\lambda_{\ell}\prod_{k=1}^{p}a_{i_k\ell}^{(k)}$, which is an additive superposition of separable rank-1 components. The key difference is the inferential objective: classical tensor decomposition typically seeks recovery of ground-truth latent factors (up to the usual scaling/permutation ambiguities), whereas our method uses separable structure to learn an interpretable supervised predictor from scattered covariate--response samples.

\subsection{Separation Learning}

We define the class of \textit{separation models} as functions that can be represented as a sum of separable rank-1 products:
\begin{equation}
  m(\mathbf{x})=\sum_{\ell=1}^{r} s_{\ell}\prod_{j=1}^{p} g_{j}^{(\ell)}(x_j),   \quad
  s_\ell\in\mathbb{R},\,\,
  g_j^{(\ell)}:\mathbb{R}\to\mathbb{R}.
 \label{eq:sep_model}
\end{equation}
Although \eqref{eq:sep_model} algebraically resembles a tensor decomposition discussed in the previous section, the goal here is different. Separation learning considers the problem of supervised prediction with a separable model (see \cref{sec:method} for setup).
% Previous work on separation learning reduces estimation
% to optimizing the coefficients of a fixed univariate basis, differing
% mainly in the choice of basis and optimizer: monomial or tent bases
% under fixed-rank alternating least squares (ALS) on scattered data
The functional form was first introduced by \citet{beylkin2005algorithms} to approximate a \emph{fixed} high-dimensional function. Adapting this form to the supervised setting, previous work on separation learning reduces estimation to optimizing the coefficients of a fixed univariate basis, differing mainly in the choice of basis and optimizer:
\cite{beylkin2009multivariate,%
garcke2010classification,chevreuil2015least}, spline bases under
Gauss--Newton updates \cite{govindarajan2022regression}, and a truncated
Fourier basis fit via tensor decomposition of the coefficient tensor
\cite{kargas2021supervised}. In a related functional-ANOVA setting, ANOVA-TPNN \cite{park2025anovatpnn} models the ANOVA components as tensor products of learnable centered neural bases, fitting main and interaction effects jointly under sum-to-zero identifiability constraints. TSL differs from these approaches in two significant ways.
(a) Estimation is done stagewise, by learning on the residuals of the previous stage rather than solving a fixed-rank joint optimization problem over a predetermined basis. (b) Instead of fitting unconstrained rank-1 tensors, we fit a difference of two positive rank-1 tensors at every stage aiding both interpretability and stability of the fit (discussed in \cref{sec:tsl,sec:identifiability_stability}).

\subsubsection{Other Related Models}
Separation learning is distinct from tensor regression with tensor-valued covariates \cite{zhou2013tensor,lock2018tensor}, tensor-network classifiers such as tensor trains \cite{stoudenmire2016supervised}, and Kolmogorov--Arnold networks \cite{liu2025kan}.

\subsection{Greedy Approximation and Boosting Lineage}

Forward stagewise procedures (including boosting-style fitting) iteratively add weak learners that reduce residual error, yielding an additive estimator with controlled complexity via early stopping \cite{hastie2009elements}. Rule ensembles \cite{friedman2008predictive} and modern gradient-boosted trees such as XGBoost \cite{chen2016xgboost} and LightGBM \cite{ke2017lightgbm} are strong baselines, but their
accuracy comes from aggregating many weak learners, yielding a sum of hundreds of terms that is effectively opaque. TSL shares the stagewise structure but differs in that it is a low-rank sum of interpretable strong learners. 

\section{Limitations of Post-Hoc Additive Decompositions}
\label{sec:limitations}

Post-hoc additive decompositions face two well-known limitations when strong interactions are present. We now discuss two popular methods of additive explanation — SHAP and partial dependence plots — and show how each can obscure the true effect of individual features.

% \textbf{SHAP cancellation.} SHAP \cite{lundberg2017unified} merges main and interaction effects into a single local attribution: the SHAP value for feature $k$ is a weighted sum of the main effect and interaction terms: $\phi_k(x) = m_k(x_k) + \tfrac{1}{2}\sum_j m_{kj}(x_{kj}) + \cdots$, so terms with opposite signs can therefore cancel allowing the SHAP value of a feature can be zero even while it still contributes to the model prediction \cite{hiabu2023unifying}. Popular path-dependent TreeSHAP estimators \cite{lundberg2020treeshap} can additionally be inconsistent under correlated features \cite{liu2025fastpd}; the conditional vs.\ interventional reference-distribution debate that drives much of this pathology is articulated by \citet{janzing2020feature, sundararajan2020many}.

\textbf{Shapley cancellation.} SHAP \cite{lundberg2017unified} assigns each feature \(k\) a single local attribution \(\phi_k(\mathbf{x})\). There exists a functional decomposition
$m(\mathbf{x})=\sum_{S\subseteq[p]} m_S(\mathbf{x}_S)$,
where $\mathbf{x}_S \coloneqq (x_j)_{j \in S}$ is the subvector indexed by $S$, $m_S$ is the corresponding ANOVA component, and $m_\emptyset$ is a constant intercept,
such that the Shapley value distributes each functional component equally among the features it contains \cite{rota1964foundations,harsanyi1963simplified,hiabu2023unifying,bordt2023shapley}: $\phi_k(\mathbf{x})=m_k(x_k) +\frac{1}{2}\sum_{j\neq k}m_{kj}(x_j,x_k)+\cdots$
Thus SHAP reports only the net contribution of feature \(k\): main and interaction effects with opposite signs can cancel, so \(\phi_k(\mathbf{x})=0\) does not imply that feature \(k\) is irrelevant or inactive at \(\mathbf{x}\).

\textbf{Partial dependence discards interactions.} The partial dependence function of feature $j$ is defined as the marginal expectation $\mathrm{PD}_j(x_j) \coloneqq \mathbb{E}_{X_{(-j)}}[m(x_j, X_{(-j)})]$, where $X_{(-j)}$ denotes the random subvector over all coordinates except $j$; it recovers only the main effect of feature $j$ and is blind to the higher-order interactions in which the feature participates. Under the functional decomposition $m(\mathbf{x})=\sum_{S\subseteq[p]} m_S(\mathbf{x}_S)$, identified with the marginal constraint \cite{hiabu2023unifying}
\[
\sum_{\substack{T\subseteq [p]:\\ T \cap S \neq \emptyset}} \mathbb{E}[m_T(\mathbf{x}_{T \setminus S}, X_{T \cap S})]=0,
\]
the partial dependence function of
$x_1$ equals the intercept plus its main effect, $\mathrm{PD}_1(x_1) = m_\emptyset + m_1(x_1)$,
and therefore carries no information about any interaction term $m_S$ with
$1 \in S$ and $|S| \ge 2$. Furthermore, when features are correlated, marginal averaging can suffer from extrapolation in low-density regions \cite{liu2025fastpd, aas2021explaining}. We demonstrate the interaction-masking failure mode in a synthetic example (\cref{sec:synthetic_masked_interaction}).

% In the case of separable models, one can visualize
% partial dependence plots stage-wise which circumvents these problems,
\textbf{Partial dependence faithfulness for separable models.}
Separable models do not suffer from the aforementioned limitations. For a separable product, the interaction structure lives in the product form itself, $h(\mathbf{x})=\prod_{j=1}^{p} h_j(x_j)$. Partial dependence functions factorize as $\mathrm{PD}_j[h](x_j)=c_j\,h_j(x_j)$, with  $c_j=\mathbb{E}[\prod_{k \neq j} h_k(X_k)]$ constant in $x_j$, so $\mathrm{PD}_j$ recovers the factor shape exactly rather than collapsing to a main effect.

% Separable models circumvents the limitation of partial depence

% A separable model addresses these limitations as follows. For a separable product $h(\mathbf{x})=\prod_{j=1}^{p} h_j(x_j)$ and any subset $S\subseteq[p] :=\{1,\dots,p\} $, partial dependence under a chosen reference distribution $\mu$ satisfies $\mathrm{PD}_S[h](x_S)=c_S\,h_S(x_S)$, where $h_S=\prod_{j\in S} h_j(x_j)$ and $c_S=\mathbb{E}_{X_{\bar S}\sim\mu_{\bar S}}\!\left[\prod_{j\notin S} h_j(X_j)\right]$ is constant in $x_S$. Thus, within each separable stage, the partial dependence curve/surface on $x_S$ preserves the factor shape and is not contaminated by \emph{additional} additive terms of higher order.

% This identity is algebraic and holds for any reference $\mu$, but the statistical content of $\mathrm{PD}_S$ still depends on that choice. Under feature correlation, the joint data distribution can be a thin subset of the per-feature Cartesian-product support, so the conventional product-of-marginals reference integrates the stage over $X_{\bar S}$ combinations that are rare or unobserved in the data---a well-known weakness of partial dependence plots \cite{hooker2007generalized,hiabu2023unifying} that separability does not remove. Algebraically, the curve still recovers the factor shape $h_S(x_S)$ exactly; the caveat concerns the statistical meaning of the reference average, not the shape.

\section{Method}
\label{sec:method}

We define the TSL estimator in \cref{sec:tsl} and its stagewise fitting procedure in \cref{sec:fitting}, including grid-tensor refinement, bagged aggregation, and identifiability. We consider the supervised regression problem with features $\mathbf{x} \in \mathcal{X} \subseteq \mathbb{R}^p$, with $\mathcal X$ possibly non-rectangular, and targets $y \in \mathbb{R}$. Throughout, for a positive integer $k$, we write $[k] \coloneqq \{1,\ldots,k\}$; upper-case $X_j$ denotes the random variable for coordinate $j$ and lower-case $x_j$ its realized value, with $X_{(-j)}$ the random subvector over all coordinates except $j$. We aim to estimate the conditional mean $m(\mathbf{x}) = \mathbb{E}[Y | X= \mathbf{x}]$ from i.i.d. observations $\mathcal{D}_n = \{(y^{(i)}, \mathbf{x}^{(i)})\}_{i=1}^n \subseteq \mathbb{R} \times \mathcal{X}$.

\subsection{Tensor Separation Learning}\label{sec:tsl}

We introduce the \emph{Tensor Separation Learning} (TSL) estimator $\hat{m}$ that learns $R$ stages for learning $m$ from data $\mathcal{D}_n$:
\begin{equation}
  \hat{m}(\mathbf{x}) = \sum_{\ell=1}^{R} \Bigl(\lambda_{+}^{(\ell)}\prod_{j=1}^{p}\hat{m}_{+,j}^{(\ell)}(x_j) - \lambda_{-}^{(\ell)}\prod_{j=1}^{p}\hat{m}_{-,j}^{(\ell)}(x_j)\Bigr),
  \label{eq:estimator}
\end{equation}
subject to positive univariate components $\hat{m}_{+,j}^{(\ell)}(x_j) > 0$, $\hat{m}_{-,j}^{(\ell)}(x_j) > 0$ for all $j \in [p]$. We define the signed products $\hat{m}_{\pm}^{(\ell)}(\mathbf{x}) \coloneqq \lambda_{\pm}^{(\ell)}\prod_{j=1}^p \hat{m}_{\pm,j}^{(\ell)}(x_j)$ and stage predictor $\hat{m}^{(\ell)}(\mathbf{x}) \coloneqq \hat{m}_{+}^{(\ell)}(\mathbf{x}) - \hat{m}_{-}^{(\ell)}(\mathbf{x})$ for brevity.

% Each $\hat{m}^{(\ell)}$ is an ordered difference of two non-negative separable products, $\hat{m}_{+}^{(\ell)}(\mathbf{x})$ and $\hat{m}_{-}^{(\ell)}(\mathbf{x})$, each including its stage scaling coefficient $\lambda_{\pm}^{(\ell)}\geq 0$. 
\citet{beylkin2009multivariate} defined the \textit{separation rank} as the number of separable products in the model; since each stage in our model contributes at most two separable products if no product is zero, the total separation rank is at most $2R$. A low separation rank aids interpretability: fewer products are needed to represent the model.

TSL differs from the separable model (\ref{eq:sep_model}) considered by others in one key respect: it constrains all components to be positive. The design is motivated by two considerations. First, positivity promotes stability during fitting (discussed in \cref{sec:identifiability_stability}). Second, it improves interpretability. In an unconstrained product $\prod_j h_j(x_j)$, $h_j \in \mathbb{R}$, the sign of the entire product is determined by the product of all component signs --- so a positive $x_2$ component does not imply a positive contribution from the product if the component for $x_1$ can be negative. Consider a pair of features $(x_1, x_2)$ that is predominantly observed on a set $D \subset \mathbb{R}^2$: if $h_1 < 0$ and $h_2 > 0$ on $D$, the product's contribution on $D$ is negative, despite $h_2$ being positive. Outside $D$, where $h_1$ may be positive, the same product contributes positively. Positivity removes this ambiguity: when all components are positive, each component unambiguously amplifies its product's contribution, and the sign of a component faithfully reflects its directional role.

% Despite this identifiability limitation, the multiplicative structure provides a key interpretability advantage: it enables exact partial dependence computation. For each stage $\hat{m}^{(\ell)}$ and subset $S \subseteq [p]$, marginalization over $\mathbf{x}_{(-S)}$ yields constant factors for both the $(+)$ and $(-)$ products. Specifically, $\text{PD}_S^{(\ell)}(\mathbf{x}_S) = \lambda_{+}^{(\ell)} \prod_{j \in S} \hat{m}_{+,j}^{(\ell)}(x_j) \cdot c_{+,S}^{(\ell)} - \lambda_{-}^{(\ell)} \prod_{j \in S} \hat{m}_{-,j}^{(\ell)}(x_j) \cdot c_{-,S}^{(\ell)}$, where $c_{\pm,S}^{(\ell)} = \mathbb{E}[\prod_{j \notin S} \hat{m}_{\pm,j}^{(\ell)}(X_j)]$ are constants computed using the joint marginal distribution of $\mathbf{x}_{(-S)}$ to correctly handle dependent features. These constant factors ensure PD plots show true patterns without contamination from higher-order interactions (see \cref{sec:limitations}).

\subsubsection{Backbone and Exponential-Tilt Parametrization}

Each TSL stage is the difference of two positive separable products, and we observe empirically that the component pair $(\hat{m}_{+,j}^{(\ell)}, \hat{m}_{-,j}^{(\ell)})$ on a given feature is often close to balanced ($\hat{m}_{+,j}^{(\ell)} = \hat{m}_j^{(\ell)} = \hat{m}_{-,j}^{(\ell)}$) for some features but strongly imbalanced for others. When feature $x_j$ is balanced, it factors out of the difference as a common multiplier $\hat{m}_{j}^{(\ell)}\cdot(\prod_{k\neq j}\hat{m}_{+,k}^{(\ell)}-\prod_{k\neq j}\hat{m}_{-,k}^{(\ell)})$ and acts purely as a scale knob; when imbalanced, it tilts the stage toward the $(+)$ or $(-)$ product. To make these two roles explicit, we reparameterize each pair via a positive \emph{backbone} $b_j^{(\ell)}(x_j) > 0$ encoding the shared magnitude and a \emph{tilt} $d_j^{(\ell)}(x_j) \in \mathbb{R}$ encoding the signed imbalance:
\begin{equation}
  \label{eq:m_as_backbone_tilt}
    \hat{m}_{\pm,j}^{(\ell)}(x_j) = b^{(\ell)}_j(x_j) e^{\pm d^{(\ell)}_j(x_j)},
\end{equation}
The map \eqref{eq:m_as_backbone_tilt} is a bijection with inverse $b_j^{(\ell)} = \sqrt{\hat{m}_{+,j}^{(\ell)}\hat{m}_{-,j}^{(\ell)}}$ and $d_j^{(\ell)}=\tfrac{1}{2}\log(\hat{m}_{+,j}^{(\ell)}/\hat{m}_{-,j}^{(\ell)})$ (see \cref{app:backbone_tilt_pd_proof} for proof). The same parametrization absorbs the stage scalars $\lambda_\pm^{(\ell)}$ into a stage-level backbone scale, $b_0^{(\ell)} = \sqrt{\lambda_{+}^{(\ell)}\lambda_{-}^{(\ell)}}$, and tilt intercept, $d_0^{(\ell)} = \tfrac{1}{2}\log(\lambda_{+}^{(\ell)}/\lambda_{-}^{(\ell)})$, so that the per-feature backbones and tilts aggregate cleanly at the stage level as $b^{(\ell)}(\mathbf{x}) = b_0^{(\ell)}\prod_{j=1}^p b_j^{(\ell)}(x_j)$ and $d^{(\ell)}(\mathbf{x}) = d_0^{(\ell)} + \sum_{j=1}^p d_j^{(\ell)}(x_j)$. Substituting \eqref{eq:m_as_backbone_tilt} into the stage predictor $\hat{m}^{(\ell)}$ yields 
\begin{equation}
  \hat{m}(\mathbf{x})
  = 2\sum_{\ell =1}^R \, b^{(\ell)}(\mathbf{x})\,
  \sinh\!\bigl(d^{(\ell)}(\mathbf{x})\bigr).
  \label{eq:backbone_tilt_stage}
\end{equation}
Equation~\eqref{eq:backbone_tilt_stage} cleanly separates stage activity from signed imbalance. The
backbone $b^{(\ell)}(\mathbf{x}) > 0$ is an activity gate:
$b^{(\ell)}_0$ sets the stage's overall scale, and the per-feature
backbones $b^{(\ell)}_j(x_j)$ multiply to determine where the stage is
active---a near-zero factor in any feature switches the stage off in that
region. The tilt $d^{(\ell)}(\mathbf{x}) \in \mathbb{R}$ controls the signed
direction through the strictly increasing odd function $\sinh$: $d^{(\ell)}_0$
is the baseline $(+)/(-)$ imbalance before features contribute, and each
$d^{(\ell)}_j(x_j)$ adds onto this intercept. Thus, holding the backbone
and the other tilts fixed, increasing $d^{(\ell)}_j(x_j)$ always increases
the stage contribution; positive values tilt the stage toward the $(+)$
product, negative values tilt it toward the $(-)$ product, and zero
corresponds to local branch balance. In this sense, $d^{(\ell)}_j$ has a
clean interpretation as an additive imbalance score, with $\sinh$ only
providing a monotone response-scale transformation of the total tilt.

\subsubsection{Normalization} \label{sec:normalization}

The backbone--tilt parametrization in \eqref{eq:backbone_tilt_stage} is invariant to
feature-wise rescalings of \(b_j^{(\ell)}\) and constant shifts of
\(d_j^{(\ell)}\).  We use the normalized representative
satisfying
\[
\mathbb{E}\!\left[\log b_j^{(\ell)}(X_j)\right]=0,
\quad
\mathbb{E}\!\left[d_j^{(\ell)}(X_j)\right]=0,
\qquad j \in [p] .
\]
Equivalently, since
\(\hat m_{\pm,j}^{(\ell)}=b_j^{(\ell)}\exp(\pm d_j^{(\ell)})\),
this is the same as imposing
\[
\mathbb{E}\!\left[\log \hat m_{+,j}^{(\ell)}(X_j)\right]
=
\mathbb{E}\!\left[\log \hat m_{-,j}^{(\ell)}(X_j)\right]
=0 .
\]
Thus each univariate branch factor has geometric mean one, while
the branch-level scales are carried by
\(\lambda_+^{(\ell)}\) and \(\lambda_-^{(\ell)}\). The empirical
version of this normalization is used before averaging bagged
grids in \cref{app:normalization_and_averaging}. This fixes a within-stage representative,
but does not remove the support-induced non-identifiability
discussed in \cref{sec:identifiability_stability}.

\subsection{Fitting}\label{sec:fitting}

The TSL fitting procedure combines forward stagewise additive modeling with boosting-style residual fitting and CART-style partition refinement \cite{breiman1984classification,friedman2001greedy,hastie2009elements}; pseudocode is in \cref{app:algorithm} and the end-to-end pipeline is illustrated in the flowchart in \cref{fig:tsl_flowchart}. At stage $\ell$, the previously selected stages are held fixed and a new stage predictor $\hat{m}^{(\ell)}$ is fit to the residuals $R_i^{(\ell-1)} := y_i - \sum_{k=1}^{\ell-1} \hat{m}^{(k)}(\mathbf{x}^{(i)})$ (empty sum at $\ell=1$, so the first stage is fit directly to $y_i$) via grid-tensor refinement and bagged aggregation. After fitting stage $\ell$, we \emph{refit} all stage scalars $\{\lambda_{\pm}^{(k)}\}_{k=1}^\ell$ up to the current stage jointly by least squares against $y$.

\subsubsection{Grid Tensor Refinement}
We assume for now that $\lambda_{\pm}^{(\ell)}=1$; the stage scalars are initialized and re-fit in the outer loop (see \cref{app:algorithm}). The current stage is fit by learning the positive $(+)$ and negative $(-)$ separable products on a shared adaptive partition. Formally, we define a \emph{grid tensor} as the tuple $\mathcal{G} = \left(\prod_{j=1}^p \mathcal{I}_j, \{\hat v_{\pm,j,I}\}_{I \in \mathcal{I}_j, j \in [p]}\right)$, where $\mathcal{I}_j = \{I_{j,1}, \ldots, I_{j,L_j}\}$ partitions the domain of $x_j$ and $\hat v_{\pm,j,I} > 0$ are the $(+)/(-)$ values attained on interval $I$. We then model each separable term as a product of component functions $\hat{m}_{\pm,j}(x_j)$ that are piecewise-constant on $\mathcal{I}_j$, namely $\hat{m}_{\pm,j}(x_j)=\sum_{I\in\mathcal{I}_j}\hat{v}_{\pm,j,I}\mathbbm{1}_I(x_j)$ so that each $(\pm)$ product evaluates to the corresponding cell-value of its rank-1 tensor. The grid tensor is iteratively learned by refining the intervals $\mathcal{I}_j$ on each feature axis $j$ via a greedy split-scoring procedure, inspired by CART-style decision trees \cite{breiman1984classification}. At each iteration, we evaluate candidate splits on feature axis $j$ at threshold $s$ by partitioning an existing interval $I = [a, b) \in \mathcal{I}_j$ into left and right subintervals $I_L = [a, s)$ and $I_R = [s, b)$, then solve the following regularized least-squares objective for each region $S \in \{L, R\}$:
\begin{equation}
  \begin{split}
    \mathcal{L}_S(u_+^S, u_-^S)
    & = \sum_{x_j^{(i)} \in I_S} w_i
    (R_i^{(\ell-1)} -  ( u_+^S \hat{m}_{+}^{(i)} - u_-^S \hat{m}_{-}^{(i)}))^2 \\
    & \quad + \alpha ((u_+^S - 1)^2 + (u_-^S - 1)^2),
  \end{split} \label{eq:split_objective}
\end{equation}
where $\hat{m}_{\pm}^{(i)} := \hat{m}_{\pm}^{(\ell)}(\mathbf{x}^{(i)})$ represents the predictions of the current $(+)$ and $(-)$ products on $\mathbf{x}^{(i)}$ as defined in \eqref{eq:estimator}, $w_i \geq 0$ are stabilizing weights and $\alpha \geq 0$ is ridge regularization. To see how (\ref{eq:split_objective}) arises, observe that for the interval $I = [a, b) \in \mathcal{I}_j$ that is being refined, the predictions of the current $(+)$ and $(-)$ products on the samples in $I$ are given by $\hat{m}_{\pm}^{(i)} = \lambda_{\pm}^{(\ell)}\,\hat{v}_{\pm,j,I} \prod_{k\neq j} \hat{m}_{\pm,k}^{(\ell)}(x_k^{(i)})$ for $i$ with $x_j^{(i)} \in I$, where $\hat{v}_{\pm,j,I}$ is the value attained by $\hat{m}_{\pm,j}^{(\ell)}$ on $I$. Upon splitting $I$ into $I_L = [a, s)$ and $I_R = [s, b)$, the prediction made by the other components $\prod_{k \neq j} \hat{m}_{\pm,k}^{(i)}$ remains unchanged. In order to refine the predictions on $I_L$ and $I_R$, the updated values are parametrized as $\hat{v}_{\pm,j,I_S} = \hat{v}_{\pm,j,I} u_\pm^S$ for $S \in \{L, R \}$, and the multiplicative updates $u_\pm^S$ are optimized. The post-split prediction for any sample $i$ with $x_j^{(i)} \in I_S$ then becomes $u_\pm^S \hat{m}_\pm^{(i)}$, thus motivating the objective (\ref{eq:split_objective}).

The minimizer of (\ref{eq:split_objective}) is obtained from a closed-form $2 \times 2$ linear system of sufficient statistics (see \cref{app:split_scoring}). The split that maximizes the total reduction $\Delta_{\text{split}} = \Delta_L + \Delta_R$ where $\Delta_{S} \coloneqq \mathcal{L}_S(1,1) - \mathcal{L}_S(u_+^S, u_-^S)$ for $S \in \{L, R\}$ is selected.

% Throughout refinement, explicit updates to the backbone, $b_j^{(\ell)}(x_j)$, and tilt, $d_j^{(\ell)}(x_j)$, are not necessary since these are simply reparametrizations of the component functions $\hat{m}_{\pm,j}^{(\ell)}$, which can be recovered from the closed-form inverse (\cref{app:backbone_tilt_pd_proof}) after all refinement steps are complete.

\subsubsection{Bagging and Aggregation}\label{sec:bagging}
Borrowing from bagging and Random Forests \cite{breiman1996bagging,breiman2001random}, we reduce variance by fitting $n_{\text{grids}}$ grid tensors $\mathcal{G}^{(\ell, 1)}, \ldots, \mathcal{G}^{(\ell, n_{\text{grids}})}$ on bootstrap samples in parallel; let $\hat{m}_{\pm}^{(\ell, c)}$ for $c = 1, \ldots, n_{\text{grids}}$ denote the resulting stage predictors. An arithmetic mean of products is not a product, so we aggregate the per-feature components $\hat{m}_{\pm,j}^{(\ell, c)}$ rather than the stage predictors. Factor-wise averaging is fragile because separable representations are non-identifiable (\cref{sec:identifiability_stability}), so independently fitted factors need not lie in a comparable gauge.

To aggregate the bagged grids we work in the backbone/tilt-parametrization. Our similarity-based aggregation scheme proceeds in four steps. \emph{Normalize:} center each $\log b_j^{(\ell, c)}$ and $d_j^{(\ell, c)}$ on every axis and absorb the removed constants into $\lambda_\pm^{(\ell)}$. \emph{Anchor:} each candidate $c$ contributes a point $(\lambda_+^{(\ell, c)}, \lambda_-^{(\ell, c)})$ in the $(\lambda_+,\lambda_-)$ plane; we choose as reference $\mathcal{G}^{\star}$ the candidate closest to their centroid,
\[
  \mathcal{G}^{\star} = \argmin_{\mathcal G^{(\ell, c)}}\ \sum_{c'=1}^{n_{\text{grids}}}\big[(\lambda_+^{(c)}-\lambda_+^{(c')})^2 + (\lambda_-^{(c)}-\lambda_-^{(c')})^2\big].
\]
\emph{Filter:} discard candidates whose normalized components are too dissimilar to the reference under cosine similarity, controlled by a tuned threshold $\xi$. \emph{Average:} combine the survivors by taking the average of $\log b^{(\ell, c)}_j$ and tilts $d^{(\ell, c)}_j$ for every $j$. \Cref{app:fig:bimodal_alignment} illustrates an example where bagged grids converge to two distinct backbone representations of the same fitted stage, showing how averaging without filtering can cause identifiability issues as discussed in the next section.

The overall training cost for TSL is $\mathcal{O}(R \, n_{\text{grids}} \, T n p)$ for $R$ stages, $n_{\text{grids}}$ bagged grids per stage, $T$ split iterations per grid, $n$ samples, and $p$ features. The bagged grids are embarrassingly parallel, and histogram binning can reduce the per-iteration dependence from $n$ to the number of histogram bins. Full details are given in \cref{app:grid_tensor,app:forward_additive,app:algorithm,app:bagging_pseudocode}.

\subsubsection{Identifiability and Stability} \label{sec:identifiability_stability}
Separable models are not uniquely identified.
There are three sources of  ambiguity which we discuss below.

\paragraph{Classical scaling and permutation ambiguities.}
In classical tensor decomposition, a fully observed multi-way array is identifiable up to permutation and scaling ambiguities only \cite{kruskal1977three,kolda2009tensor,sidiropoulos2017tensor}. TSL inherits the same ambiguities but can be remedied via normalization (\cref{sec:normalization}).

\paragraph{Non-identifiability from non-rectangular support.}
In supervised learning, the feature-support $\mathcal{X}$ may not be rectangular and can introduce nontrivial ambiguities beyond global scaling and sign.
For a  two-dimensional example, let $\mathcal{X}=A\cup B$, where $A=[-1,0]\times[-1,0]$ and $B=[0,1]\times[0,1]$, and define
\begin{align*}
  m(x_1,x_2)
  &=\mathbbm{1}_A(x_1,x_2) a_1(x_1)a_2(x_2)\\
  &\quad+\mathbbm{1}_B(x_1,x_2) b_1(x_1)b_2(x_2),
\end{align*}
for arbitrary functions $a_1, a_2, b_1, b_2$. The following rank-1 factorizations all agree with $m$ on the observed support $\mathcal X$:
\begin{align*}
  m^{(1)}
  &=\bigl(\mathbbm{1}_{[-1,0]} a_1+\mathbbm{1}_{[0,1]} b_1\bigr)
    \bigl(\mathbbm{1}_{[-1,0]} a_2+\mathbbm{1}_{[0,1]} b_2\bigr),\\
  m^{(2)}
  &=\bigl({-}\mathbbm{1}_{[-1,0]} a_1+\mathbbm{1}_{[0,1]} b_1\bigr)
    \bigl({-}\mathbbm{1}_{[-1,0]} a_2+\mathbbm{1}_{[0,1]} b_2\bigr),\\
  m^{(3)}
  &=\bigl({-}\mathbbm{1}_{[-1,0]} a_1-\mathbbm{1}_{[0,1]} b_1\bigr)
    \bigl({-}\mathbbm{1}_{[-1,0]} a_2-\mathbbm{1}_{[0,1]} b_2\bigr).
\end{align*}
When fitting $n_{\text{grids}}$ in parallel, the fitted grid tensors may adopt different representations such as above, that all estimate $m$ equally well. By imposing positivity, TSL stabilizes the aggregation of independently fitted components, such that averaging the components will not result in cancellation, which may happen in the $A \cup B$ example above if the $x_1$ component of $m^{(1)}$ was averaged with the $x_1$ component of $m^{(3)}$. 
Regardless, positivity does not fully remove non-identifiability.
Even after the signs are fixed, each positive component can still have region-dependent scalings. For example, on region $A$ above, replacing $a_1$ with $a_1/c$ and $a_2$ with $c a_2$ for $c > 0$ leaves the prediction unchanged on $A$ and therefore on all of $\mathcal{X}$.  This is a consequence of the non-rectangular support: were the model evaluated on the full rectangular span $[-1,1]^2$, the rescaling would alter the predictions on $[-1,1]^2 \setminus \mathcal{X}$, making it an inadmissible reparameterization and resolving the ambiguity.

\paragraph{Non-identifiability from noisy observations.}

Finally, in the supervised setting, the fitting criterion evaluates only predictive error against noisy responses, $y^{(i)}$, so distinct separable representations can attain essentially the same error. Consequently, the latent factors should not in general be interpreted as uniquely recoverable objects, but rather one of many admissible representations selected by the stochastic fitting procedure. \Cref{app:fig:bimodal_alignment} in \cref{app:bimodal_alignment_figures} shows a concrete instance, where bagged grid tensors converge to two distinct backbone representations of the same fitted stage, populating opposite ends of the $(\lambda_+,\lambda_-)$ spectrum. The align-then-filter procedure of \cref{sec:bagging} resolves the ambiguity by selecting one canonical representative for averaging.

\section{Model-Native Partial Dependence}\label{sec:interpretable_representation}

The key advantage of TSL's separable structure is that each stage is reconstructed---up to per-stage, per-feature, per-branch scalar constants $C^{(\ell)}_{\pm,j}$---from its one-dimensional partial dependence functions. Global interpretability diagnostics computed from partial dependence are therefore exact representations of the fitted factor \emph{shapes}, not post-hoc approximations.

\begin{proposition}[Partial Dependence Decomposition]
  \label{prop:pd_identity}
  Consider a TSL estimator $\hat{m}$ as defined in \eqref{eq:estimator}.  For any stage $\ell$ and coordinate $j$, define
$c^{(\ell)}_{\pm,j} := \mathbb{E}\!\left[\prod_{k \neq j} \hat{m}_{\pm,k}^{(\ell)}(X_k)\right]$
and $C^{(\ell)}_{\pm,j} := c^{(\ell)}_{\pm,j}\,\lambda_{\pm}^{(\ell)}$. Then the 1D partial dependence function of $\hat{m}_{\pm}^{(\ell)}(\mathbf{x}) =\lambda_{\pm}^{(\ell)} \prod_{j=1}^p \hat{m}_{\pm,j}^{(\ell)}(x_j)$ on coordinate $j$ satisfies
  \begin{equation}
    \begin{aligned}
      \mathrm{PD}_{\pm,j}^{(\ell)}(x_j) := \mathbb{E}\!\left[\hat{m}_{\pm}^{(\ell)}(x_j, X_{(-j)})\right] = C^{(\ell)}_{\pm,j}\, \hat{m}_{\pm,j}^{(\ell)}(x_j),
    \end{aligned}
    \label{eq:pd_factorization}
  \end{equation}
  i.e., each 1D partial dependence curve is the 1D factor shape times the constant $C^{(\ell)}_{\pm,j}$. Moreover, letting
 $\bar{m}_{\pm}^{(\ell)}  := \mathbb{E}\!\left[\hat{m}_{\pm}^{(\ell)}(X)\right],                   Z_{\pm}^{(\ell)}  := \mathbb{E}\!\left[\prod_{j=1}^p \mathrm{PD}_{\pm,j}^{(\ell)}(X_j)\right],$
  the stage admits the following exact reconstruction
  \begin{equation}
    \hat{m}_{\pm}^{(\ell)}(\mathbf{x}) = \frac{\bar{m}_{\pm}^{(\ell)}}{Z_{\pm}^{(\ell)}} \prod_{j=1}^p \mathrm{PD}_{\pm,j}^{(\ell)}(x_j).
    \label{eq:pd_reconstruction}
  \end{equation}
  Consequently, each stage (and thus any downstream explanation primitive built from the stage factors) is computable from 1D partial dependence summaries up to a single scalar normalizer per stage and sign branch.
\end{proposition}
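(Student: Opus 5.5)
The plan is a direct computation in two steps: first the factorization \eqref{eq:pd_factorization}, then \eqref{eq:pd_reconstruction} obtained by multiplying the $p$ factorizations together and fixing the single remaining constant by taking an expectation. Throughout I read the expectation in $\mathrm{PD}$ as being over the marginal law of $X_{(-j)}$, as in the definition of $\mathrm{PD}_j$ in \cref{sec:limitations}, with $x_j$ held fixed. I will assume the relevant expectations are finite; positivity of the components then makes them strictly positive.

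\textbf{Factorization.} Fix $\ell$, a sign branch, and a coordinate $j$. Since $\hat m^{(\ell)}_\pm(x_j,X_{(-j)})=\lambda^{(\ell)}_\pm\,\hat m^{(\ell)}_{\pm,j}(x_j)\prod_{k\neq j}\hat m^{(\ell)}_{\pm,k}(X_k)$ and the first two factors are deterministic, linearity of expectation pulls them out. This gives $\mathrm{PD}^{(\ell)}_{\pm,j}(x_j)=\lambda^{(\ell)}_\pm c^{(\ell)}_{\pm,j}\,\hat m^{(\ell)}_{\pm,j}(x_j)=C^{(\ell)}_{\pm,j}\,\hat m^{(\ell)}_{\pm,j}(x_j)$. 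No independence between features is needed, because $c^{(\ell)}_{\pm,j}$ is defined through the joint law of $X_{(-j)}$.

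\textbf{Reconstruction.} Multiply the factorization over $j=1,\dots,p$:
\[
\prod_{j=1}^p \mathrm{PD}^{(\ell)}_{\pm,j}(x_j)=K\,\hat m^{(\ell)}_\pm(\mathbf{x}),\qquad K:=\frac{\prod_{j} C^{(\ell)}_{\pm,j}}{\lambda^{(\ell)}_\pm}.
\]
This uses that $\prod_j \hat m^{(\ell)}_{\pm,j}(x_j)=\hat m^{(\ell)}_\pm(\mathbf{x})/\lambda^{(\ell)}_\pm$. Next evaluate this identity at $\mathbf{x}=X$ and take expectations, which gives $Z^{(\ell)}_\pm=K\,\bar m^{(\ell)}_\pm$. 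Since $K$ does not depend on $\mathbf{x}$, the unknown constant is then eliminated:
\[
\hat m^{(\ell)}_\pm(\mathbf{x})=K^{-1}\prod_j\mathrm{PD}^{(\ell)}_{\pm,j}(x_j)=\frac{\bar m^{(\ell)}_\pm}{Z^{(\ell)}_\pm}\prod_j\mathrm{PD}^{(\ell)}_{\pm,j}(x_j).
\]
The consequence follows by applying this to both branches and taking the difference, $\hat m^{(\ell)}=\hat m^{(\ell)}_+-\hat m^{(\ell)}_-$.

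\textbf{Main obstacle.} The only delicate point is the division by $K$ and $Z^{(\ell)}_\pm$, i.e. showing they are nonzero and finite. Strict positivity of each $\hat m^{(\ell)}_{\pm,k}$ gives $c^{(\ell)}_{\pm,j}>0$, $\bar m^{(\ell)}_\pm>0$ and $Z^{(\ell)}_\pm>0$. This needs $\lambda^{(\ell)}_\pm>0$, so the degenerate case $\lambda^{(\ell)}_\pm=0$ must be treated separately: there the branch vanishes identically and the statement holds trivially under any convention for $0/0$. Finiteness of these expectations is automatic for the piecewise-constant positive grids of \cref{sec:fitting}, since they take finitely many values.
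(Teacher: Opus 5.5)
Your proposal is correct and follows the paper's proof step for step. You pull the deterministic factors $\lambda^{(\ell)}_\pm\hat m^{(\ell)}_{\pm,j}(x_j)$ out of the expectation to get the factorization, then combine the $p$ factorizations and fix the single leftover constant $\lambda^{(\ell)}_\pm/\prod_j C^{(\ell)}_{\pm,j}$ by taking expectations of both sides, exactly as the paper does. Your check that $K$ and $Z^{(\ell)}_\pm$ are nonzero and finite is extra care the paper omits; note that only $\lambda^{(\ell)}_\pm\neq 0$ is actually needed there, not $\lambda^{(\ell)}_\pm>0$, which matters because the unconstrained least-squares refit of the stage scalars does not force them to be positive.
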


The proof is in \cref{app:pd_identity_proof}. TSL's 1D partial dependence plots therefore recover the fitted factor shapes up to the per-stage scalars $C^{(\ell)}_{\pm,j}$, without a post-hoc surrogate.

An analogous reconstruction theorem recovers the backbone/tilt parametrization of \cref{eq:m_as_backbone_tilt} directly from the 1D partial dependence curves: $b_j^{(\ell)}(x_j) = (C_{+,j}^{(\ell)} C_{-,j}^{(\ell)})^{-1/2}\sqrt{\mathrm{PD}_{+,j}^{(\ell)}(x_j)\,\mathrm{PD}_{-,j}^{(\ell)}(x_j)}$ and $d_j^{(\ell)}(x_j) = \tfrac{1}{2}\log(\mathrm{PD}_{+,j}^{(\ell)}(x_j)/\mathrm{PD}_{-,j}^{(\ell)}(x_j)) + \gamma_j^{(\ell)}$ with $\gamma_j^{(\ell)} = \tfrac{1}{2}\log(C_{-,j}^{(\ell)}/C_{+,j}^{(\ell)})$, in the normalized gauge fixed by \cref{eq:m_as_backbone_tilt}. The proof and statement is in \cref{app:backbone_tilt_pd_proof}. The backbone is therefore a \emph{magnitude} summary that cannot cancel even when the signed stage partial dependence function $\mathrm{PD}_{j}^{(\ell)} = \mathrm{PD}_{+,j}^{(\ell)}-\mathrm{PD}_{-,j}^{(\ell)}$ is near zero, while the tilt captures the signed imbalance and provides directionality. The reconstruction shows that only the $(+)/(-)$ partial dependence functions for each feature need to be visualized in order to faithfully explain the model. 

\section{Theoretical Analysis}

We analyze TSL via the Orthogonal Greedy Approximation (OGA) framework~\cite{barron2008approximation}. In doing so, we obtain an $O(1/\sqrt{r})$ approximation-rate for functions in the Sobolev space of dominant mixed smoothness $\mathcal{W}_{\text{mix}}^{(1,1)}$. The class consists of functions $f : \mathcal{X} \to \mathbb{R}$ such that every mixed partial derivative $D_{\boldsymbol{\alpha}} f$ with $\boldsymbol{\alpha} \in \{0,1\}^p$ (i.e., $\|\boldsymbol{\alpha}\|_\infty \leq 1$) is $L^1$-integrable: $D_{\boldsymbol{\alpha}} f \in L^1(\mathcal{X})$. For functions anchored at zero, this reduces to requiring only the highest-order mixed derivative $D_{(1,\ldots,1)} f$ to be in $L^1$ \citep[Sec.~2]{bungartz2004sparse}.

\begin{proposition}\label{prop:universal}
  Let $\mathcal{X} = [0,1]^p$, assume the data distribution $P_X$ is absolutely continuous with respect to Lebesgue measure on $\mathcal{X}$, and consider the positive dictionary $\mathcal{D}_p^+$. Let $f_r$ denote the OGA approximation to $f$ after $r$ greedy steps over $\mathcal{D}_p^+$ (\cref{app:universal}). For any $f \in \mathcal{W}_{\text{mix}}^{(1,1)}(\mathcal{X})$ that is anchored at zero (i.e., $f(\mathbf{x}) = 0$ whenever $x_j = 0$ for some $j$),
  \begin{equation}
    \|f - f_r\|_{L^2(P_X)} \leq \frac{2 \|D_{(1,\ldots,1)}f\|_{L^1(\mathcal{X})}}{\sqrt{r}}.
  \end{equation}
\end{proposition}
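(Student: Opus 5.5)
We will represent $f$ as an integral of rank-1 indicator products, bound the $\ell_1$-type variation norm of $f$ over the positive dictionary, and then apply the OGA rate of \citet{barron2008approximation}.

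\textbf{Step 1: integral representation.} Since $f$ vanishes whenever some $x_j=0$ and $D_{(1,\ldots,1)}f\in L^1$, iterated application of the fundamental theorem of calculus in each coordinate gives
\[
f(\mathbf{x})=\int_{[0,1]^p} D_{(1,\ldots,1)}f(\mathbf{t})\prod_{j=1}^p \mathbbm{1}\{t_j\le x_j\}\,d\mathbf{t}.
\]
This needs the anchoring: every boundary term produced when integrating out one coordinate carries a factor $f|_{x_j=0}=0$. For a merely $L^1$ mixed derivative we would first argue for smooth $f$ and then pass to the limit by density in $\mathcal{W}^{(1,1)}_{\text{mix}}$.

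\textbf{Step 2: positive dictionary.} Each element $g_{\mathbf{t}}(\mathbf{x})=\prod_j\mathbbm{1}_{[t_j,1]}(x_j)$ is a separable product of univariate step functions. These are nonnegative, and we take $\mathcal{D}_p^+$ (\cref{app:universal}) to contain them, or limits of strictly positive piecewise-constant factors. Their $L^2(P_X)$ norm is at most $1$.

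Split $D_{(1,\ldots,1)}f=\mu_+-\mu_-$ into positive and negative parts. Then $f$ lies in the closed convex hull of
\[
\{\pm M g : g\in\mathcal{D}_p^+\},\qquad M=\|D_{(1,\ldots,1)}f\|_{L^1}.
\]
Equivalently, $f$ belongs to the variation space $\mathcal{L}_1(\mathcal{D}_p^+)$ with norm $\le M$. The sign is carried by the coefficient, which matches the TSL stage structure: the difference of two positive products with scalars $\lambda_\pm$.

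To handle the integral rather than a finite sum, approximate it by Riemann-type finite convex combinations, converging in $L^2(P_X)$ by dominated convergence. Absolute continuity of $P_X$ matters here, since the discontinuity sets of the indicators are null sets.

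\textbf{Step 3: OGA rate.} For a dictionary with $\sup_g\|g\|\le 1$ and target $f$ in $\mathcal{L}_1$ with norm $M$, the orthogonal greedy algorithm gives
\[
\|f-f_r\|\le \frac{M}{\sqrt{r+1}}
\]
(DeVore--Temlyakov; Barron et al.). Accounting for the splitting into $\pm$ atoms, or for using the weaker constant $2$ to absorb normalization, yields the claimed bound $2M/\sqrt r$.

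\textbf{Main obstacle.} The delicate point is matching the abstract dictionary to the positivity constraint. The indicators are only nonnegative, so they must be obtained as limits of strictly positive components, and the closure argument must keep the atom norms bounded by $1$ in $L^2(P_X)$. Verifying that the OGA bound survives passing to this closure, with the factor $2$ covering any slack, is where most of the care is needed.
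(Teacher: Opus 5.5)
Your argument is correct. It shares the paper's skeleton but closes with a different theorem. The common part is the anchored representation $f=\int c(t)\,h_t\,dt$, with $c=D_{(1,\ldots,1)}f$ and $h_t(x)=\prod_j\mathbbm{1}(t_j\le x_j)$. This is followed by a Riemann discretisation $s_m=\sum_k\eta_{m,k}h_{t_{m,k}}$, with $\eta_{m,k}=\int_{R_{m,k}}c$, which converges in $L^2(P_X)$ by dominated convergence and has variation at most $M=\|c\|_{L^1}$.

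The paper never asserts $f\in\mathcal{V}_1(\mathcal{D}_p^+)$, since its $\mathcal{V}_1$ is defined by countable absolutely summable expansions with no closure. Instead it feeds the certificates $s_m$ into Theorem 2.3 of Barron et al., $\|f-f_r\|^2\le\|f-s_m\|^2+4\|s_m\|_{\mathcal{V}_1}^2/r$, and lets $m\to\infty$. The constant $2$ in the statement is exactly the $\sqrt4$ of that inequality.

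You instead put $f$ in the closed symmetric convex hull of $M\mathcal{D}_p^+$ and apply the OGA rate $M/\sqrt{r+1}$, which is sharper than what is claimed. That is valid, but your ``equivalently, $f\in\mathcal{L}_1$ with norm $\le M$'' holds only for the closure-based variation class (as in DeVore--Temlyakov), not for the paper's $\mathcal{V}_1$. You should either quote the rate theorem in that closure form, or note that its proof survives the limit. It does, because the only place membership is used, $\langle\rho_k,f\rangle\le M\sup_{g}|\langle\rho_k,g\rangle|$, is stable under $L^2$-limits of $f$. So the paper's route avoids the closure issue at the price of the factor $2$, while yours gets the better constant.

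Some smaller points:
\begin{itemize}
\item The remark about ``accounting for the splitting into $\pm$ atoms'' is unnecessary. Signs are already carried by the real coefficients of the variation norm, and $M/\sqrt{r+1}\le 2M/\sqrt r$ holds outright.
\item Your stated main obstacle disappears under the paper's definition. $\mathcal{D}_p^+$ consists of normalized \emph{nonnegative} piecewise-constant products, so $g_t=h_t/\|h_t\|_{L^2(P_X)}$ is itself an atom whenever $\|h_t\|>0$; the remaining $t$ contribute nothing in $L^2(P_X)$.
\item Write $h_t=\|h_t\|\,g_t$ and absorb $\|h_t\|\le1$ into the coefficient, rather than placing unnormalized indicators in the dictionary.
\item The paper uses lower-left representatives $t_{m,k}$ so that $h_{t_{m,k}}\to h_t$ pointwise everywhere. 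With arbitrary representatives you need, as you say, that the hyperplanes $\{x_j=t_j\}$ are $P_X$-null.
\end{itemize}
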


\citet{barron2008approximation} show that for any dictionary $\mathcal{D}$ in a Hilbert space, the $r$-term OGA approximation $f_r$ of a target $f$ in the \emph{variation class} $\mathcal{V}_1(\mathcal{D}) = \{\sum_k \lambda_k g_k : g_k \in \mathcal{D},\ \sum_k |\lambda_k| < \infty\}$ achieves an approximation rate of $\|f - f_r\|_{L^2(P_X)} \le 2\|f\|_{\mathcal{V}_1(\mathcal{D})} / \sqrt{r}$. We apply this result with $\mathcal{D} = \mathcal{D}_p^+$, which is the dictionary of normalized non-negative rank-1 products that TSL fits. We then show that $\mathcal{W}_{\text{mix}}^{(1,1)}(\mathcal{X}) \subset \mathcal{V}_1(\mathcal{D}_p^+)$, with the variation norm controlled by the Lebesgue $L^1$-norm of the highest-order mixed derivative, so Barron's rate transfers and yields the bound. The full proof is in \cref{app:universal}. Note that the rate achieved in \cref{prop:universal} does not depend on the dimension, but the target class tightens with $p$: with increasing dimension the requirement of mixed differentiability up to order $p$ becomes more restrictive.

\begin{table*}[htbp]
  \centering
  \tiny
  \caption{Selected benchmark results from the OpenML CTR 23 suite (10 datasets). RMSE on a single 80/20 split; lower is better. (***)/(**)/(*) = best/2nd/3rd within displayed Interpretable or Black-box group, bold = group winner. \emph{These are point estimates without confidence intervals, so the rank marks for near-tied entries are only indicative and may reorder on a different split.} SepALS rank $r$ follows \citet{beylkin2009multivariate} (\cref{app:sepals}); for TSL, $R$ stages correspond to total separation rank $\le 2R$. TSL (1-product) is an ablation with one unconstrained separable component per stage (no positivity constraint). Datasets are grouped by where TSL ranks against the interpretable baselines (Best/Near-Best, Competitive, Challenging). Full benchmark results are available in \cref{app:full_results}.}
  \label{tab:benchmark_results}
  \resizebox{\textwidth}{!}{%
    \begin{tabular}{lccccccccccc}
      \toprule
                               & \multicolumn{8}{c}{\textbf{Interpretable}} & \multicolumn{3}{c}{\textbf{Black-box}}                                                                                                                                             \\
      \cmidrule(lr){2-9} \cmidrule(lr){10-12}
      Dataset                  & EBM   & LGBM  & XGB   & SepALS ($r \le 2$) & SepALS ($r \le 10$) & TSL (1-product) & TSL ($R \le 2$) & TSL ($R \le 10$) & LGBM  & RF    & XGB   \\
      \midrule
      \multicolumn{12}{l}{\textit{Best/Near-Best}}                                                                                                                                                                                                                \\
      brazilian\_houses        & 3327.29 & 6567.66 & 2528.95 (*) & 3582.69             & 3996.05                      & 3194.80                 & 2473.98 (**)             & \textbf{2398.68} (***)    & 3200.36 (**) & \textbf{2302.72} (***) & 4289.10 (*) \\
      auction\_verification    & 1738.17 & 2155.10 & 1972.82 & 997.08 (*)               & 682.20 (**)                  & 1336.51                 & 1135.80                  & \textbf{624.36} (***)     & 409.60 (**) & 1223.57 (*) & \textbf{369.34} (***) \\
      cpu\_activity            & 2.3546 (**) & 2.4960 & 2.5281 & 2.5303                & 2.8475                       & 2.3826 (*)              & 2.5686                   & \textbf{2.3076} (***)     & 2.2102 (**) & 2.4391 (*) & \textbf{2.1945} (***) \\
      miami\_housing           & 91777.25 (**) & 100598.20 & 101205.68 & 95686.67        & 99426.86                     & 94382.52 (*)            & 94466.71                 & \textbf{89692.96} (***)   & 83011.80 (**) & 89215.72 (*) & \textbf{82325.09} (***) \\
      \midrule
      \multicolumn{12}{l}{\textit{Competitive}}                                                                                                                                                                                                                   \\
      socmob                   & 20.21 & 21.71 & 19.48 & \textbf{7.73} (***)         & 22.88                        & 10.58                   & 9.87 (*)                 & 9.48 (**)                 & 20.45 (**) & 20.75 (*) & \textbf{17.65} (***) \\
      california\_housing      & \textbf{48866.28} (***) & 54495.49 (*) & 55235.15 & 195728.08         & 62162.22                     & 55376.04                & 54557.89                 & 49376.09 (**)             & 45123.03 (**) & 49047.19 (*) & \textbf{44971.31} (***) \\
      naval\_propulsion\_plant & 0.0015 & 0.0032 & 0.0075 & 0.0004 (**)               & \textbf{0.0000} (***)        & 0.0027                  & 0.0013                   & 0.0006 (*)                & \textbf{0.0009} (***) & 0.0010 (**) & 0.0033 (*) \\
      \midrule
      \multicolumn{12}{l}{\textit{Challenging}}                                                                                                                                                                                                                   \\
      QSAR\_fish\_toxicity     & \textbf{0.9466} (***) & 0.9970 & 1.0199 & 0.9960 (*)        & 0.9704 (**)                  & 1.0835                  & 1.1662                   & 1.1143                    & 0.9913 (**) & \textbf{0.9795} (***) & 1.0047 (*) \\
      red\_wine                & \textbf{0.5991} (***) & 0.6093 (*) & 0.6068 (**) & 0.6135       & 0.6107                       & 0.6177                  & 0.6210                   & 0.6438                    & 0.5690 (*) & \textbf{0.5410} (***) & 0.5585 (**) \\
      abalone                  & 2.2007 (*) & 2.2301 & 2.2221 & \textbf{2.1513} (***)       & 2.1561 (**)                  & 2.2598                  & 2.2415                   & 2.2392                    & 2.2400 (*) & \textbf{2.1836} (***) & 2.2129 (**) \\
      \bottomrule
    \end{tabular}%
  }
\end{table*}
\begin{figure*}[ht]
  \centering
  \begin{tabular}{@{}p{0.44\textwidth}|p{0.44\textwidth}@{}}
    \begin{minipage}[t]{\linewidth}
      \centering
      \textbf{\small Stage 1: Backbone and 2D partial dependence function}\\[0.4em]
      \includegraphics[width=0.52\linewidth]{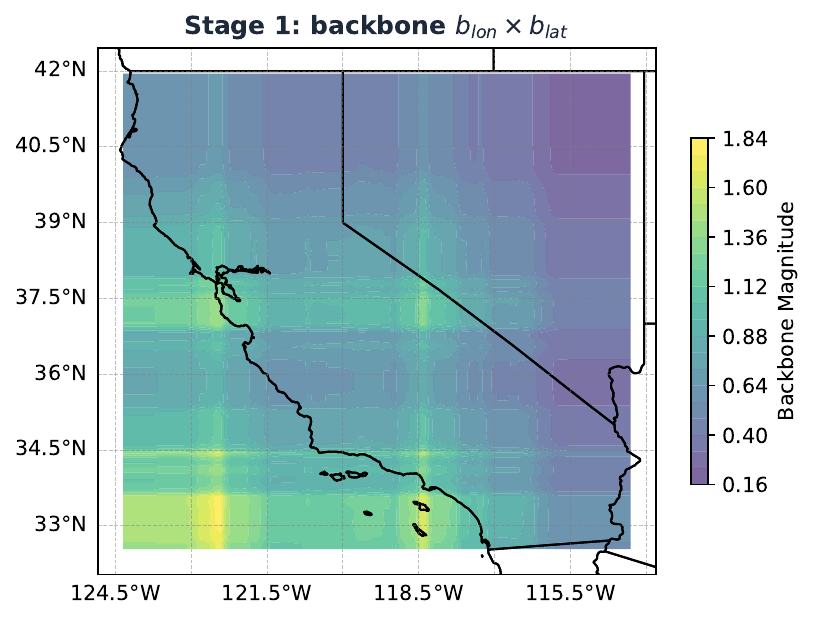}%
      \includegraphics[width=0.52\linewidth]{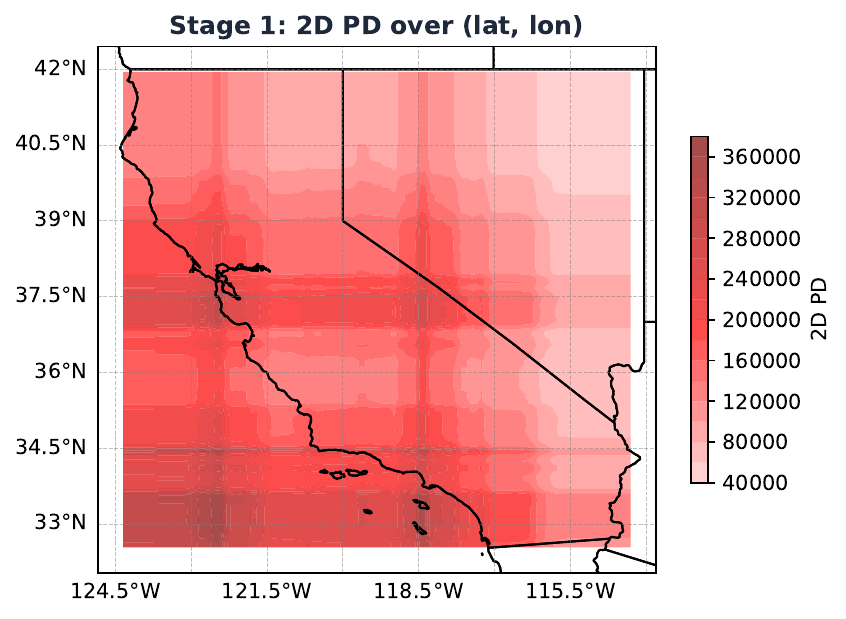}
    \end{minipage}
     &
    \begin{minipage}[t]{\linewidth}
      \centering
      \textbf{\small Stage 2: Backbone and 2D partial dependence function}\\[0.4em]
      \includegraphics[width=0.52\linewidth]{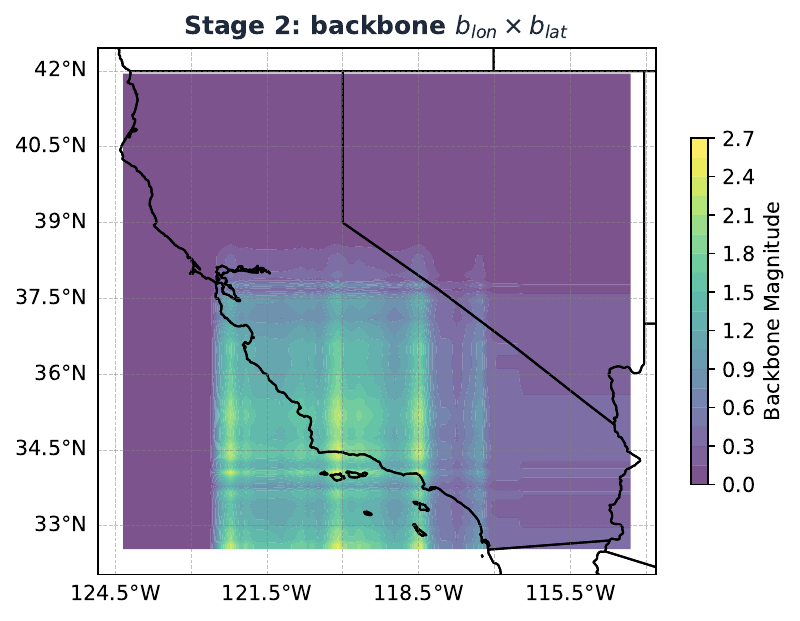}%
      \includegraphics[width=0.52\linewidth]{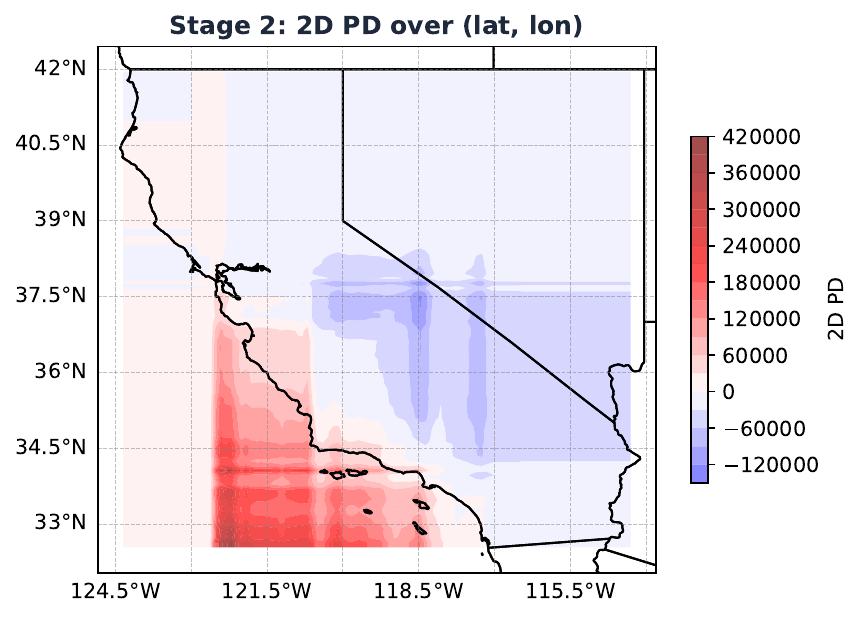}
    \end{minipage}
  \end{tabular}
  \caption{Spatial backbone evolution on California housing. Each column is a TSL stage; within each, the left sub-panel shows the learned backbone product $b_j^{(\ell)}(x_j)$ for $j\in\{\text{lat},\text{lon}\}$ (positive, unitless gating) and the right sub-panel shows the stage's 2D partial dependence surface for $(\text{lat},\text{lon})$ in dollars. Axes are longitude ($x$) and latitude ($y$) in degrees. Stage~2's backbone is high along separable longitude $\times$ latitude bands that include the inland over-prediction region; its 2D PD supplies a negative correction over the desert that offsets Stage~1's over-prediction there.}
  \label{fig:california_spatial}
\end{figure*}

\section{Experiments}

We evaluate TSL on the OpenML CTR 23 regression suite \cite{Fischer2023OpenMLCTR23} and compare against state-of-the-art tree-ensemble baselines: XGBoost \cite{chen2016xgboost}, LightGBM \cite{ke2017lightgbm}, and Random Forest \cite{breiman2001random,scikit-learn}, alongside Explainable Boosting Machines (EBM) \cite{lou2013accurate,nori2019interpretml}. We also include the separable model of \citet{beylkin2009multivariate} as \emph{SepALS} in two regimes, SepALS ($r \le 2$) and SepALS ($r \le 10$); its implementation, basis, and ALS sweep are described in detail in \cref{app:sepals}. 
TSL was capped at $R \le 2$ and $R \le 10$ stages; we additionally include a TSL (1-product) ablation with one unconstrained separable product per stage and $R \le 4$ (total separation rank $r=R\le 4$) to isolate the contribution of the positivity and ordered-difference structure. The tree baselines were capped at depth 2 in the interpretable condition and depth 10 (XGB, LGBM) or 20 (RF) in the black-box condition; EBM was left unchanged as a GA²M-style additive model already constrained to low-order interactions.

Due to computational constraints, we restricted attention to CTR datasets with sample--feature product $n \times p \le 480{,}000$, yielding 27 regression benchmarks (all datasets in \cref{app:tab:full_benchmark} up to and including \texttt{kings\_county}, excluding \texttt{Moneyball} as it contained NAs). Following a performance-based selection strategy, we report results for 10 representative datasets spanning the performance spectrum: 4 cases where TSL performs strongly, 3 competitive cases, and 3 challenging cases that demonstrate limitations. Full results on the completed datasets are provided in \cref{app:full_results}.

\textbf{Protocol.} We randomly split each dataset 80\%/20\% train/test. Hyperparameters were tuned on the training data via Optuna's Tree-structured Parzen Estimator (TPE) sampler \citep{optuna_2019,bergstra2011algorithms}, 200 trials per model, each scored by 10-fold cross-validated MSE; the best configuration was then refit on the full training set and evaluated once on the held-out test set. Full search spaces are in \cref{app:hyperparams}.

% \textbf{Complexity caps.} In the interpretable group, TSL was capped at $R \le 2$ and $R \le 10$ stages, SepALS at $r \le 2$ and $r \le 10$ (recall $r\le 2R$ in rank-budget terms; see \cref{sec:method}), and tree baselines (LGBM, XGBoost) at depth 2; EBM was left unchanged as a GA²M-style additive model already constrained to low-order interactions. We additionally include a \emph{TSL (1-product)} ablation: one unconstrained separable product per stage with $R\le 4$ (total separation rank $r=R\le 4$, matching SepALS at $r\le 4$); this isolates the joint contribution of positivity and the ordered difference. The black-box condition relaxes the tree caps to depth 10 (XGB, LGBM) and 20 (RF), reflecting that PD-based explanations are tractable only for shallow trees \cite{liu2025fastpd}; TSL at higher $R$ remains interpretable because its separable structure is itself the explanation.

\textbf{Compute.} All experiments ran under a fixed global seed on a shared compute node with dual-socket AMD EPYC 7501 CPUs (2$\times$32 cores @ 2.0\,GHz) and 256\,GB RAM (see \cref{app:reproducibility} for code/models).

\Cref{tab:benchmark_results} shows that TSL achieves competitive predictive performance overall. Across the 27 evaluated datasets, TSL ($R \le 2$ or $R \le 10$) ranks top-3 of the interpretable group on 17 and is the best interpretable model on 5; SepALS (combined) is best on 13 and EBM on 5 (full per-dataset results in \cref{app:tab:full_benchmark}). 
TSL excels when the signal has low-rank separable structure. On \texttt{socmob}, TSL outperforms the tree-ensemble and EBM baselines by a wide margin (RMSE $9.48$ at $R\le 10$ vs.\ $17.65$--$21.71$ across XGB, LGBM, RF, and EBM); SepALS ($r \le 2$) is also strong here (RMSE $7.73$), consistent with the known log-additive structure of \texttt{socmob} \cite{biblarz1993}, which matches a separable/multiplicative model well on the original scale.

At matched total separation rank ($\le 4$: the TSL (1-product) runs at $R \le 4$ with one product per stage, against TSL ($R \le 2$) at two ordered-difference products per stage), TSL (1-product) performs noticeably worse on several datasets (e.g., \texttt{socmob}: $10.58$ vs.\ $9.87$; \texttt{naval\_propulsion\_plant}: $0.0027$ vs.\ $0.0013$; \texttt{auction\_verification}: $1336.51$ vs.\ $1135.80$), confirming that at a fixed rank budget the positivity constraint together with the ordered-difference structure $\hat{m}_{+}^{(\ell)} - \hat{m}_{-}^{(\ell)}$ is what gives TSL its representational flexibility while preserving stable, interpretable components.

The comparison with SepALS is mixed. SepALS attains lower RMSE on \texttt{forest\_fires}, \texttt{naval\_propulsion\_plant}, \texttt{kin8nm}, \texttt{energy\_efficiency}, and \texttt{grid\_stability}; TSL wins \texttt{california\_housing} at matched rank budgets ($49{,}376$ vs.\ $62{,}162$, a $\approx 21\%$ reduction), \texttt{auction\_verification}, \texttt{brazilian\_houses}, \texttt{cpu\_activity}, \texttt{miami\_housing}, and \texttt{kings\_county}. A possible explanation for the performance difference is target smoothness: SepALS excels when the data is smooth but may oversmooth sharper features, as \cref{fig:california_pd_comparison} shows, whereas TSL's adaptive grid splitting lets it capture more abrupt variations.

\subsection{California Housing: Interpretability Case Study}
We use the California Housing benchmark, with geography (latitude/longitude) and socioeconomic covariates. The dataset features a dominant \emph{spatial interaction} (coastal proximity), making it an ideal interpretability test. We take a representative Optuna-tuned TSL model with $R = 2$ stages from our interpretable setting (the $R \le 2$ column of \cref{tab:benchmark_results}).

\begin{figure}[h]
  \centering
  \begin{subfigure}[b]{0.22\textwidth}
    \centering
    \includegraphics[width=\textwidth]{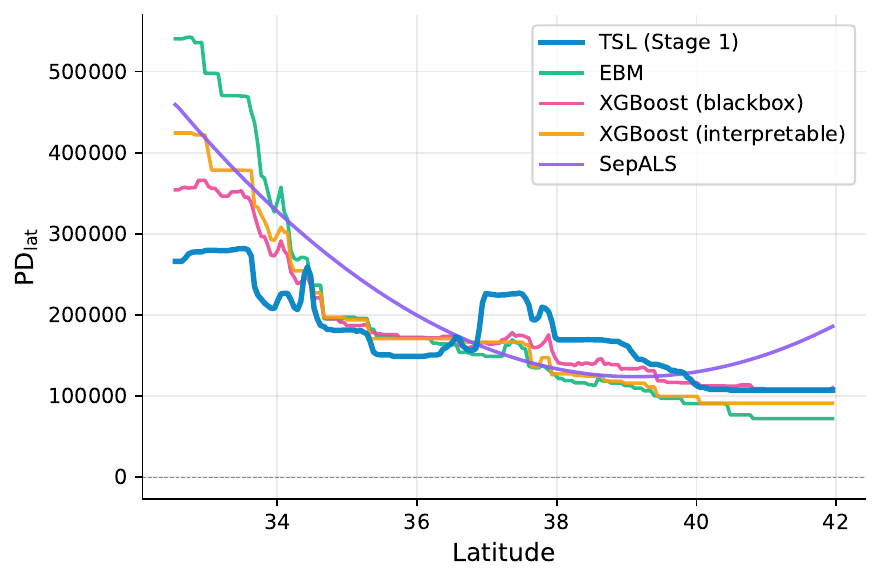}
    \caption{Latitude}
  \end{subfigure}
  \begin{subfigure}[b]{0.22\textwidth}
    \centering
    \includegraphics[width=\textwidth]{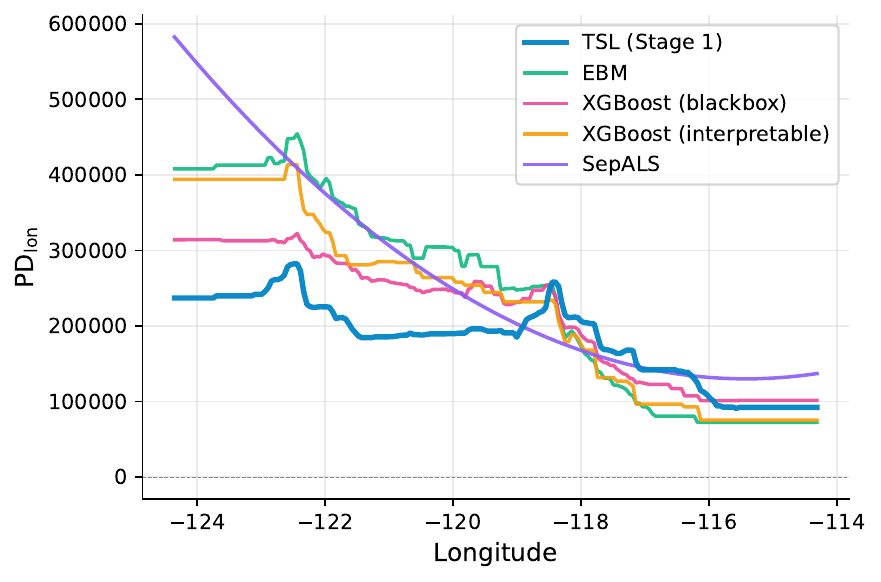}
    \caption{Longitude}
  \end{subfigure}
  \caption{1D partial dependence on latitude (left) and longitude (right) across TSL, SepALS, EBM, and XGBoost. Each panel: $x$-axis = coordinate in degrees, $y$-axis = predicted value in dollars (the response); one line per model. SepALS's smooth basis washes out localized location effects; TSL retains spikes near Los Angeles and the Bay Area.}
  \label{fig:california_pd_comparison}
\end{figure}

\Cref{fig:california_spatial} visualizes stagewise backbone and 2D partial dependence for (longitude, latitude). The multiplicative backbone acts as \emph{spatial gating}: a near-zero backbone switches the stage off. Stage~1 is active along the coast, capturing the coastal premium. Its separable form puts longitude peaks at $\approx -118.5, -122.5$ (LA, SF) and a latitude peak at $\approx 37.5$ (Bay Area), so their product over-predicts at points that fall in \emph{both} peak intervals but lie in low-density inland regions (e.g., desert). Stage~2's backbone gates onto exactly these regions and supplies a negative correction that cancels the artifact.
\Cref{fig:california_pd_comparison} compares 1D partial dependence for latitude and longitude across TSL, SepALS, EBM, and XGBoost. TSL captures localized, non-monotone effects (spikes near LA, San Francisco, and a Bay Area latitude bump), whereas EBM and XGBoost yield largely monotone trends. EBM/XGBoost express the location signal through joint (latitude, longitude) structure, so averaging over one coordinate smooths the other; TSL's separable factors instead yield model-native 1D summaries retaining localized structure. Additional figures are in \cref{app:california_figures}; an analogous interpretability case study on Bike Sharing (1D and 2D PD, hour~$\times$~workingday interaction) appears in \cref{app:bike_sharing_figures}.

% \subsection{Bike Sharing: temporal interaction case study}

% We demonstrate TSL's ability to represent and visualize interactions using the Bike Sharing dataset, which contains clear temporal structure with interpretable 2D interactions (hour $\times$ workingday). The 2D partial dependence surface for (hour, workingday) reveals weekday commute structure (morning/evening peaks) and weekend patterns as distinct regimes, demonstrating that TSL can represent interactions without relying on additive redistribution of effects, and that 2D PD computation is exact and tractable for separable models. Interpretability figures (scaled first-order PD, 2D PD surfaces, local explanations) are provided in Appendix~\ref{app:bike_sharing_figures}.

\subsection{Synthetic Case Study: Masked Interactions}
\label{sec:synthetic_masked_interaction}

We illustrate the interaction-masking failure mode of \cref{sec:limitations} with a synthetic regression problem with independent features $X_1 \sim \mathcal{N}(0,1^2)$, $X_2 \sim \mathcal{N}(-1,1.5^2)$, $X_3 \sim \mathcal{N}(-1,0.8^2)$ and noise $\varepsilon \sim \mathcal{N}(0,0.25^2)$. The response is $Y = x_1^2 x_2 (1+x_3) + \varepsilon$.
For hyperparameter search, we run Optuna for $20{,}000$ trials, each training on $10{,}000$ freshly sampled points and evaluating on an independent $50{,}000$-point test sample; after tuning, we refit each model on $100{,}000$ fresh samples and then produce the PD/ICE visualizations (for XGBoost we fix \texttt{max\_depth} to $3$ and tune the number of trees in $[10, 100]$).
Since $\mathbb{E}[1+X_3]=0$, the population $\mathrm{PD}_1(x_1) = x_1^2\,\mathbb{E}[X_2]\,\mathbb{E}[1+X_3]$ is identically zero: the 1D marginalization integrates the third-order interaction away, so a strong effect through $x_1$ leaves no signature on the 1D PD. \Cref{fig:synthetic_masked_interaction} (protocol in \cref{app:synthetic_masked_interaction}) shows that \emph{all fitted models} produce near-zero 1D PD plots, consistent with this population identity, whereas TSL's per-product curves $\mathrm{PD}_{+,j}$ and $\mathrm{PD}_{-,j}$ expose stable per-stage magnitude: the backbone $b_j^{(\ell)}(x_j)$ recovers the quadratic effect of $x_1$ with the tilt remaining small.

\begin{figure}[t]
  \centering
  \begin{subfigure}[b]{0.47\columnwidth}
    \centering
    \includegraphics[width=\textwidth]{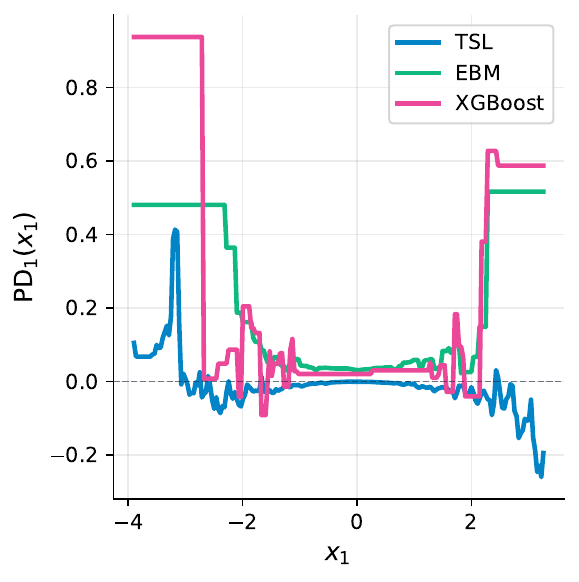}
    \caption{1D PD for $x_1$}
  \end{subfigure}
  \hfill
  \begin{subfigure}[b]{0.47\columnwidth}
    \centering
    \includegraphics[width=\textwidth]{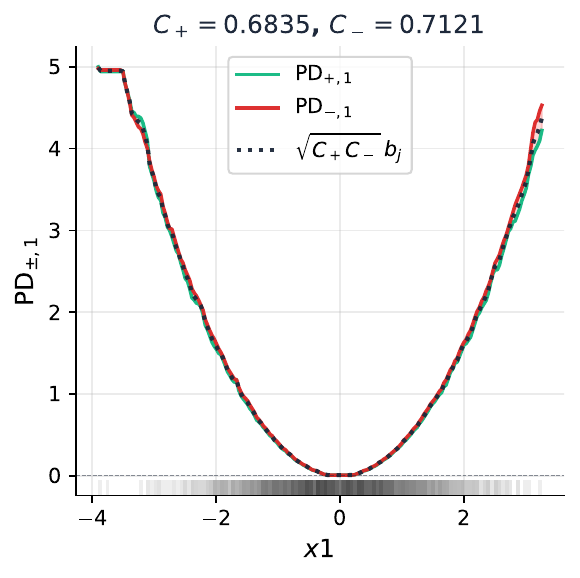}
    \caption{Stage 1 PD: $x_1$}
  \end{subfigure}
  \vspace{0.2em}

  \begin{subfigure}[b]{0.47\columnwidth}
    \centering
    \includegraphics[width=\textwidth]{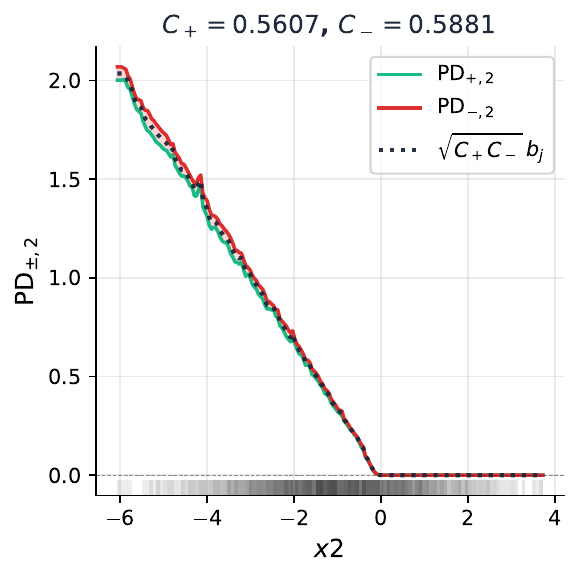}
    \caption{Stage 1 PD: $x_2$}
  \end{subfigure}
  \hfill
  \begin{subfigure}[b]{0.47\columnwidth}
    \centering
    \includegraphics[width=\textwidth]{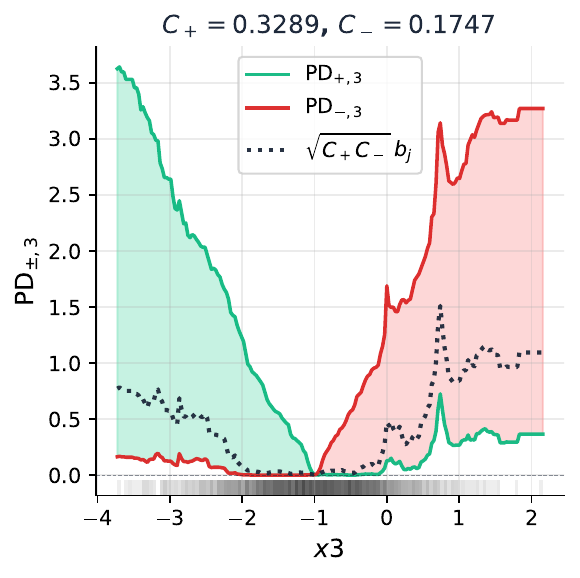}
    \caption{Stage 1 PD: $x_3$}
  \end{subfigure}
  \caption{Masked interaction. (a) All models yield near-zero signed partial dependence for $x_1$ despite a strong quadratic effect through higher-order interactions, because 1D marginalization integrates the interaction signal to zero. (b--d) Stage 1 scaled first-order partial dependence (TSL only) for $\hat{m}_{+}^{(\ell)}$ and $\hat{m}_{-}^{(\ell)}$: the backbone retains magnitude even when the signed partial dependence (shaded: green +, red $-$) is close to zero. $C_{\pm} := C^{(1)}_{\pm,j}$ for the displayed feature $j$, i.e., the per-stage, per-feature scaling constants from \cref{prop:pd_identity}. Small tilt on $x_1$, $x_2$ means they mainly amplify the signed signal of $x_3$.}
  \label{fig:synthetic_masked_interaction}
\end{figure}

\section{Discussion}

\textit{Limitations.} Our approximation-rate guarantee covers a mixed-smoothness class but does not address statistical learning rates or consistency. TSL's principal limitation is the non-identifiability of separable representations: bagged grids can differ across seeds and aggregation can exhibit high variance. The separable structure suits interaction-rich regimes; on largely additive problems EBM remains the stronger interpretable baseline.

\textit{Future work.} Our $O(1/\sqrt{r})$ bound is an approximation rate for the mixed-smoothness function class; extending it to finite-sample learning rates and consistency for the fitted stagewise estimator is open. Aggregation currently averages only the bagged grids that survive a similarity filter and discards the rest; aligning \emph{all} bags on the positive rank-1 manifold (e.g., Riemannian optimization \cite{liu2020simple}) would cut across-seed variance and use every fit. Finally, the backbone--tilt form \eqref{eq:backbone_tilt_stage} is a general representation of a separable model: the factors $b^{(\ell)}$ and $d^{(\ell)}$ need not be fit by the CART-style piecewise-constant estimator used here, but could instead be parametrized by neural networks or other flexible function approximators while retaining the separable interpretable structure.

\section{Conclusion}

We introduced Tensor Separation Learning (TSL), a glass-box regression model that fits a sum of differences of positive rank-1 separable products via a stagewise greedy procedure with orthogonal refitting. By baking separability into the predictor, TSL mitigates the interaction-masking and contamination problems that plague additive explanations in interaction-rich settings while remaining competitive with strong tree-ensemble baselines.

\section*{Impact Statement}
TSL is a methodological contribution to interpretable machine learning. By producing a glass-box regression model whose partial dependence plots recover the fitted factor shapes up to per-stage scalars (rather than post-hoc summaries), it can help practitioners inspect interaction structure, identify which features act as multipliers versus directional drivers, and detect regions where the model is active versus balanced. That said, interpretability does not by itself guarantee correctness, fairness, or safety. Like any regression model, TSL should be paired with domain expertise, robustness checks, and fairness analyses before informing consequential decisions.

\section*{Acknowledgments}
J. Liu would like to thank Otto G. Roepstorff for the helpful discussions regarding life and work. M. E. Hiabu has carried out this research in association with the project framework InterAct.

\bibliography{main}
\bibliographystyle{icml2026}

\newpage
\appendix
\onecolumn
\raggedbottom

% !TEX root = main.tex

\section{Grid Tensor Representation Details}
\label{app:grid_tensor}

For each stage $\ell \leq R$, the $(+)$ and $(-)$ products $\hat{m}_{\pm}^{(\ell)}$ of the learner $\hat{m}^{(\ell)}$ (in the stage model from the main paper) are each a product of univariate piecewise constant functions. This structure implies that each is piecewise constant on a grid $\mathcal{G}^{(\ell)}$ covering the domain $\mathcal{X}$. The two positive products share the same form but each carries its own grid structure; the description below applies to either of them independently.

Formally, $\mathcal{G}^{(\ell)}$ is the Cartesian product of interval partitions $\mathcal{I}_1^{(\ell)}, \ldots, \mathcal{I}_p^{(\ell)}$, where $\mathcal{I}_j^{(\ell)} = \{I_{j, 1}^{(\ell)}, \ldots, I_{j, L_j^{(\ell)}}^{(\ell)}\}$ and $L_j^{(\ell)} := \#\mathcal{I}_j^{(\ell)}$ represents the intervals on which the univariate factor $\hat{m}_{\pm,j}^{(\ell)}$ is constant. In particular:
\begin{equation}
  \hat{m}_{\pm,j}^{(\ell)} (x_j) = \sum_{I \in \mathcal{I}_j^{(\ell)}} \hat{v}_{\pm,j,I}^{(\ell)} \mathbbm{1}_I (x_j),
\end{equation}
where $\hat{v}_{\pm,j,I}^{(\ell)}$ is the value attained by $\hat{m}_{\pm,j}^{(\ell)}$ on interval $I$. We refer to the grid $\mathcal{G}^{(\ell)}$ together with the piecewise constant function on this grid as the grid tensor.

In particular, each positive grid tensor is \emph{exactly} a separable piecewise-constant function (a product of univariate piecewise-constant factors). Therefore, after normalization to unit $L^2(P_X)$ norm, each positive grid tensor is an element of the dictionary $\mathcal{D}_p^+$ defined in \cref{app:universal}.

The grid tensor representation allows us to refine the learner iteratively via operations similar to tree splitting in CART \cite{breiman1984classification}. Unlike CART where a single leaf rectangle is split, our grid splits partition all intervals along axis $j$ simultaneously, creating $\prod_{k \neq j} L_k^{(\ell)}$ new grid cells. This global splitting strategy enables efficient exploration of the function space while maintaining the separable structure.

\section{Forward Additive Approach Details}
\label{app:forward_additive}

We learn the estimator stagewise. At stage $\ell$, we define the \emph{outer residuals}
\begin{equation}
  R_i^{(\ell-1)} := y^{(i)} - \sum_{k=1}^{\ell-1} \bigl[\hat{m}_+^{(k)}(\mathbf{x}^{(i)}) - \hat{m}_-^{(k)}(\mathbf{x}^{(i)})\bigr],
\end{equation}
where $\hat{m}_\pm^{(k)}$ are the $(+)$ and $(-)$ products of stage $k$ as defined in main eq.~\eqref{eq:estimator} (each $\hat{m}_\pm^{(k)}(\mathbf{x}) = \lambda_{\pm}^{(k)}\prod_{j=1}^p \hat{m}_{\pm,j}^{(k)}(x_j)$ absorbs its stage scalar). We fit a new stage estimator $\hat{m}^{(\ell)}$ to these residuals using grid refinement. In practice, we fit $n_{\text{grids}}$ learners $\hat{m}^{(\ell, 1)}, \ldots, \hat{m}^{(\ell, n_{\text{grids}})}$ to the residuals via bagging, and then combine them to obtain the stage estimator $\hat{m}^{(\ell)}$.

Within each stage, we maintain \emph{within-stage residuals} $r_i := R_i - \hat{m}^{(\ell)}(\mathbf{x}^{(i)})$, where $R_i$ is the outer residual and $\hat{m}^{(\ell)}$ is the current (in-progress) stage prediction. At stage start, $\hat{m}^{(\ell)}$ is initialized from the outer residuals (via $\lambda_{+}^{(\ell)}$, $\lambda_{-}^{(\ell)}$ as described in \cref{app:algorithm}), so $r_i$ reflects the residual after the initial constant fit. We use $i \in I$ to mean that the sample $\mathbf{x}^{(i)}$ falls into interval $I$.

The estimator $\hat{m}^{(\ell)}$ is initialized with both $(+)$ and $(-)$ products as constant functions (all univariate factors set to 1, with $\lambda_{\pm}^{(\ell)}$ initialized from the residual signs as described in \cref{app:algorithm}), and then iteratively refined via operations similar to tree splitting in CART. After refinement, the stage coefficients $\lambda_{+}^{(\ell)}$ and $\lambda_{-}^{(\ell)}$ are refit by least-squares regression of the outer residuals onto the ordered difference $\hat{m}^{(\ell)} = \hat{m}_+^{(\ell)} - \hat{m}_-^{(\ell)}$, with $\hat{m}_\pm^{(\ell)}$ as in main eq.~\eqref{eq:estimator}.

\textbf{Notation convention for algorithms.} In the pseudocode below, the per-sample algorithm variables $\hat{m}_{+}^{(i)}$ and $\hat{m}_{-}^{(i)}$ denote the \emph{unscaled} per-sample products
\[
\hat{m}_{\pm}^{(i)} \;:=\; \prod_{j=1}^p \hat{m}_{\pm,j}^{(\ell)}(x_j^{(i)}),
\]
so that the full stage prediction at $\mathbf{x}^{(i)}$ is $\hat{m}_+^{(\ell)}(\mathbf{x}^{(i)})-\hat{m}_-^{(\ell)}(\mathbf{x}^{(i)}) = \lambda_{+}^{(\ell)}\hat{m}_{+}^{(i)}-\lambda_{-}^{(\ell)}\hat{m}_{-}^{(i)}$. The collection of univariate factors $\{\hat{m}_{\pm,j}^{(\ell)}\}_{j=1}^p$ together with the bin-value structure is passed in algorithm pseudocode under the name $\hat{m}_+^{(\ell)}, \hat{m}_-^{(\ell)}$ (the product components), and is used to evaluate the unscaled per-sample $\hat{m}_{\pm}^{(i)}$.

\subsection{Split Scoring Details} \label{app:split_scoring}

At each refinement iteration, we evaluate candidate splits via a closed-form $2 \times 2$ regularized least-squares system. For each candidate split on feature axis $j$ at threshold $s$, we partition the samples into left and right regions $L$ and $R$ based on whether $x_j^{(i)} < s$ or $x_j^{(i)} \geq s$.

\textbf{Local refinement loss.} For a region $S \in \{L, R\}$, we score the candidate by how much the regularized within-stage loss can be reduced if both products on $S$ are perturbed by additive increments $\hat{u}_\pm \hat{m}_{\pm}^{(i)}$ (an additive delta in unscaled-product space; cf.\ main eq.~\eqref{eq:split_objective}). Concretely, we minimise
\begin{equation}
  \mathcal{L}_S(\hat{u}_+, \hat{u}_-) \;:=\; \sum_{i \in S} w_i \bigl(r_i - (\hat{u}_+\,\hat{m}_{+}^{(i)} - \hat{u}_-\,\hat{m}_{-}^{(i)})\bigr)^2 \;+\; \alpha\bigl(\hat{u}_+^2 + \hat{u}_-^2\bigr),
  \label{eq:app_split_loss}
\end{equation}
where $r_i$ is the within-stage residual and $\hat{m}_{\pm}^{(i)} = \prod_{j=1}^p \hat{m}_{\pm,j}^{(\ell)}(x_j^{(i)})$ is the unscaled per-sample $(\pm)$ product. Relative to main eq.~\eqref{eq:split_objective}, we parametrise by the delta $\hat{u}_\pm = u_\pm - 1$, so the ridge penalty $\alpha(u_\pm - 1)^2$ in main becomes $\alpha \hat{u}_\pm^2$ here, and the baseline ``no-update'' point is $\hat{u}_\pm = 0$ (rather than $u_\pm = 1$). The minimiser $(\hat{u}_+^\star, \hat{u}_-^\star)$ has the closed form below; the post-update multiplicative factor applied to the bin value $\hat{v}_{\pm,j,I_S}$ is then $v_\pm^S = 1 + \hat{u}_\pm^\star$ (clamped to $[v_{\min}, v_{\max}]$).

For the left region $L$, we compute sufficient statistics:
\begin{align}
  S_{11}^L & = \sum_{i \in L} w_i \bigl(\hat{m}_{+}^{(i)}\bigr)^2,              \\
  S_{22}^L & = \sum_{i \in L} w_i \bigl(\hat{m}_{-}^{(i)}\bigr)^2,              \\
  S_{12}^L & = -\sum_{i \in L} w_i \hat{m}_{+}^{(i)} \hat{m}_{-}^{(i)}, \\
  t_1^L    & = \sum_{i \in L} w_i r_i \hat{m}_{+}^{(i)},            \\
  t_2^L    & = -\sum_{i \in L} w_i r_i \hat{m}_{-}^{(i)},
\end{align}
where $w_i$ are stabilizing weights, $\hat{m}_{\pm}^{(i)} := \prod_{k=1}^p \hat{m}_{\pm,k}^{(\ell)}(x_k^{(i)})$ are the current $(+)$ and $(-)$ product predictions at the data, and $r_i$ are the within-stage residuals.

We then solve the $2 \times 2$ regularized least-squares system:
\begin{equation}
  A^L = \begin{pmatrix} S_{11}^L + \alpha & S_{12}^L \\ S_{12}^L & S_{22}^L + \alpha \end{pmatrix}, \quad t^L = \begin{pmatrix} t_1^L \\ t_2^L \end{pmatrix},
\end{equation}
where $\alpha$ is a ridge regularization parameter. The solution is $\hat{u}^L = (A^L)^{-1} t^L$, and we compute the multiplicative updates:
\begin{equation}
  v_+^L = \text{clamp}(1 + \hat{u}_{+}^L, v_{\min}, v_{\max}), \quad v_-^L = \text{clamp}(1 + \hat{u}_{-}^L, v_{\min}, v_{\max}),
\end{equation}
where $\text{clamp}(x, a, b) = \max(a, \min(b, x))$ ensures positivity and stability. Crucially, we score each candidate at the \emph{clamped} update that is actually applied, not at the unconstrained optimum $\hat{u}^L$: writing $\tilde{u}_{\pm}^L = v_\pm^L - 1$ for the clamped deltas, the error reduction for the left region is
\begin{equation}
  \Delta_L = \mathcal{L}_L (0, 0) - \mathcal{L}_L (\tilde{u}_{+}^L, \tilde{u}_{-}^L),
\end{equation}
where $\mathcal{L}_L$ is the regularised loss \eqref{eq:app_split_loss} restricted to region $L$. The clamped and unconstrained scores coincide whenever $1 + \hat{u}_\pm^L$ already lies in $[v_{\min}, v_{\max}]$. We repeat the same procedure for the right region $R$ and compute the total error reduction $\Delta_{\text{split}} = \Delta_L + \Delta_R$. The split with maximum error reduction is selected.

\section{Complete Algorithm}
\label{app:algorithm}

We provide the complete grid refinement algorithm for fitting a single grid tensor at stage $\ell$. The flowchart in \cref{fig:tsl_flowchart} below summarizes the end-to-end fitting loop discussed in \cref{sec:fitting}: at each stage, outer residuals are fed into a bagged grid-tensor fit, candidate grids are gauge-aligned and similarity-filtered, the survivors are averaged in log-space, all stage scalars $\{\lambda_\pm^{(k)}\}_{k=1}^\ell$ up to the current stage are jointly refit by least squares against $y$, the outer residuals are updated, and the procedure loops until all $R$ stages are fit. The numbered nodes correspond directly to the steps in \cref{alg:fit_stage_with_bagging,alg:combine_bagged_two_tensor}.

\begin{figure}[H]
  \centering
  \begin{tikzpicture}[
      font=\small,
      every node/.append style={align=center},
      tsldata/.style={draw, rounded corners=3pt, fill=blue!10, inner sep=4pt,
        minimum height=8mm, text width=3.6cm, very thick, draw=blue!55!black},
      tslresid/.style={draw, rounded corners=2pt, fill=blue!4, inner sep=4pt,
        minimum height=8mm, text width=5.4cm, draw=blue!50!black},
      tslbaghdr/.style={draw, rounded corners=2pt, fill=orange!12, inner sep=4pt,
        minimum height=7mm, text width=5.4cm, draw=orange!70!black},
      tslbag/.style={draw, rounded corners=1pt, fill=orange!22, inner sep=2pt,
        minimum height=6mm, minimum width=1.05cm, font=\footnotesize,
        draw=orange!70!black},
      tslagg/.style={draw, rounded corners=2pt, fill=purple!10, inner sep=4pt,
        minimum height=7.5mm, text width=5.4cm, draw=purple!60!black},
      tslout/.style={draw, rounded corners=2pt, fill=teal!12, inner sep=4pt,
        minimum height=7.5mm, text width=5.4cm, draw=teal!60!black},
      tsldec/.style={draw, diamond, aspect=2.6, fill=yellow!22, inner sep=1pt,
        font=\footnotesize, draw=olive!70!black},
      tslfinal/.style={draw, rounded corners=3pt, fill=green!22, very thick,
        inner sep=5pt, minimum height=9mm, text width=4.8cm,
        draw=green!45!black},
      stagebox/.style={draw, dashed, rounded corners=5pt, inner sep=3.5mm,
        fill=blue!2, draw=black!55},
      steplbl/.style={circle, draw=black!70, fill=white, font=\scriptsize\bfseries,
        inner sep=0pt, minimum size=4.5mm},
      arr/.style={-{Latex[length=2.2mm]}, semithick},
      loop/.style={-{Latex[length=2.2mm]}, semithick, dashed, red!55!black},
    ]

    \node[tsldata] (data) {$\mathcal{D}_n = \{(y^{(i)},\,\mathbf{x}^{(i)})\}_{i=1}^n$};

    \node[tslresid, below=8mm of data] (resid)
      {Outer residuals\\[1pt]
       $R_i^{(\ell-1)} = y^{(i)} - \sum_{k<\ell}\hat m^{(k)}(\mathbf{x}^{(i)})$};

    \node[tslbaghdr, below=12mm of resid] (baghdr)
      {Bag $n_{\text{grids}}$ bootstrap samples; fit grid tensors in parallel\\
       (\cref{alg:fit_single_grid_tensor})};

    \node[tslbag, below=9mm of baghdr, xshift=-1.95cm] (b1) {$\mathcal{G}^{(\ell,1)}$};
    \node[tslbag, right=2.5mm of b1] (b2) {$\mathcal{G}^{(\ell,2)}$};
    \node[tslbag, right=2.5mm of b2] (bd) {$\cdots$};
    \node[tslbag, right=2.5mm of bd] (bB) {$\mathcal{G}^{(\ell,n_{\text{grids}})}$};

    \node[tslagg, below=27mm of baghdr] (canon)
      {Align to union grid; normalize backbone/tilt per axis};
    \node[tslagg, below=3mm of canon] (filt)
      {Pick reference $\mathcal{G}^\star$ (closest to $(\lambda_+,\lambda_-)$ centroid);\\
       cosine-similarity filter, trim fraction $\xi$};
    \node[tslagg, below=3mm of filt] (avg)
      {Geometric-mean average of survivors $\;\Rightarrow\; \hat m^{(\ell)}$};
    \node[tslout, below=3mm of avg] (refit)
      {Joint LS refit of $\{\lambda_+^{(k)},\lambda_-^{(k)}\}_{k=1}^{\ell}$ on $y$};

    \begin{scope}[on background layer]
      \node[stagebox, fit=(resid)(baghdr)(b1)(bB)(refit),
        label={[font=\footnotesize\itshape, anchor=north east, inner sep=2pt]
          north east:{Stage $\ell$}}] (stage) {};
    \end{scope}

    \node[tsldec, below=8mm of stage] (dec) {$\ell < R$\,?};

    \node[tslfinal, below=8mm of dec] (final)
      {\textbf{Final TSL model}\\
       $\hat m(\mathbf{x}) = \sum_{\ell=1}^R \bigl[\hat m_+^{(\ell)}(\mathbf{x}) - \hat m_-^{(\ell)}(\mathbf{x})\bigr]$};

    \draw[arr] (data) -- (resid);
    \draw[arr] (resid) -- (baghdr);
    \draw[arr] (baghdr.south) -- (b1.north);
    \draw[arr] (baghdr.south) -- (b2.north);
    \draw[arr] (baghdr.south) -- (bd.north);
    \draw[arr] (baghdr.south) -- (bB.north);
    \draw[arr] (b1.south) -- (canon.north);
    \draw[arr] (b2.south) -- (canon.north);
    \draw[arr] (bd.south) -- (canon.north);
    \draw[arr] (bB.south) -- (canon.north);
    \draw[arr] (canon) -- (filt);
    \draw[arr] (filt) -- (avg);
    \draw[arr] (avg) -- (refit);
    \draw[arr] (stage) -- (dec);
    \draw[arr] (dec) -- node[right, font=\scriptsize, pos=0.45]{no} (final);

    \coordinate (loopR) at ($(stage.east)+(0.8,0)$);
    \draw[loop] (dec.east)
      -- ($(dec.east)+(0.4,0)$)
      -- (loopR |- dec.east)
      -- node[right, font=\scriptsize]{yes ($\ell{\,\gets\,}\ell{+}1$)} (loopR |- resid.east)
      -- (resid.east);

    \node[steplbl] at ($(resid.north west)+(0.15,-0.15)$) {1};
    \node[steplbl] at ($(baghdr.north west)+(0.15,-0.15)$) {2};
    \node[steplbl] at ($(canon.north west)+(0.15,-0.15)$) {3};
    \node[steplbl] at ($(filt.north west)+(0.15,-0.15)$) {4};
    \node[steplbl] at ($(avg.north west)+(0.15,-0.15)$) {5};
    \node[steplbl] at ($(refit.north west)+(0.15,-0.15)$) {6};
    \node[steplbl] at ($(dec.north)+(-1.25,-0.35)$) {7};
    \node[steplbl] at ($(final.north west)+(0.15,-0.15)$) {8};
  \end{tikzpicture}
  \caption{TSL fitting pipeline. Within stage $\ell$ (dashed box): outer residuals (blue) feed bagged parallel grid-tensor fits (orange), whose outputs are gauge-aligned, similarity-filtered, and averaged (purple), then all stage scalars $\{\lambda_\pm^{(k)}\}_{k=1}^\ell$ up to the current stage are jointly refit by least squares against $y$ (teal, orthogonal-greedy backfitting). The decision diamond loops back for $\ell < R$ (dashed red); otherwise the procedure exits and returns the assembled model (green). Steps 2--6 implement \cref{alg:fit_stage_with_bagging} and \cref{alg:combine_bagged_two_tensor}.}
  \label{fig:tsl_flowchart}
\end{figure}

\subsection{Initialization}

At the start of stage $\ell$, we initialize both the $(+)$ and $(-)$ products: $\hat{m}_{+,j}^{(\ell)} = 1$ and $\hat{m}_{-,j}^{(\ell)} = 1$ for all $j$. This corresponds to a uniform grid with a single interval per dimension for each product. The initial intervals are $\mathcal{I}_j = [\min(x_j^{(i)}), \max(x_j^{(i)})]$ where the minimum and maximum are taken over all observations in $\mathcal{D}_n$. The grid tensor is $\mathcal{G} = \{\prod_{j=1}^p \mathcal{I}_j\}$, containing one rectangle in dimension $p$. We then initialize the stage coefficients $\lambda_{+}^{(\ell)}$ and $\lambda_{-}^{(\ell)}$ from the outer residuals $R$ so the stage starts with a constant that matches the signed mass of the residuals. \textbf{We set $\lambda_{-}^{(\ell)} = 0$ only when all outer residuals are non-negative} ($R_i \geq 0$ for all $i$), typically at Stage 1 with non-negative targets, but more generally whenever the outer residuals are all non-negative (positive-only mode); then $\lambda_{+}^{(\ell)} = 1$. If any residual is negative, we use the full two-tensor initialization: $\lambda_{+}^{(\ell)} = \max(\epsilon_\lambda,\, \sum_i w_i R_i^+ / \sum_i w_i)$ and $\lambda_{-}^{(\ell)} = \max(\epsilon_\lambda,\, \sum_i w_i R_i^- / \sum_i w_i)$, where $R_i^+ = \max(R_i, 0)$, $R_i^- = \max(-R_i, 0)$, $w_i$ are weights (e.g., $w_i = 1$ or Huber weights), and $\epsilon_\lambda > 0$ is a small floor. The initial stage prediction is $\hat{m}^{(\ell)}(\mathbf{x}) = \hat{m}_+^{(\ell)}(\mathbf{x}) - \hat{m}_-^{(\ell)}(\mathbf{x})$ in the convention of main eq.~\eqref{eq:estimator}, which with constant univariate factors $\hat{m}_{\pm,j}^{(\ell)} \equiv 1$ equals $\lambda_{+}^{(\ell)} - \lambda_{-}^{(\ell)}$ everywhere.

\subsection{Grid Refinement Operations}

We consider two types of grid refinement operations: \textbf{splits} and \textbf{resplits}. The operations apply independently to both the $(+)$ and $(-)$ products, though they share the same grid structure.

\textbf{Splits:} Partition an interval $I = [a,b)$ into $I_1 = [a,s)$ and $I_2 = [s,b)$ at threshold $s$ along axis $x_j$, and estimate optimal multipliers for both products on the two new intervals.

\textbf{Resplits:} Consider an existing pair of adjacent intervals $I_1, I_2 \in \mathcal{I}_j$ and re-estimate the optimal multipliers for both products on the two intervals.

\subsection{Algorithm Pseudocode}

\begin{algorithm}[H]
  \caption{Fit Single Grid Tensor (Stage $\ell$)}
  \label{alg:fit_single_grid_tensor}
  \begin{algorithmic}
    \STATE {\bfseries Input:} outer residuals $R$, data $X$, ridge $\alpha$, clamp bounds $v_{\min}, v_{\max}$, candidate budget $\text{split\_try}$, column-subsample rate $\text{colsample}$, $\text{min\_interval\_samples}$, max refinement iterations $\text{n\_iter}$, early-stop tolerance $\varepsilon_\Delta \ge 0$
    \STATE {\bfseries Initialize state}
    \STATE $(\hat{m}_{+}^{(\ell)}, \hat{m}_{-}^{(\ell)}, \lambda_{+}^{(\ell)}, \lambda_{-}^{(\ell)}) \gets \textsc{InitializeState}(p, R)$
    \STATE $(\hat{m}_+, \hat{m}_-, r) \gets \textsc{ComputePredictions}(\hat{m}_{+}^{(\ell)}, \hat{m}_{-}^{(\ell)}, \lambda_{+}^{(\ell)}, \lambda_{-}^{(\ell)}, R, X)$
    \STATE {\bfseries Grid refinement loop}
    \FOR{$t = 1$ to $\text{n\_iter}$}
    \STATE $\Delta_{\text{best}} \gets 0$
    \STATE $\text{action}_{\text{best}} \gets \text{None}$
    \STATE {\bfseries Generate candidate splits: sample columns, then intervals, then split points (see Alg.~\ref{alg:sample_candidates})}
    \STATE $\mathcal{C} \gets \textsc{SampleCandidates}(X, \{\mathcal{I}_j\}_{j=1}^p, \text{split\_try}, \text{colsample}, \text{min\_interval\_samples})$ \hfill \textit{($\mathcal{I}_j$ = current intervals on axis $j$)}
    \FOR{each candidate $(j, I, s) \in \mathcal{C}$}
    \STATE $\Delta_{\text{split}} \gets \textsc{EvaluateSplitCandidate}(j, I, s, \hat{m}_+, \hat{m}_-, r, \alpha, v_{\min}, v_{\max})$
    \IF{$\Delta_{\text{split}} > \Delta_{\text{best}}$}
    \STATE $\Delta_{\text{best}} \gets \Delta_{\text{split}}$
    \STATE $\text{action}_{\text{best}} \gets (\text{split}, j, I, s)$
    \ENDIF
    \ENDFOR
    \STATE {\bfseries Early-stop if no candidate yields enough improvement}
    \IF{$\Delta_{\text{best}} \le \varepsilon_\Delta$}
    \STATE \textbf{break}
    \ENDIF
    \STATE {\bfseries Apply best split action}
    \IF{$\text{action}_{\text{best}} \neq \text{None}$}
    \STATE $(\hat{m}_{+}^{(\ell)}, \hat{m}_{-}^{(\ell)}, \hat{m}_+, \hat{m}_-, r) \gets \textsc{ApplySplit}(\text{action}_{\text{best}}, \hat{m}_{+}^{(\ell)}, \hat{m}_{-}^{(\ell)}, \lambda_{+}^{(\ell)}, \lambda_{-}^{(\ell)}, \hat{m}_+, \hat{m}_-, r, R, X, \alpha, v_{\min}, v_{\max})$
    \ENDIF
    \ENDFOR
    \STATE {\bfseries Refit stage scalars $(\lambda_{+}^{(\ell)},\lambda_{-}^{(\ell)})$ on the outer residuals (least-squares projection onto the ordered difference $\hat{m}_+^{(\ell)} - \hat{m}_-^{(\ell)}$)}
    \STATE \textbf{return} $\hat{m}^{(\ell)}$
  \end{algorithmic}
\end{algorithm}

\begin{algorithm}[H]
  \caption{InitializeState$(p, R)$}
  \label{alg:initialize_state}
  \begin{algorithmic}
    \STATE {\bfseries Input:} $p$ (number of features), outer residuals $R$
    \STATE {\bfseries Output:} $(\hat{m}_{+}^{(\ell)}, \hat{m}_{-}^{(\ell)}, \lambda_{+}^{(\ell)}, \lambda_{-}^{(\ell)})$
    \FOR{$j = 1$ to $p$}
    \STATE $\hat{m}_{+,j}^{(\ell)} \gets 1$ \hfill \textit{(uniform grid: single interval per dimension)}
    \STATE $\hat{m}_{-,j}^{(\ell)} \gets 1$
    \ENDFOR
    \STATE {\bfseries Initialize lambdas from residuals. Set $\lambda_{-}^{(\ell)} = 0$ only if all $R_i \geq 0$ (positive-only mode); else use weighted positive/negative mass.}
    \IF{all outer residuals non-negative: $R_i \geq 0$ for all $i$}
    \STATE $\lambda_{+}^{(\ell)} \gets 1$, \quad $\lambda_{-}^{(\ell)} \gets 0$ \hfill \textit{(positive-only mode)}
    \ELSE
    \STATE $\lambda_{+}^{(\ell)} \gets \max\bigl(\epsilon_\lambda,\; \sum_i w_i R_i^+ / \sum_i w_i\bigr)$, \quad $R_i^+ := \max(R_i, 0)$
    \STATE $\lambda_{-}^{(\ell)} \gets \max\bigl(\epsilon_\lambda,\; \sum_i w_i R_i^- / \sum_i w_i\bigr)$, \quad $R_i^- := \max(-R_i, 0)$
    \ENDIF
    \STATE \textbf{return} $(\hat{m}_{+}^{(\ell)}, \hat{m}_{-}^{(\ell)}, \lambda_{+}^{(\ell)}, \lambda_{-}^{(\ell)})$
  \end{algorithmic}
\end{algorithm}

\begin{algorithm}[H]
  \caption{ComputePredictions$(\hat{m}_{+}^{(\ell)}, \hat{m}_{-}^{(\ell)}, \lambda_{+}^{(\ell)}, \lambda_{-}^{(\ell)}, R, X)$}
  \label{alg:compute_predictions}
  \begin{algorithmic}
    \STATE {\bfseries Input:} Component functions $\hat{m}_{+}^{(\ell)}, \hat{m}_{-}^{(\ell)}$, coefficients $\lambda_{+}^{(\ell)}, \lambda_{-}^{(\ell)}$, outer residuals $R$, data $X$
    \STATE {\bfseries Output:} $(\hat{m}_+, \hat{m}_-, r)$ where $\hat{m}_{+}^{(i)} = \prod_{j=1}^p \hat{m}_{+,j}^{(\ell)}(x_j^{(i)})$ and $\hat{m}_{-}^{(i)} = \prod_{j=1}^p \hat{m}_{-,j}^{(\ell)}(x_j^{(i)})$ are the per-sample unscaled $(+)$/$(-)$ products at $\mathbf{x}^{(i)}$, and $r$ are within-stage residuals
    \FOR{$i = 1$ to $n$}
    \STATE $\hat{m}_{+}^{(i)} \gets \prod_{j=1}^p \hat{m}_{+,j}^{(\ell)} (x_j^{(i)})$
    \STATE $\hat{m}_{-}^{(i)} \gets \prod_{j=1}^p \hat{m}_{-,j}^{(\ell)} (x_j^{(i)})$
    \STATE $\hat{m}_i^{(\ell)} \gets \lambda_{+}^{(\ell)} \hat{m}_{+}^{(i)} - \lambda_{-}^{(\ell)} \hat{m}_{-}^{(i)}$ \hfill \textit{(stage prediction)}
    \STATE $r_i \gets R_i - \hat{m}_i^{(\ell)}$ \hfill \textit{(within-stage residual)}
    \ENDFOR
    \STATE \textbf{return} $(\hat{m}_+, \hat{m}_-, r)$
  \end{algorithmic}
\end{algorithm}

\begin{algorithm}[H]
  \caption{EvaluateSplitCandidate$(j, I, s, \hat{m}_+, \hat{m}_-, r, \alpha, v_{\min}, v_{\max})$}
  \label{alg:evaluate_split_candidate}
  \begin{algorithmic}
    \STATE {\bfseries Input:} Feature $j$, interval $I$, split point $s$, unscaled per-sample product values $\hat{m}_+, \hat{m}_-$ (i.e.\ $\hat{m}_{\pm}^{(i)} = \prod_{j=1}^p \hat{m}_{\pm,j}^{(\ell)}(x_j^{(i)})$ at the data), residuals $r$, regularization $\alpha$, bounds $v_{\min}, v_{\max}$
    \STATE {\bfseries Output:} Error reduction $\Delta_{\text{split}}$
    \STATE {\bfseries Partition observations into left $L$ and right $\mathcal{R}$ sides}
    \STATE $L \gets \{i : x_j^{(i)} < s, i \in I\}$
    \STATE $\mathcal{R} \gets \{i : x_j^{(i)} \geq s, i \in I\}$
    \STATE {\bfseries Evaluate left side}
    \STATE $(S_{11}^L, S_{22}^L, S_{12}^L, t_1^L, t_2^L) \gets \textsc{ComputeSufficientStatistics}(L, \hat{m}_+, \hat{m}_-, r)$
    \STATE $(\tilde{u}_+^L, \tilde{u}_-^L, \Delta_L) \gets \textsc{SolveTwoTensor}(S_{11}^L, S_{22}^L, S_{12}^L, t_1^L, t_2^L, \alpha, v_{\min}, v_{\max})$
    \STATE {\bfseries Evaluate right side}
    \STATE $(S_{11}^R, S_{22}^R, S_{12}^R, t_1^R, t_2^R) \gets \textsc{ComputeSufficientStatistics}(\mathcal{R}, \hat{m}_+, \hat{m}_-, r)$
    \STATE $(\tilde{u}_+^R, \tilde{u}_-^R, \Delta_R) \gets \textsc{SolveTwoTensor}(S_{11}^R, S_{22}^R, S_{12}^R, t_1^R, t_2^R, \alpha, v_{\min}, v_{\max})$
    \STATE $\Delta_{\text{split}} \gets \Delta_L + \Delta_R$
    \STATE \textbf{return} $\Delta_{\text{split}}$
  \end{algorithmic}
\end{algorithm}

\begin{algorithm}[H]
  \caption{SampleCandidates$(X, \{\mathcal{I}_j\}, \text{split\_try}, \text{colsample}, \text{min\_interval\_samples})$}
  \label{alg:sample_candidates}
  \begin{algorithmic}
    \STATE {\bfseries Input:} Data $X$, current intervals $\{\mathcal{I}_j\}_{j=1}^p$ per axis (from the grid), $\text{split\_try}$ (max candidate split points per $(j, I)$), $\text{colsample} \in [0,1]$ (fraction of features to sample), $\text{min\_interval\_samples}$ (minimum observations per resulting sub-interval)
    \STATE {\bfseries Output:} Set of candidates $\mathcal{C} \subseteq \{\text{(feature, interval, split point)}\}$
    \STATE {\bfseries Sample a subset of feature indices}
    \STATE $J \gets \text{random subset of } [p] \text{ of size } \max(1, \lfloor \text{colsample} \cdot p \rfloor)$
    \STATE $\mathcal{C} \gets \emptyset$
    \STATE {\bfseries For each sampled feature and each interval: build valid positions (min\_interval\_samples on both sides), then sample up to split\_try}
    \FOR{each $j \in J$}
    \FOR{each interval $I \in \mathcal{I}_j$}
    \STATE {\bfseries Valid positions: splits $s$ with $\geq \text{min\_interval\_samples}$ observations on both sides}
    \STATE $\mathcal{V} \gets \{ s \in \text{data values in } I : |\{i : x_j^{(i)} < s,\, i \in I\}| \geq \text{min\_interval\_samples} \text{ and } |\{i : x_j^{(i)} \geq s,\, i \in I\}| \geq \text{min\_interval\_samples} \}$
    \STATE {\bfseries Sample up to split\_try from $\mathcal{V}$; if $|\mathcal{V}| \le \text{split\_try}$, use all}
    \IF{$|\mathcal{V}| \le \text{split\_try}$}
    \STATE $\mathcal{S} \gets \mathcal{V}$
    \ELSE
    \STATE $\mathcal{S} \gets \text{random sample of } \text{split\_try} \text{ positions from } \mathcal{V}$
    \ENDIF
    \FOR{each $s \in \mathcal{S}$}
    \STATE Add $(j, I, s)$ to $\mathcal{C}$
    \ENDFOR
    \ENDFOR
    \ENDFOR
    \STATE \textbf{return} $\mathcal{C}$
  \end{algorithmic}
\end{algorithm}

\begin{algorithm}[H]
  \caption{ComputeSufficientStatistics$(S, \hat{m}_+, \hat{m}_-, r)$}
  \label{alg:compute_sufficient_statistics}
  \begin{algorithmic}
    \STATE {\bfseries Input:} Set of observation indices $S$, unscaled per-sample product values $\hat{m}_+, \hat{m}_-$ (i.e.\ $\hat{m}_{\pm}^{(i)} = \prod_{j=1}^p \hat{m}_{\pm,j}^{(\ell)}(x_j^{(i)})$ at the data), residuals $r$
    \STATE {\bfseries Output:} Sufficient statistics $(S_{11}, S_{22}, S_{12}, t_1, t_2)$
    \STATE $S_{11} \gets \sum_{i \in S} w_i \bigl(\hat{m}_{+}^{(i)}\bigr)^2$
    \STATE $S_{22} \gets \sum_{i \in S} w_i \bigl(\hat{m}_{-}^{(i)}\bigr)^2$
    \STATE $S_{12} \gets -\sum_{i \in S} w_i \hat{m}_{+}^{(i)} \hat{m}_{-}^{(i)}$ \hfill \textit{(note: negative sign)}
    \STATE $t_1 \gets \sum_{i \in S} w_i r_i \hat{m}_{+}^{(i)}$
    \STATE $t_2 \gets -\sum_{i \in S} w_i r_i \hat{m}_{-}^{(i)}$ \hfill \textit{(note: negative sign)}
    \STATE \textbf{return} $(S_{11}, S_{22}, S_{12}, t_1, t_2)$
  \end{algorithmic}
\end{algorithm}

\begin{algorithm}[H]
  \caption{SolveTwoTensor$(S_{11}, S_{22}, S_{12}, t_1, t_2, \alpha, v_{\min}, v_{\max})$}
  \label{alg:solve_two_tensor}
  \begin{algorithmic}
    \STATE {\bfseries Input:} Sufficient statistics $S_{11}, S_{22}, S_{12}, t_1, t_2$, regularization $\alpha$, bounds $v_{\min}, v_{\max}$
    \STATE {\bfseries Output:} Clamped updates $(\tilde{u}_+, \tilde{u}_-)$ and error reduction $\Delta$
    \STATE {\bfseries Build $2 \times 2$ system matrix}
    \STATE $A \gets \begin{pmatrix} S_{11} + \alpha & S_{12} \\ S_{12} & S_{22} + \alpha \end{pmatrix}$
    \STATE $t \gets \begin{pmatrix} t_1 \\ t_2 \end{pmatrix}$
    \STATE {\bfseries Solve linear system}
    \STATE $\hat{u} \gets A^{-1} t$ \hfill \textit{(unconstrained optimum $\hat{u} = (\hat{u}_+, \hat{u}_-)^T$)}
    \STATE {\bfseries Clamp multipliers to valid range and recompute the applied deltas}
    \STATE $v_+ \gets \text{clamp}(1 + \hat{u}_+, v_{\min}, v_{\max})$
    \STATE $v_- \gets \text{clamp}(1 + \hat{u}_-, v_{\min}, v_{\max})$
    \STATE $\tilde{u}_+ \gets v_+ - 1, \quad \tilde{u}_- \gets v_- - 1$ \hfill \textit{(clamped deltas --- the values actually applied)}
    \STATE {\bfseries Compute error reduction at the clamped update: $\Delta = \mathcal{L}_S(0, 0) - \mathcal{L}_S(\tilde{u}_+, \tilde{u}_-)$, where $\mathcal{L}_S$ is the regularised loss \eqref{eq:app_split_loss} on region $S$}
    \STATE $\Delta \gets 2 (t_1 \tilde{u}_+ + t_2 \tilde{u}_-) - (S_{11} \tilde{u}_+^2 + 2 S_{12} \tilde{u}_+ \tilde{u}_- + S_{22} \tilde{u}_-^2) - \alpha (\tilde{u}_+^2 + \tilde{u}_-^2)$
    \STATE \textbf{return} $(\tilde{u}_+, \tilde{u}_-, \Delta)$
  \end{algorithmic}
\end{algorithm}

\begin{algorithm}[H]
  \caption{ApplySplit$(\text{action}, \hat{m}_{+}^{(\ell)}, \hat{m}_{-}^{(\ell)}, \lambda_{+}^{(\ell)}, \lambda_{-}^{(\ell)}, \hat{m}_+, \hat{m}_-, r, R, X, \alpha, v_{\min}, v_{\max})$}
  \label{alg:apply_split}
  \begin{algorithmic}
    \STATE {\bfseries Input:} Split action $(j, I, s)$, component functions $\hat{m}_{+}^{(\ell)}, \hat{m}_{-}^{(\ell)}$, coefficients $\lambda_{+}^{(\ell)}, \lambda_{-}^{(\ell)}$, evaluated product values $\hat{m}_+, \hat{m}_-$, within-stage residuals $r$, outer residuals $R$, data, hyperparameters
    \STATE {\bfseries Output:} Updated component functions, evaluated product values, and residuals
    \STATE {\bfseries Extract split parameters}
    \STATE $(j, I, s) \gets \text{action}$
    \STATE {\bfseries Partition interval $I$ on axis $j$ at threshold $s$}
    \STATE $I_L \gets [a, s)$, $I_R \gets [s, b)$ where $I = [a, b)$
    \STATE {\bfseries Compute optimal updates for left and right intervals}
    \STATE $L \gets \{i : x_j^{(i)} \in I_L\}$, $\mathcal{R} \gets \{i : x_j^{(i)} \in I_R\}$
    \STATE $(S_{11}^L, S_{22}^L, S_{12}^L, t_1^L, t_2^L) \gets \textsc{ComputeSufficientStatistics}(L, \hat{m}_+, \hat{m}_-, r)$
    \STATE $(\hat{u}_+^L, \hat{u}_-^L, \_) \gets \textsc{SolveTwoTensor}(S_{11}^L, S_{22}^L, S_{12}^L, t_1^L, t_2^L, \alpha, v_{\min}, v_{\max})$
    \STATE $(S_{11}^R, S_{22}^R, S_{12}^R, t_1^R, t_2^R) \gets \textsc{ComputeSufficientStatistics}(\mathcal{R}, \hat{m}_+, \hat{m}_-, r)$
    \STATE $(\hat{u}_+^R, \hat{u}_-^R, \_) \gets \textsc{SolveTwoTensor}(S_{11}^R, S_{22}^R, S_{12}^R, t_1^R, t_2^R, \alpha, v_{\min}, v_{\max})$
    \STATE {\bfseries Update component functions on new intervals}
    \STATE For $x_j \in I_L$: $\hat{m}_{+,j}^{(\ell)}(x_j) \gets \hat{m}_{+,j}^{(\ell)}(x_j) \cdot (1 + \hat{u}_+^L)$
    \STATE For $x_j \in I_L$: $\hat{m}_{-,j}^{(\ell)}(x_j) \gets \hat{m}_{-,j}^{(\ell)}(x_j) \cdot (1 + \hat{u}_-^L)$
    \STATE For $x_j \in I_R$: $\hat{m}_{+,j}^{(\ell)}(x_j) \gets \hat{m}_{+,j}^{(\ell)}(x_j) \cdot (1 + \hat{u}_+^R)$
    \STATE For $x_j \in I_R$: $\hat{m}_{-,j}^{(\ell)}(x_j) \gets \hat{m}_{-,j}^{(\ell)}(x_j) \cdot (1 + \hat{u}_-^R)$
    \STATE {\bfseries Update evaluated product values and residuals}
    \STATE $(\hat{m}_+, \hat{m}_-, r) \gets \textsc{ComputePredictions}(\hat{m}_{+}^{(\ell)}, \hat{m}_{-}^{(\ell)}, \lambda_{+}^{(\ell)}, \lambda_{-}^{(\ell)}, R, X)$
    \STATE {\bfseries Update prefix-sum caches for efficient future queries}
    \STATE \textbf{return} $(\hat{m}_{+}^{(\ell)}, \hat{m}_{-}^{(\ell)}, \hat{m}_+, \hat{m}_-, r)$
  \end{algorithmic}
\end{algorithm}

\begin{algorithm}[H]
  \caption{Backfit$(\{\hat{m}_+^{(k)}, \hat{m}_-^{(k)}\}_{k=1}^{\ell}, y, X)$}
  \label{alg:backfit}
  \begin{algorithmic}
    \STATE {\bfseries Input:} All stage components $\{\hat{m}_+^{(k)}, \hat{m}_-^{(k)}\}_{k=1}^{\ell}$, labels $y$, data $X$
    \STATE {\bfseries Output:} Optimal stage coefficients $\{\lambda_{+}^{(k)}, \lambda_{-}^{(k)}\}_{k=1}^{\ell}$
    \STATE {\bfseries Build design matrix} $M \in \mathbb{R}^{n \times 2\ell}$ with entries
    \STATE \quad $M_{i,k} \;=\; \prod_{j=1}^p \hat{m}_{+,j}^{(k)}(x_j^{(i)})$ \quad for $k = 1, \ldots, \ell$
    \STATE \quad $M_{i,\ell+k} \;=\; -\prod_{j=1}^p \hat{m}_{-,j}^{(k)}(x_j^{(i)})$ \quad for $k = 1, \ldots, \ell$
    \STATE {\bfseries Solve least squares:} minimize $\|\mathbf{y} - M\boldsymbol{\lambda}\|^2$
    \STATE $\boldsymbol{\lambda} \gets (M^T M)^{-1} M^T \mathbf{y}$ \hfill \textit{($\boldsymbol{\lambda} = [\lambda_{+}^{(1)}, \ldots, \lambda_{+}^{(\ell)}, \lambda_{-}^{(1)}, \ldots, \lambda_{-}^{(\ell)}]^T$)}
    \STATE \textbf{return} $\{\lambda_{+}^{(k)}, \lambda_{-}^{(k)}\}_{k=1}^{\ell}$
  \end{algorithmic}
\end{algorithm}

\subsection{Implementation Details}

\textbf{Positive-only special case:} We use the simplified (1D) formulation only when \emph{all outer residuals are non-negative} ($R_i \geq 0$ for all $i$), e.g.\ at stage 1 with non-negative targets. In that case we set $\lambda_{-}^{(\ell)} = 0$ (the $(-)$ product components $\hat{m}_{-,j}^{(\ell)}$ remain 1, so $\hat{m}_-^{(\ell)}$ contributes zero). The objective then simplifies to 1D ridge-regularized least squares on the $(+)$ product only: $\mathcal{L}_S (\hat{u}_+^S) = \sum_{i \in S} w_i (r_i - \hat{u}_+^S \hat{m}_{+}^{(i)})^2 + \alpha (\hat{u}_+^S)^2$, with closed-form solution $\hat{u}_+^S = (\sum_{i \in S} w_i r_i \hat{m}_{+}^{(i)}) / (\sum_{i \in S} w_i \bigl(\hat{m}_{+}^{(i)}\bigr)^2 + \alpha)$, where $\hat{m}_{+}^{(i)} = \prod_{j=1}^p \hat{m}_{+,j}^{(\ell)}(x_j^{(i)})$ is the unscaled $(+)$ product at $\mathbf{x}^{(i)}$.

\textbf{Stabilizing Weights:} The weights $w_i$ are chosen to stabilize the update computation. If $\alpha$ is small and the per-sample unscaled product values $\hat{m}_{+}^{(i)}$ or $\hat{m}_{-}^{(i)}$ are close to zero, the system matrix may be near-singular. For standard least squares, we use constant weights $w_i = 1$. For robust Huber loss, we use residual-dependent weights $w_i = \min(1, c / |r_i|)$ where $c \approx 1.345$.

\textbf{Splitting Algorithm:} We use two tuning parameters: $\text{split\_try} \in \mathbb{N}$ is the maximum number of split positions to sample per (feature, interval), and $\text{colsample} \in [0, 1]$ determines the fraction of columns to sample. For each $(j, I)$, we first restrict to \emph{valid} positions (split points with at least $\text{min\_interval\_samples}$ observations on both sides); we then sample up to $\text{split\_try}$ from that set (or use all if there are fewer). Among the resulting candidates, we select the split that maximizes error reduction. We do not terminate when fewer than $\text{split\_try}$ valid positions exist; we only fail to propose a split when there are zero valid positions.

\textbf{Stopping Criteria:} Grid refinement continues until one of: (1) the maximum number of refinement iterations \texttt{n\_iter} is reached, (2) no candidate split yields an error reduction $\Delta_{\text{split}}$ exceeding a small threshold, (3) the minimum observations per interval (\texttt{min\_interval\_samples}) constraint is violated, or (4) no valid splits remain (all violate positivity constraints).

\textbf{Efficient Candidate Evaluation:} We maintain prefix-sum caches for each feature axis, enabling $O(1)$ queries for sufficient statistics. The computational complexity for evaluating all candidate splits on a feature is linear in the number of samples, with updates to prefix sums costing $O(pn\log n)$ per split (using Fenwick trees or segment trees for dynamic updates).

\textbf{Backfitting:} After completing grid refinement for stage $\ell$, we refit all stage coefficients $\{\lambda_{+}^{(k)}, \lambda_{-}^{(k)}\}_{k=1}^{\ell}$ by projecting outer residuals onto the span of all selected stages via least squares. This ensures coefficients are optimal given the current set of stages, improving approximation quality compared to frozen-coefficient greedy methods.

\section{Bagging and Aggregation Details}

After fitting $n_{\text{grids}}$ grid tensors at the current stage via bagging, we combine them to obtain the final stage component $\hat{m}^{(\ell)}$. The procedure is applied separately to the $(+)$ and $(-)$ products.

\subsection{Pseudocode: Bagging then Combine}
\label{app:bagging_pseudocode}

The main paper gives \cref{alg:fit_single_grid_tensor} for fitting one grid tensor. Below we give the full stage procedure: (i) fit $n_{\text{grids}}$ tensors via bagging, then (ii) combine them into a single stage tensor.

\begin{algorithm}[H]
  \caption{Fit Stage $\ell$ with Bagging + Combination}
  \label{alg:fit_stage_with_bagging}
  \begin{algorithmic}
    \STATE {\bfseries Input:} outer residuals $R^{(\ell-1)}$, data $X$, $n_{\text{grids}}$ bagged grids, bagging scheme $\mathcal{S}$, combine trim $\xi \in [0,1]$, single-grid hyperparameters $(\alpha, v_{\min}, v_{\max}, \text{split\_try}, \text{colsample}, \ldots)$
    \STATE {\bfseries Output:} combined stage tensor $\hat{m}^{(\ell)}$ (both $(+)$ and $(-)$ products plus $\lambda_{+}^{(\ell)},\lambda_{-}^{(\ell)}$)
    \STATE {\bfseries Bagging loop (fit $n_{\text{grids}}$ candidate tensors)}
    \FOR{$c = 1$ to $n_{\text{grids}}$}
    \STATE $I_c \gets \textsc{SampleIndices}(\mathcal{S}, n)$ \hfill \textit{(e.g., bootstrap: sample $n$ indices with replacement)}
    \STATE $(R_c, X_c) \gets (R^{(\ell-1)}|_{I_c}, X|_{I_c})$
    \STATE $\mathcal{G}^{(\ell,c)} \gets \textsc{FitSingleGridTensor}(R_c, X_c;\, \alpha, v_{\min}, v_{\max}, \text{split\_try}, \text{colsample}, \ldots)$ \hfill \textit{(Algorithm \ref{alg:fit_single_grid_tensor})}
    \ENDFOR
    \STATE {\bfseries Combine bagged candidates into a single stage tensor}
    \STATE $\hat{m}^{(\ell)} \gets \textsc{CombineBaggedTwoTensor}(\{\mathcal{G}^{(\ell,c)}\}_{c=1}^{n_{\text{grids}}}, X, \xi)$ \hfill \textit{(Algorithm \ref{alg:combine_bagged_two_tensor})}
    \STATE {\bfseries Refit $(\lambda_{+}^{(\ell)},\lambda_{-}^{(\ell)})$ on full data}
    \STATE $(\lambda_{+}^{(\ell)},\lambda_{-}^{(\ell)}) \gets \textsc{RefitStageCoefficients}(\hat{m}^{(\ell)}, R^{(\ell-1)}, X)$
    \STATE \textbf{return} $\hat{m}^{(\ell)}$
  \end{algorithmic}
\end{algorithm}

\begin{algorithm}[H]
  \caption{Combine Bagged Two-Tensor Grids (Reference + Similarity + Averaging)}
  \label{alg:combine_bagged_two_tensor}
  \begin{algorithmic}
    \STATE {\bfseries Input:} candidate grids $\{\mathcal{G}^{(c)}\}_{c=1}^{n_{\text{grids}}}$ (each is two-tensor: $(\{b_{j}^{(c),k}\}, \{d_{j}^{(c),k}\}, \lambda_{+}^{(c)}, \lambda_{-}^{(c)})$), full data $X$, trim $\xi \in [0,1]$
    \STATE {\bfseries Output:} combined grid $\bar{\mathcal{G}}$ (two-tensor representation)
    \STATE {\bfseries 1. Align grids to a common union grid}
    \STATE $\{\tilde{\mathcal{G}}^{(c)}\}_{c=1}^{n_{\text{grids}}} \gets \textsc{RefineToUnionGrid}(\{\mathcal{G}^{(c)}\}_{c=1}^{n_{\text{grids}}})$ \hfill \textit{(union over all split points per axis)}
    \STATE {\bfseries 2. Normalize each bag (gauge-fixing so similarities compare \emph{shapes})}
    \FOR{$c = 1$ to $n_{\text{grids}}$}
    \STATE $\tilde{\mathcal{G}}^{(c)} \gets \textsc{NormalizePerAxis}(\tilde{\mathcal{G}}^{(c)};\ X)$
    \COMMENT{per axis $j$, subtract empirical mean of $\log b_j$ and of $d_j$ over $X$}
    \ENDFOR
    \STATE {\bfseries 3. Choose a reference grid (closest to the $(\lambda_+,\lambda_-)$ centroid)}
    \STATE $\mathcal{G}^\star \gets \arg\min_{c \in \{1,\dots,n_{\text{grids}}\}} \sum_{c'=1}^{n_{\text{grids}}} \left[(\lambda_{+}^{(c)}-\lambda_{+}^{(c')})^2 + (\lambda_{-}^{(c)}-\lambda_{-}^{(c')})^2\right]$
    \STATE {\bfseries 4. Score candidates by similarity to the reference}
    \FOR{$c = 1$ to $n_{\text{grids}}$}
    \STATE Compute per-point backbone and tilt summaries:
    \STATE $\mathbf{b}_c \gets \left[\prod_{j=1}^p b_j^{(c),k_j(i)}\right]_{i=1}^n$ and $\mathbf{d}_c \gets \left[\sum_{j=1}^p d_j^{(c),k_j(i)}\right]_{i=1}^n$
    \STATE $\text{sim}_b \gets \frac{\mathbf{b}^\star \cdot \mathbf{b}_c}{\|\mathbf{b}^\star\|\,\|\mathbf{b}_c\|}$,\quad $\text{sim}_d \gets \frac{\mathbf{d}^\star \cdot \mathbf{d}_c}{\|\mathbf{d}^\star\|\,\|\mathbf{d}_c\|}$
    \STATE $\text{score}(c) \gets \frac{(\text{sim}_b + 1)(\text{sim}_d + 1)}{4} \in [0,1]$ \hfill \textit{(as in Eq. \eqref{eq:sim_combined})}
    \ENDFOR
    \STATE {\bfseries 5. Trim and keep top candidates}
    \STATE Keep the top $K = \lceil (1-\xi)n_{\text{grids}} \rceil$ indices by $\text{score}(c)$; call this kept set $\mathcal{K}$
    \STATE {\bfseries 6. Average normalized components (geometric mean in log-space of $a_{\pm}$), reconstruct $(b,d)$}
    \STATE $\bar{a}_{\pm,j}^k \gets \exp\bigl(\frac{1}{|\mathcal{K}|}\sum_{c \in \mathcal{K}} \log a_{\pm,j}^{(c),k}\bigr)$ where $a_{\pm,j}^k = b_j^k e^{\pm d_j^k}$
    \STATE Reconstruct $(\bar{b}, \bar{d})$ from $\bar{a}_+, \bar{a}_-$ (e.g.\ $\bar{b}_j^k = \sqrt{\bar{a}_{+,j}^k \bar{a}_{-,j}^k}$, $\bar{d}_j^k = \frac{1}{2}\log(\bar{a}_{+,j}^k/\bar{a}_{-,j}^k)$)
    \STATE {\bfseries 7. Combined lambdas:} $\lambda_{\pm}^{\text{combined}} \gets \exp\bigl(\frac{1}{|\mathcal{K}|}\sum_{c \in \mathcal{K}} \log \lambda_{\pm}^{(c)}\bigr)$
    \STATE \textbf{return} $\bar{\mathcal{G}} := (\{\hat{m}_{+,j}^{\text{combined}}\}_{j=1}^p,\{\hat{m}_{-,j}^{\text{combined}}\}_{j=1}^p,\lambda_{+}^{\text{combined}},\lambda_{-}^{\text{combined}})$
  \end{algorithmic}
\end{algorithm}

\subsection{Candidate Selection via Similarity}

We select a reference grid $\mathcal{G}^\star$ as the candidate closest to the $(\lambda_+,\lambda_-)$ centroid, i.e.\ the one minimizing the sum of squared $\lambda$-distances to all other candidates:
\begin{equation}
  \mathcal{G}^\star = \argmin_{c \in \{1, \ldots, n_{\text{grids}}\}} \sum_{c'=1}^{n_{\text{grids}}} [(\lambda_{+}^{(c)} - \lambda_{+}^{(c')})^2 + (\lambda_{-}^{(c)} - \lambda_{-}^{(c')})^2].
\end{equation}

For each candidate grid $\mathcal{G}_c$ and the reference $\mathcal{G}^\star$, we compute per-point backbone and tilt contributions:
\begin{equation}
  \mathbf{b}_c = \left[\prod_{j=1}^p b_j^{k_j(i)}\right]_{i=1}^n, \quad \mathbf{d}_c = \left[\sum_{j=1}^p d_j^{k_j(i)}\right]_{i=1}^n,
\end{equation}
where $k_j(i)$ is the interval index for $x_j^{(i)}$ on axis $j$. We compute cosine similarities:
\begin{equation}
  \text{sim}_b (\mathcal{G}^\star, \mathcal{G}_c) = \frac{\mathbf{b}^\star \cdot \mathbf{b}_c}{\|\mathbf{b}^\star\| \|\mathbf{b}_c\|}, \quad \text{sim}_d (\mathcal{G}^\star, \mathcal{G}_c) = \frac{\mathbf{d}^\star \cdot \mathbf{d}_c}{\|\mathbf{d}^\star\| \|\mathbf{d}_c\|}.
\end{equation}

The combined similarity score is the rescaled product of the two cosine similarities (each shifted into $[0,2]$ and divided by $4$ so the score lies in $[0,1]$):
\begin{equation}
  \text{sim}_{\text{combined}} (\mathcal{G}_c) \;=\; \frac{(\text{sim}_b (\mathcal{G}^\star, \mathcal{G}_c) + 1)\, (\text{sim}_d (\mathcal{G}^\star, \mathcal{G}_c) + 1)}{4} \;\in\; [0,1].
  \label{eq:sim_combined}
\end{equation}
We select the top $K = \lceil (1-\xi) n_{\text{grids}} \rceil$ candidates by combined similarity, where $\xi \in [0,1]$ is a trimming threshold (default: $\xi = 0$ keeps all candidates). \Cref{app:bimodal_alignment_figures} (\cref{app:fig:bimodal_alignment}) gives a worked illustration of why this filtering step is necessary: bagged grids can converge to qualitatively distinct backbone representations of the same fitted stage, and filtering against the reference removes the competing branch before averaging.

\subsection{Normalization and Averaging}
\label{app:normalization_and_averaging}

We work in the backbone/tilt parametrization: $\hat{m}_{\pm,j}(x_j) = b_j(x_j) e^{\pm d_j(x_j)}$ with $b_j = \sqrt{\hat{m}_{+,j} \hat{m}_{-,j}}$ and $d_j = \frac{1}{2}\log(\hat{m}_{+,j}/\hat{m}_{-,j})$.

\textbf{Normalization (before similarity and averaging).} For each bag we apply a gauge-fixing step so that similarities compare \emph{shapes} rather than scale or constant offsets. For each axis $j$, we center $\log b_j$ and $d_j$ by subtracting their empirical means over the data: $\log b_j \leftarrow \log b_j - \tfrac{1}{n}\sum_{i=1}^n \log b_j(x_j^{(i)})$ and $d_j \leftarrow d_j - \tfrac{1}{n}\sum_{i=1}^n d_j(x_j^{(i)})$. Lambdas are not modified in this step.

\textbf{Averaging.} We average the normalized $(a_+, a_-)$ factors (where $a_{\pm,j}^k = b_j^k e^{\pm d_j^k}$) using geometric mean in log-space across the selected bags, then reconstruct the combined backbone and tilt $(b, d)$. Combined stage scalars are set by geometric mean: $\lambda_{\pm}^{\text{combined}} = \exp\bigl(\frac{1}{|\mathcal{K}|}\sum_{c \in \mathcal{K}} \log \lambda_{\pm}^{(c)}\bigr)$.

The combined estimator at stage $\ell$ is:
\begin{equation}
  \hat{m}^{(\ell)} (\mathbf{x}) = \lambda_{+}^{\text{combined}} \cdot \prod_{j=1}^p \hat{m}_{+,j}^{\text{combined}}(x_j) - \lambda_{-}^{\text{combined}} \cdot \prod_{j=1}^p \hat{m}_{-,j}^{\text{combined}}(x_j).
\end{equation}
After combination, we refit the stage coefficients via least squares to ensure optimal scaling.

\section{Proof of Partial Dependence Identity}
\label{app:pd_identity_proof}

\begin{proposition}[Partial Dependence Decomposition, restated from \cref{prop:pd_identity}]
  Consider a TSL estimator $\hat{m}$ as defined in \eqref{eq:estimator}. For any stage $\ell$ and coordinate $j$, define the stage-wise constant
  $c^{(\ell)}_{\pm,j} := \mathbb{E}\!\left[\prod_{k \neq j} \hat{m}_{\pm,k}^{(\ell)}(X_k)\right].$
  Then the 1D partial dependence function of $\hat{m}_{\pm}^{(\ell)}(\mathbf{x}) =\lambda_{\pm}^{(\ell)} \prod_{j=1}^p \hat{m}_{\pm,j}^{(\ell)}(x_j)$ on coordinate $j$ satisfies
  \begin{equation} \label{eq:app:pd_factorization}
    \begin{aligned}
      \mathrm{PD}_{\pm,j}^{(\ell)}(x_j) := \mathbb{E}\!\left[\hat{m}_{\pm}^{(\ell)}(x_j, X_{(-j)})\right] = C^{(\ell)}_{\pm,j}\, \hat{m}_{\pm,j}^{(\ell)}(x_j),
    \end{aligned}
  \end{equation}
  i.e., each 1D partial dependence curve is the 1D factor shape times a constant $C^{(\ell)}_{\pm,j} = c^{(\ell)}_{\pm,j} \lambda_{\pm}^{(\ell)}$. Moreover, letting
  $\bar{m}_{\pm}^{(\ell)}  := \mathbb{E}\!\left[\hat{m}_{\pm}^{(\ell)}(X)\right], \quad Z_{\pm}^{(\ell)}  := \mathbb{E}\!\left[\prod_{j=1}^p \mathrm{PD}_{\pm,j}^{(\ell)}(X_j)\right],$
  the stage admits the following exact reconstruction
  \begin{equation}
    \hat{m}_{\pm}^{(\ell)}(\mathbf{x}) = \frac{\bar{m}_{\pm}^{(\ell)}}{Z_{\pm}^{(\ell)}} \prod_{j=1}^p \mathrm{PD}_{\pm,j}^{(\ell)}(x_j).
  \end{equation}
  Consequently, each stage (and thus any downstream explanation primitive built from the stage factors) is computable from 1D partial dependence summaries up to a single scalar normalizer per stage and sign branch.
\end{proposition}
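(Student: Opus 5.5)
The plan is a direct computation from the product structure, done in two steps: first the factorization \eqref{eq:app:pd_factorization}, then the reconstruction formula. The only care needed is about which distribution the expectation in the partial dependence is taken over, and about when the normalizers are finite and nonzero.

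\emph{Step 1: the factorization.} Fix a stage $\ell$, a sign branch, and a coordinate $j$. Following \cref{sec:limitations}, I read $\mathrm{PD}^{(\ell)}_{\pm,j}(x_j)$ as an expectation over the marginal law of $X_{(-j)}$, with $x_j$ held fixed (interventional). Inside that expectation the factor $\lambda_\pm^{(\ell)}\hat m^{(\ell)}_{\pm,j}(x_j)$ is a deterministic constant, so it comes out by linearity. What remains is $\mathbb{E}[\prod_{k\neq j}\hat m^{(\ell)}_{\pm,k}(X_k)] = c^{(\ell)}_{\pm,j}$, which gives \eqref{eq:app:pd_factorization} with $C^{(\ell)}_{\pm,j}=\lambda^{(\ell)}_\pm c^{(\ell)}_{\pm,j}$. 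No independence between features is needed: $c^{(\ell)}_{\pm,j}$ is simply an expectation under the joint law of $X_{(-j)}$. I will state the integrability assumption explicitly. The factors are piecewise constant and bounded by the clamps, so $c^{(\ell)}_{\pm,j}$ is finite. It is also strictly positive, because every factor is strictly positive.

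\emph{Step 2: the reconstruction.} Multiplying the $p$ identities from Step 1 gives
\[
\prod_{j=1}^p \mathrm{PD}^{(\ell)}_{\pm,j}(x_j)=\Bigl(\prod_{j=1}^p C^{(\ell)}_{\pm,j}\Bigr)\prod_{j=1}^p\hat m^{(\ell)}_{\pm,j}(x_j) = K_\pm\,\hat m^{(\ell)}_\pm(\mathbf x),
\]
where $K_\pm:=\lambda_\pm^{-(1)}\cdot\lambda_\pm^{p}\prod_j c_{\pm,j}$; more simply, $K_\pm := \prod_j C^{(\ell)}_{\pm,j}/\lambda^{(\ell)}_\pm$. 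So the product of the PD curves is proportional to the stage branch, with a constant that does not depend on $\mathbf x$.

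To identify $K_\pm$ without knowing the $c$'s separately, take expectations over $X$ on both sides. This gives $Z^{(\ell)}_\pm = K_\pm\,\bar m^{(\ell)}_\pm$, hence $K_\pm^{-1}=\bar m^{(\ell)}_\pm/Z^{(\ell)}_\pm$. Dividing by $K_\pm$ then yields \eqref{eq:pd_reconstruction}.

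\emph{Main obstacle: nondegeneracy of the normalizer.} We need $\lambda^{(\ell)}_\pm>0$, so that $\bar m^{(\ell)}_\pm>0$ and $Z^{(\ell)}_\pm>0$. Positivity of the factors gives the rest. I will handle the degenerate case $\lambda^{(\ell)}_-=0$ (positive-only mode) separately. There both sides of the reconstruction vanish identically, so the identity holds trivially under the convention that the branch is zero, or the degenerate branch can simply be excluded.

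\emph{Consequence.} The final clause follows immediately. Each branch is determined by its $p$ one-dimensional PD curves together with the single scalar $\bar m_\pm/Z_\pm$, and the stage is the difference of its two branches.
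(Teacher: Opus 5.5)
Your proposal is correct and follows the paper's proof. You pull the deterministic factor $\lambda_\pm^{(\ell)}\hat m^{(\ell)}_{\pm,j}(x_j)$ out of the expectation to obtain $C^{(\ell)}_{\pm,j}$, then relate $\prod_j \mathrm{PD}^{(\ell)}_{\pm,j}$ to $\hat m^{(\ell)}_\pm$ by an $\mathbf{x}$-independent constant and identify that constant by taking expectations; the paper inverts each identity where you multiply them, which is the same computation. Your explicit nondegeneracy discussion is a useful addition that the paper leaves implicit, although $\lambda_\pm^{(\ell)} \neq 0$ already suffices (not only $\lambda_\pm^{(\ell)}>0$), since the factors are strictly positive and bounded.
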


\begin{proof}[Proof of \cref{prop:pd_identity}]
  For the forward direction, by definition of partial dependence and the separable structure:
  \begin{align}
    \mathrm{PD}_{\pm,j}^{(\ell)}(x_j)
     & := \mathbb{E}\!\left[\hat{m}_{\pm}^{(\ell)}(x_j, X_{(-j)})\right]                                                              \\
     & = \mathbb{E}\!\left[\lambda_{\pm}^{(\ell)}\,\hat{m}_{\pm,j}^{(\ell)}(x_j)\prod_{k\neq j}\hat{m}_{\pm,k}^{(\ell)}(X_k)\right]       \\
     & = \lambda_{\pm}^{(\ell)}\,\hat{m}_{\pm,j}^{(\ell)}(x_j)\cdot \mathbb{E}\!\left[\prod_{k\neq j}\hat{m}_{\pm,k}^{(\ell)}(X_k)\right] \\
     & =: \lambda_{\pm}^{(\ell)}\,\hat{m}_{\pm,j}^{(\ell)}(x_j)\cdot c^{(\ell)}_{\pm,j}                                                                           \\
     & = C^{(\ell)}_{\pm,j}\,\hat{m}_{\pm,j}^{(\ell)}(x_j),
  \end{align}
  where the factorization follows because $\hat{m}_{\pm,j}^{(\ell)}(x_j)$ is constant with respect to the expectation over $X_{(-j)}$.

  For the reconstruction of $\hat{m}_{\pm}^{(\ell)}$ from the partial dependence functions, define
  \[
    \bar{m}_{\pm}^{(\ell)} := \mathbb{E}\!\left[\hat{m}_{\pm}^{(\ell)}(X)\right], \qquad
    Z_{\pm}^{(\ell)} := \mathbb{E}\!\left[\prod_{j=1}^p \mathrm{PD}_{\pm,j}^{(\ell)}(X_j)\right].
  \]
  Inverting \eqref{eq:app:pd_factorization} as $\hat{m}_{\pm,j}^{(\ell)}(x_j) = \mathrm{PD}_{\pm,j}^{(\ell)}(x_j) / C^{(\ell)}_{\pm,j}$ and substituting into the definition of $\hat{m}_{\pm}^{(\ell)}$:
  \begin{align}
    \hat{m}_{\pm}^{(\ell)}(\mathbf{x})
     & = \lambda_{\pm}^{(\ell)} \prod_{j=1}^p \hat{m}_{\pm,j}^{(\ell)}(x_j)                                                                              \\
     & = \lambda_{\pm}^{(\ell)} \prod_{j=1}^p \frac{\mathrm{PD}_{\pm,j}^{(\ell)}(x_j)}{C^{(\ell)}_{\pm,j}}     \label{eq:app:third_line}                                          \\
     & = \frac{\lambda_{\pm}^{(\ell)}}{\prod_{j=1}^p C^{(\ell)}_{\pm,j}} \prod_{j=1}^p \mathrm{PD}_{\pm,j}^{(\ell)}(x_j)                                  \\
     & = \frac{\bar{m}_{\pm}^{(\ell)}}{Z_{\pm}^{(\ell)}} \prod_{j=1}^p \mathrm{PD}_{\pm,j}^{(\ell)}(x_j),
  \end{align}
  where the last equality identifies the unknown prefactor by taking expectations on both sides of \eqref{eq:app:third_line}: $\bar{m}_{\pm}^{(\ell)} = \tfrac{\lambda_{\pm}^{(\ell)}}{\prod_j C^{(\ell)}_{\pm,j}}\, Z_{\pm}^{(\ell)}$, so $\tfrac{\lambda_{\pm}^{(\ell)}}{\prod_j C^{(\ell)}_{\pm,j}} = \bar{m}_{\pm}^{(\ell)}/Z_{\pm}^{(\ell)}$.
\end{proof}

\section{Backbone and Tilt Reconstruction from Partial Dependence}
\label{app:backbone_tilt_pd_proof}

The forward parametrization \cref{eq:m_as_backbone_tilt} in the main text is a bijection between $(b_j^{(\ell)}, d_j^{(\ell)})$ and $(\hat{m}_{+,j}^{(\ell)}, \hat{m}_{-,j}^{(\ell)})$ with closed-form inverse
\begin{equation}
  \begin{split}
    b_j^{(\ell)}(x_j) & = \sqrt{\hat{m}_{+,j}^{(\ell)}(x_j)\hat{m}_{-,j}^{(\ell)}(x_j)}, \\
    d_j^{(\ell)}(x_j) & = \frac{1}{2}\log\!\left(\frac{\hat{m}_{+,j}^{(\ell)}(x_j)}{\hat{m}_{-,j}^{(\ell)}(x_j)}\right).
  \end{split}
  \label{eq:backbone_tilt_inverse}
\end{equation}

The backbone/tilt parametrization can also be recovered directly from the 1D partial dependence functions. This provides an interpretable decomposition of each stage's effect into magnitude (backbone) and signed direction (tilt).

\begin{proposition}[Backbone and Tilt Reconstruction from partial dependence plots]
  \label{prop:backbone_tilt_pd}
  For a TSL stage $\ell$ in the two-tensor regime ($\lambda_+^{(\ell)},\lambda_-^{(\ell)} > 0$) and coordinate $j$, let $\mathrm{PD}_{+,j}^{(\ell)}(x_j)$ and $\mathrm{PD}_{-,j}^{(\ell)}(x_j)$ denote the 1D partial dependence functions of the positive and negative products, as defined in \eqref{eq:pd_factorization}. In the normalized gauge fixed by \eqref{eq:backbone_tilt_inverse}, the backbone $b_j^{(\ell)}(x_j)$ and tilt $d_j^{(\ell)}(x_j)$ reconstruct as:
  \begin{equation}
    b_j^{(\ell)}(x_j) = \frac{1}{\sqrt{C_{+,j}^{(\ell)} C_{-,j}^{(\ell)}}} \sqrt{\mathrm{PD}_{+,j}^{(\ell)}(x_j) \, \mathrm{PD}_{-,j}^{(\ell)}(x_j)}, \qquad d_j^{(\ell)}(x_j) = \frac{1}{2} \log\left(\frac{\mathrm{PD}_{+,j}^{(\ell)}(x_j)\, C_{-,j}^{(\ell)}}{\mathrm{PD}_{-,j}^{(\ell)}(x_j)\,C_{+,j}^{(\ell)}}\right),
    \label{eq:backbone_tilt_from_pd}
  \end{equation}
  provided that $b_j(x_j) \neq 0$.
\end{proposition}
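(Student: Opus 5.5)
The proof is a direct substitution of the factorization from \cref{prop:pd_identity} into the closed-form inverse \eqref{eq:backbone_tilt_inverse}. First, I would invoke \eqref{eq:pd_factorization} for both branches at stage $\ell$ and coordinate $j$:
\[
\mathrm{PD}_{\pm,j}^{(\ell)}(x_j) = C_{\pm,j}^{(\ell)}\,\hat{m}_{\pm,j}^{(\ell)}(x_j), \qquad C_{\pm,j}^{(\ell)} = \lambda_\pm^{(\ell)} c_{\pm,j}^{(\ell)} .
\]
The next step is to check that both constants are strictly positive, so that the relation can be inverted as $\hat{m}_{\pm,j}^{(\ell)} = \mathrm{PD}_{\pm,j}^{(\ell)}/C_{\pm,j}^{(\ell)}$. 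This is where the two-tensor hypothesis $\lambda_\pm^{(\ell)}>0$ enters. Positivity of all univariate components $\hat{m}_{\pm,k}^{(\ell)}>0$ then gives $c_{\pm,j}^{(\ell)} = \mathbb{E}[\prod_{k\neq j}\hat{m}_{\pm,k}^{(\ell)}(X_k)]>0$, assuming this expectation is finite. Finiteness is automatic for the piecewise-constant grid factors, which are clamped to $[v_{\min},v_{\max}]$, and otherwise I would simply assume integrability.

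With the inversion in hand, I would substitute into the inverse map \eqref{eq:backbone_tilt_inverse}. For the backbone,
\[
b_j^{(\ell)}(x_j) = \sqrt{\hat{m}_{+,j}^{(\ell)}(x_j)\,\hat{m}_{-,j}^{(\ell)}(x_j)} = \frac{\sqrt{\mathrm{PD}_{+,j}^{(\ell)}(x_j)\,\mathrm{PD}_{-,j}^{(\ell)}(x_j)}}{\sqrt{C_{+,j}^{(\ell)}C_{-,j}^{(\ell)}}} .
\]
For the tilt,
\[
d_j^{(\ell)}(x_j) = \tfrac12\log\bigl(\hat{m}_{+,j}^{(\ell)}(x_j)/\hat{m}_{-,j}^{(\ell)}(x_j)\bigr) = \tfrac12\log\bigl(\mathrm{PD}_{+,j}^{(\ell)}(x_j)\,C_{-,j}^{(\ell)} \big/ \mathrm{PD}_{-,j}^{(\ell)}(x_j)\,C_{+,j}^{(\ell)}\bigr).
\]
Splitting the logarithm gives the additive form $\tfrac12\log(\mathrm{PD}_{+,j}^{(\ell)}/\mathrm{PD}_{-,j}^{(\ell)}) + \gamma_j^{(\ell)}$ with $\gamma_j^{(\ell)} = \tfrac12\log(C_{-,j}^{(\ell)}/C_{+,j}^{(\ell)})$, as quoted in the main text.

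The proviso $b_j(x_j)\neq 0$ guarantees that both partial dependence values are strictly positive, so the square root and the logarithm are well defined and the quotients make sense. Under the stated positivity this is automatic, and the proviso only matters in degenerate limits.

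The main obstacle is not the algebra but the meaning of ``normalized gauge.'' The map from $(\hat{m}_{+,j},\hat{m}_{-,j})$ to $(b_j,d_j)$ is only defined after the stage scalars and the rescaling ambiguity are fixed. I would state explicitly that the proof takes the factors $\hat{m}_{\pm,j}^{(\ell)}$ in the same representative used to define $C_{\pm,j}^{(\ell)}$. If the factors were rescaled, $\hat{m}_{\pm,j}\mapsto a_\pm \hat{m}_{\pm,j}$ with the inverse scale absorbed into $\lambda_\pm$, then $C_{\pm,j}$ would change by $1/a_\pm$ while $\mathrm{PD}_{\pm,j}$ stays fixed. The reconstructed $b_j,d_j$ therefore transform consistently, and the identities hold in whichever gauge is fixed. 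I would close by noting that under the normalization of \cref{sec:normalization} the constants $C_{\pm,j}^{(\ell)}$ are determined by the fitted model, so both formulas are fully explicit.
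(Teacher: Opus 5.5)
Your proposal is correct and follows the paper's proof. Both substitute the factorization $\mathrm{PD}_{\pm,j}^{(\ell)} = \lambda_\pm^{(\ell)} c_{\pm,j}^{(\ell)}\,\hat{m}_{\pm,j}^{(\ell)}$ from \cref{prop:pd_identity} into the inverse map \eqref{eq:backbone_tilt_inverse}; the paper multiplies the two identities and takes a square root for the backbone, then divides for the tilt. Your explicit check that $C_{\pm,j}^{(\ell)}>0$ and your gauge-covariance remark are correct additions the paper leaves implicit, though one justification is slightly off: the clamp $[v_{\min},v_{\max}]$ bounds each per-split multiplicative update rather than the factor values, and finiteness of $c_{\pm,j}^{(\ell)}$ instead holds because each fitted factor is piecewise constant with finitely many positive values.
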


To interpret the result, start from the 1D partial dependence of the \emph{stage} on coordinate $j$:
\[
  \mathrm{PD}_{j}^{(\ell)}(x_j)= \mathrm{PD}_{+,j}^{(\ell)}(x_j) - \mathrm{PD}_{-,j}^{(\ell)}(x_j),
\]
where the last equality follows from $\hat{m}^{(\ell)}=\hat{m}_{+}^{(\ell)}-\hat{m}_{-}^{(\ell)}$ and linearity of expectation. Because $\mathrm{PD}_{j}^{(\ell)}$ is a signed difference of two positive products, \emph{cancellation between $\mathrm{PD}_{+,j}^{(\ell)}$ and $\mathrm{PD}_{-,j}^{(\ell)}$} can occur: $\mathrm{PD}_{j}^{(\ell)}(x_j)$ may be close to zero even when both $\mathrm{PD}_{+,j}^{(\ell)}(x_j)$ and $\mathrm{PD}_{-,j}^{(\ell)}(x_j)$ are large, so the stage's PD curve can appear flat and hide substantial sensitivity in magnitude. This is one way TSL can fit data whose 1D PD has been masked by higher-order interactions (\cref{sec:synthetic_masked_interaction}): the two positive products carry the magnitude that the signed difference erases. The backbone/tilt view makes this explicit: \eqref{eq:backbone_tilt_from_pd} shows that $b_j^{(\ell)}(x_j)$ is proportional to $\sqrt{\mathrm{PD}_{+,j}^{(\ell)}(x_j)\,\mathrm{PD}_{-,j}^{(\ell)}(x_j)}$, a positive quantity that cannot cancel and therefore reflects \emph{magnitude} even when $\mathrm{PD}_{j}^{(\ell)}$ is near zero. The tilt, recovered from the ratio of the partial dependence curves, captures the signed imbalance between the two products and provides directionality.

\begin{proof}[Proof of \cref{prop:backbone_tilt_pd}]
  From \eqref{eq:app:pd_factorization}, we have:
  \begin{align}
    \mathrm{PD}_{+,j}^{(\ell)}(x_j) & = \lambda_{+}^{(\ell)} \, \hat{m}_{+,j}^{(\ell)}(x_j) \cdot c_{+,j}^{(\ell)}, \\
    \mathrm{PD}_{-,j}^{(\ell)}(x_j) & = \lambda_{-}^{(\ell)} \, \hat{m}_{-,j}^{(\ell)}(x_j) \cdot c_{-,j}^{(\ell)}.
  \end{align}
  Multiplying these two expressions and taking the square root:
  \begin{align}
    \sqrt{\mathrm{PD}_{+,j}^{(\ell)}(x_j) \, \mathrm{PD}_{-,j}^{(\ell)}(x_j)} & = \sqrt{\lambda_{+}^{(\ell)} \lambda_{-}^{(\ell)} \, \hat{m}_{+,j}^{(\ell)}(x_j) \hat{m}_{-,j}^{(\ell)}(x_j) \, c_{+,j}^{(\ell)} c_{-,j}^{(\ell)}}            \\
                                                                              & = \sqrt{\lambda_{+}^{(\ell)} \lambda_{-}^{(\ell)} \, c_{+,j}^{(\ell)} c_{-,j}^{(\ell)}} \cdot \sqrt{\hat{m}_{+,j}^{(\ell)}(x_j) \hat{m}_{-,j}^{(\ell)}(x_j)}.
  \end{align}
  From the definition of $c_{\pm,j}^{(\ell)} = \mathbb{E}[\prod_{k \neq j} \hat{m}_{\pm,k}^{(\ell)}(X_k)]$, we have:
  \begin{equation}
    c_{+,j}^{(\ell)} c_{-,j}^{(\ell)} = \mathbb{E}\left[\prod_{k \neq j} \hat{m}_{+,k}^{(\ell)}(X_k)\right] \mathbb{E}\left[\prod_{k \neq j} \hat{m}_{-,k}^{(\ell)}(X_k)\right].
  \end{equation}
  Recalling $C_{\pm,j}^{(\ell)} = \lambda_{\pm}^{(\ell)} c_{\pm,j}^{(\ell)}$, this gives
  \begin{equation}
    \sqrt{\mathrm{PD}_{+,j}^{(\ell)}(x_j) \, \mathrm{PD}_{-,j}^{(\ell)}(x_j)} = \sqrt{C_{+,j}^{(\ell)} C_{-,j}^{(\ell)}} \cdot b_j^{(\ell)}(x_j),
  \end{equation}
  where $b_j^{(\ell)}(x_j) = \sqrt{\hat{m}_{+,j}^{(\ell)}(x_j) \hat{m}_{-,j}^{(\ell)}(x_j)}$ from the backbone definition. Rearranging yields the backbone formula in \eqref{eq:backbone_tilt_from_pd}.

  For the tilt, from the backbone/tilt parametrization we have $\hat{m}_{+,j}^{(\ell)}(x_j) = b_j^{(\ell)}(x_j) e^{d_j^{(\ell)}(x_j)}$ and $\hat{m}_{-,j}^{(\ell)}(x_j) = b_j^{(\ell)}(x_j) e^{-d_j^{(\ell)}(x_j)}$. Under the assumption that $b_j^{(\ell)}(x_j) \neq 0$ (equivalently, $\hat{m}_{+,j}^{(\ell)}(x_j),\hat{m}_{-,j}^{(\ell)}(x_j) > 0$), the ratio and logarithm are well-defined, so:
  \begin{equation}
    d_j^{(\ell)}(x_j) = \frac{1}{2} \log\left(\frac{\hat{m}_{+,j}^{(\ell)}(x_j)}{\hat{m}_{-,j}^{(\ell)}(x_j)}\right).
  \end{equation}
  Substituting the partial dependence expressions from \eqref{eq:pd_factorization}:
  \begin{align}
    d_j^{(\ell)}(x_j) & = \frac{1}{2} \log\left(\frac{\hat{m}_{+,j}^{(\ell)}(x_j)}{\hat{m}_{-,j}^{(\ell)}(x_j)}\right)                                                                                      \\
                      & = \frac{1}{2} \log\left(\frac{\mathrm{PD}_{+,j}^{(\ell)}(x_j) / (\lambda_{+}^{(\ell)} c_{+,j}^{(\ell)})}{\mathrm{PD}_{-,j}^{(\ell)}(x_j) / (\lambda_{-}^{(\ell)} c_{-,j}^{(\ell)})}\right)  \\
                      & = \frac{1}{2} \log\left(\frac{\mathrm{PD}_{+,j}^{(\ell)}(x_j)\, \lambda_{-}^{(\ell)}\, c_{-,j}^{(\ell)}}{\mathrm{PD}_{-,j}^{(\ell)}(x_j)\, \lambda_{+}^{(\ell)}\, c_{+,j}^{(\ell)}}\right),
  \end{align}
  which yields the tilt formula in \eqref{eq:backbone_tilt_from_pd}.
\end{proof}

\section{Synthetic Masked Interaction: Full Mathematical Treatment}
\label{app:synthetic_masked_interaction}

\textbf{Data-generating process.} The true function is $m(x_1, x_2, x_3) = x_1^2 x_2 (1 + x_3)$ with independent Gaussians $X_1 \sim \mathcal{N}(0, 1^2)$, $X_2 \sim \mathcal{N}(-1, 1.5^2)$, $X_3 \sim \mathcal{N}(-1, 0.8^2)$ and additive Gaussian noise $\varepsilon \sim \mathcal{N}(0, 0.25^2)$, so the response is $Y = m(X_1, X_2, X_3) + \varepsilon$. The choice $\mathbb{E}[1 + X_3] = 0$ produces the masking effect discussed below.

\textbf{Hyperparameter search.} For each model (TSL, EBM, XGBoost) we tune hyperparameters via Optuna's TPE sampler \citep{optuna_2019,bergstra2011algorithms} with $20{,}000$ trials; each trial samples a fresh training set of $10{,}000$ points from the DGP above and evaluates MSE on an independent $50{,}000$-point test sample drawn from the same DGP. The search spaces for TSL ($R \le 2$), EBM, and XGBoost are those in \cref{app:tab:hyperparams_tsl_interp}, \ref{app:tab:hyperparams_ebm}, and \ref{app:tab:hyperparams_xgb_interp} respectively, except that for XGBoost we fix \texttt{max\_depth}$=3$ and tune the number of trees in $[10, 100]$. After tuning, the best configuration for each model is refit on a fresh sample of $100{,}000$ points and used to produce the partial-dependence and ICE figures discussed in the main text (\cref{fig:synthetic_masked_interaction}) and the additional ICE figures of \cref{app:synthetic_additional_figures}.

\textbf{Population masking.} By construction, the population 1D partial dependence for $x_1$ is:
\begin{equation}
  \text{PD}_1(x_1) = \mathbb{E}[m(x_1, X_2, X_3)] = \mathbb{E}[x_1^2 X_2 (1 + X_3)] = x_1^2 \mathbb{E}[X_2] \mathbb{E}[1 + X_3] = 0,
\end{equation}
since $\mathbb{E}[X_2] = -1$ and $\mathbb{E}[1+X_3] = 0$. Despite $x_1$ having a strong quadratic effect through the third-order interaction, the 1D marginal operator integrates the interaction signal to zero, leaving the PD identically flat. This is not cancellation between two terms in a model decomposition --- it is the marginalization itself, acting on a higher-order interaction, that erases the signature of $x_1$.

The ICE curves reveal the hidden signal: for each sample $i$, the ICE curve is:
\begin{equation}
  \text{ICE}_i(x_1) = m(x_1, x_2^{(i)}, x_3^{(i)}) = x_1^2 x_2^{(i)} (1 + x_3^{(i)}) = c_i x_1^2,
\end{equation}
where $c_i = x_2^{(i)}(1 + x_3^{(i)})$ is the signed amplitude. The mean of these amplitudes is zero: $\mathbb{E}[c_i] = \mathbb{E}[X_2(1+X_3)] = \mathbb{E}[X_2]\mathbb{E}[1+X_3] = 0$, explaining why the partial dependence plot is flat.

TSL addresses this limitation by exposing per-stage partial dependence plots of $\hat{m}_+^{(\ell)}$ and $\hat{m}_-^{(\ell)}$ naturally via the backbone structure. The stage model separates positive and negative mass before subtraction: $\hat{m}^{(\ell)}(\mathbf{x}) = \hat{m}_+^{(\ell)}(\mathbf{x}) - \hat{m}_-^{(\ell)}(\mathbf{x})$ with $\hat{m}_{\pm,j}^{(\ell)} > 0$ (main eq.~\eqref{eq:estimator}). The backbone/tilt parametrization defines the per-axis backbone $b_j^{(\ell)} = \sqrt{\hat{m}_{+,j}^{(\ell)} \hat{m}_{-,j}^{(\ell)}}$ as a shared magnitude between the two products.

For this masked-interaction example, TSL learns to represent the quadratic effect $x_1^2$ in the backbone $b_1^{(\ell)}(x_1)$, while the signed amplitude $c_i$ is captured through the interaction with $x_2$ and $x_3$. The per-stage partial dependence plots of $\hat{m}_+$ and $\hat{m}_-$ for $x_1$ expose nontrivial structure even when the signed partial dependence is flat, demonstrating that the backbone provides a stable 1D summary of effect magnitude without requiring practitioners to invent nonstandard statistics.

\section{Proofs of Theoretical Results}

\subsection{Approximation Rate of TSL}
\label{app:universal}

Let $\mathcal{X} = \mathcal{X}_1 \times \cdots \times \mathcal{X}_p$ be the feature space, and let $\mathcal{H} = L^2(P_X)$ be the Hilbert space of square-integrable functions on $\mathcal{X}$ with respect to the background distribution $P_X$. We first define the set of univariate piecewise constant functions for each dimension $j \in [p]$ as
\begin{equation}
  \mathcal{S}_j = \left\{ h: \mathcal{X}_j \to \mathbb{R} \,\middle|\, h(t) = \sum_{k=1}^K c_k \mathbbm{1}_{I_k}(t), K \in \mathbb{N}, \bigcup_{k=1}^K I_k = \mathcal{X}_j \right\}.
\end{equation}
Let $\mathcal{D}_p^+ \subset L^2(P_X)$ denote the dictionary of normalized non-negative rank-one product functions
\begin{equation}
  \mathcal{D}_p^+
  =
  \left\{
  g(x)=\prod_{j=1}^p h_j(x_j)
  \,\middle|\,
  h_j\in\mathcal{S}_j,\
  g(x)\geq 0,\
  \|g\|_{L^2(P_X)}=1
  \right\}.
\end{equation}
In particular, each fitted positive grid tensor (\cref{app:grid_tensor}) is a product of non-negative univariate piecewise-constant factors, so its normalized version is \emph{exactly} an element of $\mathcal{D}_p^+$. Note that fitted TSL components are positive by \eqref{eq:estimator}; the dictionary $\mathcal{D}_p^+$ is the larger closure that also includes atoms vanishing on subsets, which the OGA construction below uses via indicator functions.

Our approach follows the analysis of \cite{barron2008approximation}. We introduce the following class of functions which are those that admit an expansion in the positive dictionary $\mathcal{D}_p^+$ with non-negative atoms and signed coefficients.
\begin{equation}
  \mathcal{V}_1(\mathcal{D}_p^+)
  =
  \left\{
  f=\sum_{k=1}^{\infty}\lambda_k g_k
  \,\middle|\,
  g_k\in\mathcal{D}_p^+,\
  \lambda_k\in\mathbb{R},\
  \sum_{k=1}^{\infty}|\lambda_k|<\infty
  \right\}.
\end{equation}
Although the atoms in $\mathcal{D}_p^+$ are non-negative, the expansion coefficients are real-valued. Thus negative contributions are represented by negative coefficients, equivalently by placing the corresponding positive rank-one product as the negative product of an ordered-difference TSL stage.

In \cite{barron2008approximation}, this class is denoted as $\mathcal{L}_1(\mathcal{D})$, we however adopt the notation $\mathcal{V}_1(\mathcal{D}_p^+)$ to avoid confusion with the $L^1$ norm. We define the hypothesis space for the TSL functions as sums of non-negative rank-1 products with real-valued coefficients
\begin{equation}
  \mathcal{F}_r = \left\{f: \mathcal{X} \to \mathbb{R} \,\middle|\, f(x) = \sum_{\ell=1}^r \lambda_\ell g_\ell (\mathbf{x}), \lambda_\ell \in \mathbb{R}, g_\ell \in \mathcal{D}_p^+\right\}.
\end{equation}

We also define the variation norm
\begin{equation}
  \|f\|_{\mathcal{V}_1(\mathcal{D}_p^+)}
  :=
  \inf
  \left\{
  \sum_k|\lambda_k|
  :
  f=\sum_k \lambda_k g_k,\
  g_k\in\mathcal{D}_p^+,\
  \lambda_k\in\mathbb{R}
  \right\}.
\end{equation}

\cite{barron2008approximation} analyzes the behavior of the orthogonal greedy algorithm (OGA) for approximating a target function $f \in \mathcal{V}_1(\mathcal{D}_p^+)$ by a sequence of functions $f_\ell \in \mathcal{F}_\ell$. In OGA, the algorithm is initialized with $f_0 := 0$, and at each step $\ell$, we define the residual $\rho_{\ell-1} := f - f_{\ell-1}$ (we use $\rho$ rather than $r$ for the residual function to avoid collision with the OGA atom-count $r$ in \cref{thm:approx_rate} and \cref{prop:universal}). The algorithm then selects the function $g_\ell \in \mathcal{D}_p^+$ that maximizes the correlation with the residual, i.e.,
\begin{equation}
  g_\ell := \argmax_{g \in \mathcal{D}_p^+} |\langle \rho_{\ell-1}, g \rangle|.
\end{equation}
The selected atom is the one whose best signed scalar multiple gives the largest one-step reduction in residual norm:
\begin{equation}
  g_\ell\in
  \arg\max_{g\in\mathcal{D}_p^+}
  |\langle \rho_{\ell-1},g\rangle|
  =
  \arg\min_{g\in\mathcal{D}_p^+}
  \min_{\alpha\in\mathbb{R}}
  \|\rho_{\ell-1}-\alpha g\|_{L^2(P_X)}^2.
\end{equation}
The approximation $f_\ell$ is then the orthogonal projection of $f$ onto $V_\ell = \text{span}(\{g_1, \ldots, g_\ell\})$. This means that $f_\ell = \sum_{k=1}^{\ell} \lambda_k^{(\ell)} g_k$, where the coefficients $\{\lambda_k^{(\ell)}\}_{k=1}^{\ell}$ are re-optimized at each step $\ell$. This backfitting step updates all previous coefficients at each iteration. Our TSL algorithm follows this OGA/backfitting viewpoint: after selecting a new component, we refit coefficients on the span of the selected components by least squares projection (in finite samples), which is the regime analyzed by the greedy-approximation guarantees cited here.

The question remains: what exactly does the function class $\mathcal{V}_1(\mathcal{D}_p^+)$ represent? The bounded variation norm constraint is saying that $f$ admits a sparse representation of the functions in $\mathcal{D}_p^+$. In other words, $f$ is well approximated by a sum of a few functions from $\mathcal{D}_p^+$. When the dimension $p$ becomes large, the condition that $f \in \mathcal{V}_1(\mathcal{D}_p^+)$ becomes more restrictive. \cite{barron2008approximation} specifically mentions that in the case where we are working on $H:= L^2([0,1]^p)$, and $D_p$ is the wavelet basis, then $\mathcal{V}_1(D_p)$ is a subset of the Besov space $B_1^s (L^1)$ which means that $f$ needs to have all its derivatives of order less than or equal to $p/2$ in $L^1$. In particular, it might seem that \cref{thm:approx_rate} avoids the curse of dimensionality, but this is not the case, the target function class just becomes more restrictive as the dimension $p$ increases.
Stated differently, the apparent $O(1/\sqrt{r})$ rate comes at the cost of a target class whose smoothness constraint strengthens with dimension: as $p$ grows we require mixed differentiability up to order $p$ (in the anchored case, integrability of the full $p$-th order mixed derivative) in $L^1$.

The class $\mathcal{V}_1(\mathcal{D}_p^+)$ seems quite abstract without a concrete example. We will argue that $\mathcal{V}_1(\mathcal{D}_p^+)$ can be used to approximate a large class of functions. We will restrict our attention to the case where $\mathcal{X} = [0,1]^p$, and argue that the Sobolev space of dominant mixed smoothness $\mathcal{W}_{\text{mix}}^{(1,1)}([0,1]^p)$ can be approximated well by $\mathcal{V}_1(\mathcal{D}_p^+)$. This space consists of functions with $L^1$ integrable mixed partial derivatives up to order $p$. We define for $\boldsymbol{\alpha} = (\alpha_1, \alpha_2, \ldots, \alpha_p) \in \mathbb{N}_0^p$, the mixed partial derivative of order $\boldsymbol{\alpha}$ is defined as
\begin{equation}
  D_{\boldsymbol{\alpha}} f(\mathbf{x}) = \frac{\partial^{\|\boldsymbol{\alpha}\|_1} f(\mathbf{x})}{\partial x_1^{\alpha_1} \partial x_2^{\alpha_2} \cdots \partial x_p^{\alpha_p}},
\end{equation}
where
\begin{equation}
  \|\boldsymbol{\alpha}\|_\infty = \max_{j \in [p]} \alpha_j \quad \text{and} \quad \|\boldsymbol{\alpha}\|_1 = \sum_{j \in [p]} \alpha_j.
\end{equation}
We consider the following class of functions:
\begin{equation}
  \mathcal{W}_{\text{mix}}^{(1,1)}(\mathcal{X}) = \left\{f: \mathcal{X} \to \mathbb{R} \,\middle|\, D_{\boldsymbol{\alpha}} f \in L^1(\mathcal{X}) \text{ for all } \|\boldsymbol{\alpha}\|_\infty \leq 1 \right\}.
\end{equation}
If we restrict our attention to $\mathcal{X} = [0,1]^p$ and consider functions $f \in \mathcal{W}_{\text{mix}}^{(1,1)}(\mathcal{X})$ that are anchored at zero (i.e. $f(\mathbf{x}) = 0$ whenever $x_j = 0$ for some $j$), then the condition that $D_{\boldsymbol{\alpha}} f \in L^1(\mathcal{X})$ for all $\|\boldsymbol{\alpha}\|_\infty \leq 1$ is equivalent to the requirement that only the highest order partial derivative is absolutely integrable, i.e. that $D_{\boldsymbol{\alpha}} f \in L^1(\mathcal{X})$ for $\boldsymbol{\alpha} = (1,1,\ldots,1)$. This is a consequence of the multivariate fundamental theorem of calculus.

We will use the following form of Barron et al.'s OGA certificate bound.

\begin{theorem}[Barron et al. (2008), Theorem 2.3]
  \label{thm:barron_theorem}
  Let $H$ be a Hilbert space and let $\mathcal{D}\subset H$ be a normalized dictionary. For any $f\in H$, let $f_r$ be the $r$-term OGA approximation to $f$ with respect to $\mathcal{D}$, using exact greedy selection. Then for every $h\in\mathcal{V}_1(\mathcal{D})$,
  \begin{equation}
    \|f-f_r\|_H^2
    \leq
    \|f-h\|_H^2
    +
    \frac{4\|h\|_{\mathcal{V}_1(\mathcal{D})}^2}{r}.
  \end{equation}
  Here $\mathcal{V}_1(\mathcal{D})$ is Barron et al.'s $\mathcal{L}_1(\mathcal{D})$ variation class.
\end{theorem}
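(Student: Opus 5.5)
The plan is the standard Barron--Cohen--Dahmen--DeVore argument: derive a one-step energy decrease for the OGA residual, lower-bound the greedy correlation using the comparison function $h$, and solve the resulting scalar recursion. Throughout, write $\rho_k := f - f_k$, $\delta := \|f-h\|_H$ and $M := \|h\|_{\mathcal{V}_1(\mathcal{D})}$. We may assume $M<\infty$ and $M>0$; the case $M=0$ is trivial since then $h=0$ and $\|\rho_r\|\le\|f\|$.

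\emph{Step 1 (one-step decrease).} Since $f_k$ is the orthogonal projection of $f$ onto $V_k \ni f_{k-1}+\alpha g_k$ for every $\alpha \in \mathbb{R}$, we have $\|\rho_k\|^2 \le \|\rho_{k-1}-\alpha g_k\|^2$. Taking $\alpha = \langle \rho_{k-1},g_k\rangle$ and using $\|g_k\|=1$ gives
\[
\|\rho_k\|^2 \le \|\rho_{k-1}\|^2 - \langle \rho_{k-1},g_k\rangle^2 .
\]
In particular $\|\rho_k\|$ is nonincreasing in $k$.

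\emph{Step 2 (lower bound on the greedy correlation).} By orthogonality, $\langle \rho_{k-1}, f_{k-1}\rangle = 0$, so
\[
\|\rho_{k-1}\|^2 = \langle \rho_{k-1}, f\rangle = \langle \rho_{k-1}, f-h\rangle + \langle \rho_{k-1}, h\rangle .
\]
To control the second term, I would expand $h$ as an (approximately) optimal sum $\sum_k \lambda_k g_k$ with $\sum_k|\lambda_k| \le M+\varepsilon$ and let $\varepsilon\to 0$. This gives $\langle \rho_{k-1},h\rangle \le M\sup_{g\in\mathcal{D}}|\langle\rho_{k-1},g\rangle| = M|\langle \rho_{k-1},g_k\rangle|$, where the equality is exact greedy selection. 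The first term is bounded by Cauchy--Schwarz and AM--GM: $\|\rho_{k-1}\|\delta \le \tfrac12(\|\rho_{k-1}\|^2+\delta^2)$. Defining $a_k := \|\rho_k\|^2-\delta^2$, these combine to
\[
|\langle\rho_{k-1},g_k\rangle| \ge \frac{a_{k-1}}{2M} \quad\text{whenever } a_{k-1}>0 .
\]

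\emph{Step 3 (the recursion).} Plugging Step 2 into Step 1 yields
\[
a_k \le a_{k-1}\Bigl(1-\frac{a_{k-1}}{4M^2}\Bigr).
\]
If some $a_k\le 0$, monotonicity of $\|\rho_k\|$ keeps all later $a_j\le 0$, and the claim holds trivially. Otherwise:
\begin{itemize}
\item For the base case, maximizing $x(1-x/4M^2)$ over $x$ gives $a_1 \le M^2$, so no bound on $a_0$ is needed. One can also check $\|f\|\le \delta+M$, which gives $a_0 \le 2\delta M + M^2$, but this is not required.
\item For the inductive step, the elementary inequality $1/(x(1-t)) \ge (1+t)/x$ with $t = a_{k-1}/4M^2 \in (0,1)$ gives $1/a_k \ge 1/a_{k-1} + 1/(4M^2)$.
\end{itemize}
Summing from $k=2$ to $r$ gives $1/a_r \ge 1/M^2 + (r-1)/(4M^2) \ge r/(4M^2)$, that is, $\|f-f_r\|^2 \le \delta^2 + 4M^2/r$.

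The main obstacle is Step 2. One must pass the variation norm through the inner product with only an infimum over representations, and handle the mixed term $\|\rho\|\delta$ so that the recursion closes in the shifted quantity $a_k$ rather than in $\|\rho_k\|^2$ itself. The AM--GM split is the device I would try first. The remaining care is in the bookkeeping of the base case: starting the induction at $a_1\le M^2$ avoids needing any bound on $\|f\|$.
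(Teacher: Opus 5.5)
Your proof is correct and complete. The paper does not prove this statement itself but cites it from Barron et al.\ (2008), and your argument is essentially their original proof: the one-step decrease $\|\rho_k\|^2\le\|\rho_{k-1}\|^2-\langle\rho_{k-1},g_k\rangle^2$, the splitting $\|\rho_{k-1}\|^2=\langle\rho_{k-1},f-h\rangle+\langle\rho_{k-1},h\rangle$ with the AM--GM bound, and the recursion for $a_k=\|\rho_k\|^2-\|f-h\|_H^2$. Closing the recursion by telescoping $1/a_k$ from $a_1\le M^2$, with $M=\|h\|_{\mathcal{V}_1(\mathcal{D})}$, avoids any a priori bound on $a_0$ and even gives the slightly sharper $4M^2/(r+3)$.
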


We now state the main approximation result.

\begin{theorem}
  \label{thm:approx_rate}
  Let $\mathcal{X} = [0,1]^p$, assume $P_X\ll \lambda_p$, and let $f\in\mathcal{W}_{\mathrm{mix}}^{(1,1)}(\mathcal{X})$ be anchored at zero, meaning $f(x)=0$ whenever $x_j=0$ for at least one $j$. Let $f_r$ be the $r$-term OGA approximation with respect to the positive dictionary $\mathcal{D}_p^+$. Then
  \begin{equation}
    \|f-f_r\|_{L^2(P_X)}
    \leq
    \frac{2\|D_{(1,\ldots,1)}f\|_{L^1(\mathcal{X})}}{\sqrt{r}}.
  \end{equation}
\end{theorem}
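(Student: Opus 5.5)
The plan is to apply \cref{thm:barron_theorem} with $h=f$, which gives $\|f-f_r\|^2\le 4\|f\|_{\mathcal{V}_1(\mathcal{D}_p^+)}^2/r$. The whole argument then reduces to a single bound,
\[
\|f\|_{\mathcal{V}_1(\mathcal{D}_p^+)}\le \|D_{(1,\ldots,1)}f\|_{L^1(\mathcal{X})}.
\]

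\textbf{Integral representation.} Since $f$ is anchored at zero, the multivariate fundamental theorem of calculus gives, for a.e.\ $\mathbf{x}$,
\[
f(\mathbf{x})=\int_{[0,1]^p} D_{(1,\ldots,1)}f(\mathbf{t})\prod_{j=1}^p \mathbbm{1}_{[t_j,1]}(x_j)\,d\mathbf{t}.
\]
This holds because integrating $D_{(1,\ldots,1)}f$ over the box $[0,x_1]\times\cdots\times[0,x_p]$ returns the inclusion--exclusion sum of corner values, and every corner except $\mathbf{x}$ itself has some coordinate equal to $0$. Thus $f$ is a continuous superposition of the indicator atoms $\phi_{\mathbf{t}}(\mathbf{x})=\prod_j\mathbbm{1}_{[t_j,1]}(x_j)$. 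Each $\phi_{\mathbf{t}}$ is a non-negative product of univariate piecewise-constant functions in $\mathcal{S}_j$.

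\textbf{Normalization.} Let $g_{\mathbf{t}}=\phi_{\mathbf{t}}/\|\phi_{\mathbf{t}}\|_{L^2(P_X)}$, so $g_{\mathbf{t}}\in\mathcal{D}_p^+$. Absolute continuity of $P_X$ is used here to make the a.e.\ identity hold in $L^2(P_X)$. Since the atoms are indicators bounded by $1$, $\|\phi_{\mathbf{t}}\|_{L^2(P_X)}=P_X(\prod_j[t_j,1])^{1/2}\le 1$. Writing $f=\int D f(\mathbf{t})\|\phi_{\mathbf{t}}\|\,g_{\mathbf{t}}\,d\mathbf{t}$, the total coefficient mass is $\int|Df|\,\|\phi_{\mathbf{t}}\|\,d\mathbf{t}\le\|D_{(1,\ldots,1)}f\|_{L^1}$. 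Atoms with $\|\phi_{\mathbf{t}}\|=0$ contribute nothing and can be dropped.

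\textbf{Discretization (main obstacle).} $\mathcal{V}_1$ is defined through countable sums, so the continuous superposition has to be converted into one. I would partition $[0,1]^p$ into dyadic cells $Q$. On each cell, approximate the integral by Riemann-type sums $\sum_Q(\int_Q Df)\phi_{\mathbf{t}_Q}$, whose coefficient mass is at most $\|Df\|_{L^1}$. These sums converge to $f$ in $L^2(P_X)$: $\phi_{\mathbf{t}}$ depends continuously on $\mathbf{t}$ in $L^2(P_X)$ because $P_X$ has no atoms on hyperplanes, and dominated convergence then applies.

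It remains to pass from approximants to the norm bound. Either one shows that the closure of the $\|\cdot\|_{\mathcal{V}_1}\le M$ ball is included in Barron's class with the same bound, which is the standard convention and holds since convex hulls are closed under this limit in Barron et al. Or one applies \cref{thm:barron_theorem} with $h=h_N$ (the $N$-th Riemann sum) and lets $N\to\infty$, so that the term $\|f-h_N\|^2\to0$. I would use the latter route, since it avoids the closure subtlety entirely.

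Combining the two steps, $\|f-f_r\|^2\le 4\|Df\|_{L^1}^2/r$, and taking square roots gives the stated bound.
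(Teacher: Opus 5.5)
Your proposal is correct and follows the paper's proof essentially step for step. Both arguments use the anchored fundamental-theorem-of-calculus representation over the indicator atoms $\prod_j \mathbbm{1}(t_j\le x_j)$, normalize these atoms into $\mathcal{D}_p^+$ using $\|\phi_{\mathbf{t}}\|_{L^2(P_X)}\le 1$, build cellwise Riemann-sum certificates with variation norm at most $\|D_{(1,\ldots,1)}f\|_{L^1}$, and apply \cref{thm:barron_theorem} to the certificate before sending the mesh to zero. The one minor difference is in the convergence step: the paper picks lower-left cell representatives so that $h_{\pi_m(t)}\to h_t$ pointwise for every $x$, whereas you obtain $L^2(P_X)$-continuity of $\mathbf{t}\mapsto\phi_{\mathbf{t}}$ from $P_X$ not charging hyperplanes; both are valid under $P_X\ll\lambda_p$.
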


\begin{proof}
  Let $f$ be as stated in the theorem. Define
  \begin{equation}
    c(t):=D_{(1,\ldots,1)}f(t).
  \end{equation}
  Since $f$ is anchored at zero, the multivariate fundamental theorem of calculus gives the Sobolev representation
  \begin{equation}
    f(x)
    =
    \int_{[0,1]^p}
    c(t)
    \prod_{j=1}^p \mathbbm{1}(t_j\leq x_j)
    \,\mathrm{d}t.
  \end{equation}
  This representation holds Lebesgue-a.e., and hence also $P_X$-a.e. by the assumption $P_X\ll\lambda_p$.

  For $t\in[0,1]^p$, define
  \begin{equation}
    h_t(x):=\prod_{j=1}^p \mathbbm{1}(t_j\leq x_j).
  \end{equation}
  Whenever $\|h_t\|_{L^2(P_X)}>0$, define
  \begin{equation}
    g_t:=\frac{h_t}{\|h_t\|_{L^2(P_X)}}.
  \end{equation}
  Then $g_t\in\mathcal{D}_p^+$. Points $t$ for which $\|h_t\|_{L^2(P_X)}=0$ can be ignored, since they do not affect the $L^2(P_X)$ representation. Thus,
  \begin{equation}
    f(x) = \int_{[0,1]^p} c(t) \|h_t\|_{L^2(P_X)} g_t(x) \,\mathrm{d}t.
  \end{equation}

  Let \(\mathcal P_m=\{R_{m,k}\}_k\) be a sequence of finite rectangular
  partitions of \([0,1]^p\), where
  \[
    R_{m,k}=\prod_{j=1}^p [a_{m,k,j},b_{m,k,j})
  \]
  up to the usual convention at the right endpoint. Define the mesh size by
  \[
    |\mathcal P_m|
    :=
    \max_{k,j}(b_{m,k,j}-a_{m,k,j}),
  \]
  and assume that \(|\mathcal P_m|\to0\) as $m\to\infty$. For each cell \(R_{m,k}\), choose the
  lower-left representative
  \[
    t_{m,k}:=(a_{m,k,1},\ldots,a_{m,k,p}).
  \]
  Define \(\pi_m(t)=t_{m,k}\) whenever \(t\in R_{m,k}\). Then
  \[
    \|\pi_m(t)-t\|_\infty\leq |\mathcal P_m|\to0
  \]
  for every \(t\in[0,1]^p\), and moreover \(\pi_m(t)_j\leq t_j\) for every
  coordinate \(j\). Define the signed cell mass
  \begin{equation}
    \eta_{m,k}
    :=
    \int_{R_{m,k}} c(t)\,\mathrm{d}t,
  \end{equation}
  and the coefficients
  \begin{equation}
    \lambda_{m,k}
    :=
    \|h_{t_{m,k}}\|_{L^2(P_X)}\eta_{m,k}.
  \end{equation}
  Finally, define the certificate function which we will show lies in $\mathcal{V}_1(\mathcal{D}_p^+)$ and which approximates $f$ in $L^2(P_X)$ as $m\to\infty$:
  \begin{equation}
    s_m
    :=
    \sum_{k:\,\|h_{t_{m,k}}\|_{L^2(P_X)}>0} \lambda_{m,k}g_{t_{m,k}}
    =
    \sum_k \eta_{m,k}h_{t_{m,k}}.
  \end{equation}
  We will first show that \(s_m \to f\) in \(L^2(P_X)\). Write the construction in
  integral form
  \begin{equation}
    s_m
    =
    \sum_k \left(\int_{R_{m,k}} c(t)\,dt\right)h_{t_{m,k}}
    =
    \int_{\mathcal X} c(t)h_{\pi_m(t)}\,dt.
  \end{equation}
  Hence
  \begin{equation}
    f-s_m
    =
    \int_{\mathcal X} c(t)\bigl(h_t-h_{\pi_m(t)}\bigr)\,dt .
  \end{equation}
  By the triangle inequality for the $L^2(P_X)$-valued integral,
  \begin{equation}
    \|f-s_m\|_{L^2(P_X)}
    \leq
    \int_{\mathcal X}
    |c(t)|\,\|h_t-h_{\pi_m(t)}\|_{L^2(P_X)}\,dt .
  \end{equation}

  Fix $t\in[0,1]^p$. Since $\pi_m(t)_j\leq t_j$ and $\pi_m(t)_j\to t_j$, we have for every $x_j\in[0,1]$,
  \begin{equation}
    \mathbbm{1}\{\pi_m(t)_j\leq x_j\}
    \longrightarrow
    \mathbbm{1}\{t_j\leq x_j\}.
  \end{equation}
  Indeed, if $x_j<t_j$, then eventually $\pi_m(t)_j>x_j$, while if $x_j\geq t_j$, then $\pi_m(t)_j\leq t_j\leq x_j$ for all $m$. Therefore
  \begin{equation}
    h_{\pi_m(t)}(x)\to h_t(x)
    \qquad\text{for every }x\in[0,1]^p.
  \end{equation}
  Since $h_t$ and $h_{\pi_m(t)}$ are $\{0,1\}$-valued,
  \begin{equation}
    |h_t(x)-h_{\pi_m(t)}(x)|^2\leq 1.
  \end{equation}
  Dominated convergence with respect to $P_X$ gives
  \begin{equation}
    \|h_t-h_{\pi_m(t)}\|_{L^2(P_X)}^2
    =
    \int_{[0,1]^p} |h_t(x)-h_{\pi_m(t)}(x)|^2 \,\mathrm{d} P_X(x)
    \to 0.
  \end{equation}
  Thus
  \begin{equation}
    \|h_t-h_{\pi_m(t)}\|_{L^2(P_X)}\to0
    \qquad\text{for every }t.
  \end{equation}
  Moreover,
  \begin{equation}
    |c(t)|\,\|h_t-h_{\pi_m(t)}\|_{L^2(P_X)}
    \leq |c(t)|.
  \end{equation}
  Since $c\in L^1(\mathcal{X})$, dominated convergence in $t$ yields
  \begin{equation}
    \int_{\mathcal X}
    |c(t)|\,\|h_t-h_{\pi_m(t)}\|_{L^2(P_X)}\,dt
    \to0.
  \end{equation}
  Therefore
  \begin{equation}
    \|f-s_m\|_{L^2(P_X)}
    \to0.
  \end{equation}
  It remains to show that $s_m \in \mathcal{V}_1(\mathcal{D}_p^+)$, we show this by bounding the variation norm of $s_m$. Since $0\leq h_{t_{m,k}}\leq 1$, we have
  \begin{equation}
    \|h_{t_{m,k}}\|_{L^2(P_X)}\leq 1
  \end{equation}
  Therefore
  \begin{equation}
    \sum_k |\lambda_{m,k}|
    =
    \sum_k
    \|h_{t_{m,k}}\|_{L^2(P_X)}|\eta_{m,k}|
    \leq
    \sum_k |\eta_{m,k}|
    \leq
    \int_{[0,1]^p}|c(t)|\,\mathrm{d}t.
  \end{equation}
  Hence
  \begin{equation}
    \|s_m\|_{\mathcal{V}_1(\mathcal{D}_p^+)}
    \leq
    \|D_{(1,\ldots,1)}f\|_{L^1([0,1]^p)}.
  \end{equation}

  Applying \cref{thm:barron_theorem} with $H=L^2(P_X)$, $\mathcal{D}=\mathcal{D}_p^+$, and $h=s_m$, we obtain
  \begin{equation}
    \|f-f_r\|_{L^2(P_X)}^2
    \leq
    \|f-s_m\|_{L^2(P_X)}^2
    +
    \frac{4\|s_m\|_{\mathcal{V}_1(\mathcal{D}_p^+)}^2}{r}.
  \end{equation}
  Using the uniform variation bound above and taking $m\to\infty$ gives
  \begin{equation}
    \|f-f_r\|_{L^2(P_X)}^2
    \leq
    \frac{4\|D_{(1,\ldots,1)}f\|_{L^1([0,1]^p)}^2}{r}.
  \end{equation}
  Taking square roots proves the claim.
\end{proof}

The main assumption made in the theorem is that the target function $f$ belongs to the Sobolev space of dominant mixed smoothness $\mathcal{W}_{\text{mix}}^{(1,1)}$ (anchored at 0). Physically, this requires the mixed partial derivative of order $p$, denoted $D_{(1,\ldots,1)} f$, to be integrable. This is a strong assumption; as the dimension $p$ increases, the requirement of having $p$-th order mixed differentiability becomes increasingly restrictive. The result is an approximation guarantee, not a finite-sample statistical rate.

Barron et al. also provide approximation bounds for interpolation spaces between $L^2(P_X)$ and $\mathcal{V}_1(\mathcal{D})$. A concrete characterization of these spaces for the positive separable piecewise-constant dictionary $\mathcal{D}_p^+$ is left for future work.

\section{Additional Interpretability Figures}
\label{app:interpretability_figures}

\subsection{California Housing: Additional Figures}
\label{app:california_figures}

\textbf{How local explanations are computed.}
A TSL local explanation is read off the fitted model exactly, with no post-hoc attribution step (e.g., SHAP). The prediction at any point $\mathbf{x}$ already decomposes additively over stages, $\hat{m}(\mathbf{x}) = \sum_{\ell=1}^R \hat{m}^{(\ell)}(\mathbf{x})$, and each stage contributes the signed gap $\hat{m}^{(\ell)}(\mathbf{x}) = \hat{m}_+^{(\ell)}(\mathbf{x}) - \hat{m}_-^{(\ell)}(\mathbf{x}) = 2\,b^{(\ell)}(\mathbf{x})\sinh\!\bigl(d^{(\ell)}(\mathbf{x})\bigr)$ between its two positive products $\hat{m}_\pm^{(\ell)}(\mathbf{x}) = b^{(\ell)}(\mathbf{x})\,e^{\pm d^{(\ell)}(\mathbf{x})}$ (main eq.~\eqref{eq:backbone_tilt_stage}). Across stages these contributions sum \emph{exactly} to $\hat{m}(\mathbf{x})$---nothing is approximated or redistributed---so the explanation is a complete additive account of the prediction: the reader sees which stages move it, in which direction, and by how much, in the units of $y$. The figure renders this decomposition for a single point in three panels, described next.
The stage quantities are obtained by direct evaluation, not optimization: for each coordinate we look up the bin of $x_j$ in the stage's tree grids to read the per-feature backbone $b_j^{(\ell)}(x_j)$ (magnitude) and tilt $d_j^{(\ell)}(x_j)$ (direction), then aggregate into the stage backbone $b^{(\ell)}(\mathbf{x}) = b_0^{(\ell)}\prod_j b_j^{(\ell)}(x_j)$ and stage tilt $d^{(\ell)}(\mathbf{x}) = d_0^{(\ell)} + \sum_j d_j^{(\ell)}(x_j)$, where $b_0^{(\ell)} = \sqrt{\lambda_+^{(\ell)}\lambda_-^{(\ell)}}$ and $d_0^{(\ell)} = \tfrac{1}{2}\log(\lambda_+^{(\ell)}/\lambda_-^{(\ell)})$ absorb the stage backfitting scalars $\lambda_{\pm}^{(\ell)}$ (which carry the scale, after unit-norm normalization of the factors). For a positive-only stage ($\lambda_-^{(\ell)}=0$), $b_0^{(\ell)}=0$ and $d_0^{(\ell)}$ is undefined; the stage reduces to $\hat{m}^{(\ell)}(\mathbf{x}) = \lambda_+^{(\ell)}\prod_j b_j^{(\ell)}(x_j)\,e^{d_j^{(\ell)}(x_j)}$ (matching main eq.~\eqref{eq:estimator}).

\textbf{Left panel (stage contributions).}
Each bar is one stage's signed contribution $\hat{m}^{(\ell)}(\mathbf{x})$ to the prediction, in the units of $y$ (USD): green when positive, pink when negative, ordered by magnitude and laid out as a waterfall that accumulates from $0$ to the model output $\hat{m}(\mathbf{x})$ (marked on the axis). Because the stages sum exactly to the prediction, the panel is a complete additive account of how the point's value is built up. For the desert point (top), stage~1 supplies the bulk ($+\$298{,}745$), stage~3 is the largest downward correction ($-\$115{,}753$), and the remaining stages add progressively smaller signed adjustments, netting $\$149{,}491$.

\textbf{Middle panel (relative backbone magnitude).}
For each stage, a single bar is split into stacked per-feature segments giving each feature's share of the stage's backbone \emph{magnitude}---how strongly it gates the stage on or off at $\mathbf{x}$. We score feature $j$ by $|\log b_j^{(\ell)}(x_j)|$ (a backbone far from $1$ gates strongly; $b_j^{(\ell)}\approx 1$ is neutral), pool features below a small tolerance into ``Other,'' and normalize the rest to sum to $100\%$. Wider segments are the features that most set how active and how large the stage is; the panel reports \emph{which} features drive a stage, not the direction of their effect. For the desert point, the geographic coordinates dominate the leading stages (longitude $52\%$, latitude $18\%$ in stage~1).

\textbf{Right panel (signed tilt).}
For each stage, each bar is one feature's signed tilt $d_j^{(\ell)}(x_j)$, with the stage intercept $d_0^{(\ell)}$ shown as ``Intercept'': green is a positive push (toward the $(+)$ product), pink a negative push (toward the $(-)$ product). This supplies the direction the middle panel omits, since the sign of the stage is governed by the total tilt $d^{(\ell)}(\mathbf{x}) = d_0^{(\ell)} + \sum_j d_j^{(\ell)}(x_j)$ (main eq.~\eqref{eq:backbone_tilt_stage}); a feature with a large backbone share but near-zero tilt therefore amplifies the stage without steering its sign. The intercept carries the stage's baseline $(+)/(-)$ imbalance, with the per-feature tilts as directional adjustments around it. Stage~1 is a special case: the California response (median house value) is positive everywhere, so the stage-1 target---the raw response, since no stage has yet fired---is all-positive, and stage~1 is fit in \emph{positive-only} mode ($\lambda_-^{(1)}=0$, no $(-)$ product; \cref{app:algorithm}). Such a stage cannot change sign---it only scales the prediction upward---so the panel renders it as a single large positive baseline (the ``Intercept,'' $+34.82$ for the desert point) rather than a set of signed per-feature tilts, and its feature structure lives entirely in the magnitude (middle) panel. Sign-bearing direction enters only from stage~2 onward, where the residuals take both signs.

The two instances below show this three-panel decomposition for a desert and a coastal point.

\begin{figure}[H]
  \centering
  \begin{subfigure}[b]{\textwidth}
    \centering
    \includegraphics[width=\textwidth,height=0.42\textheight,keepaspectratio]{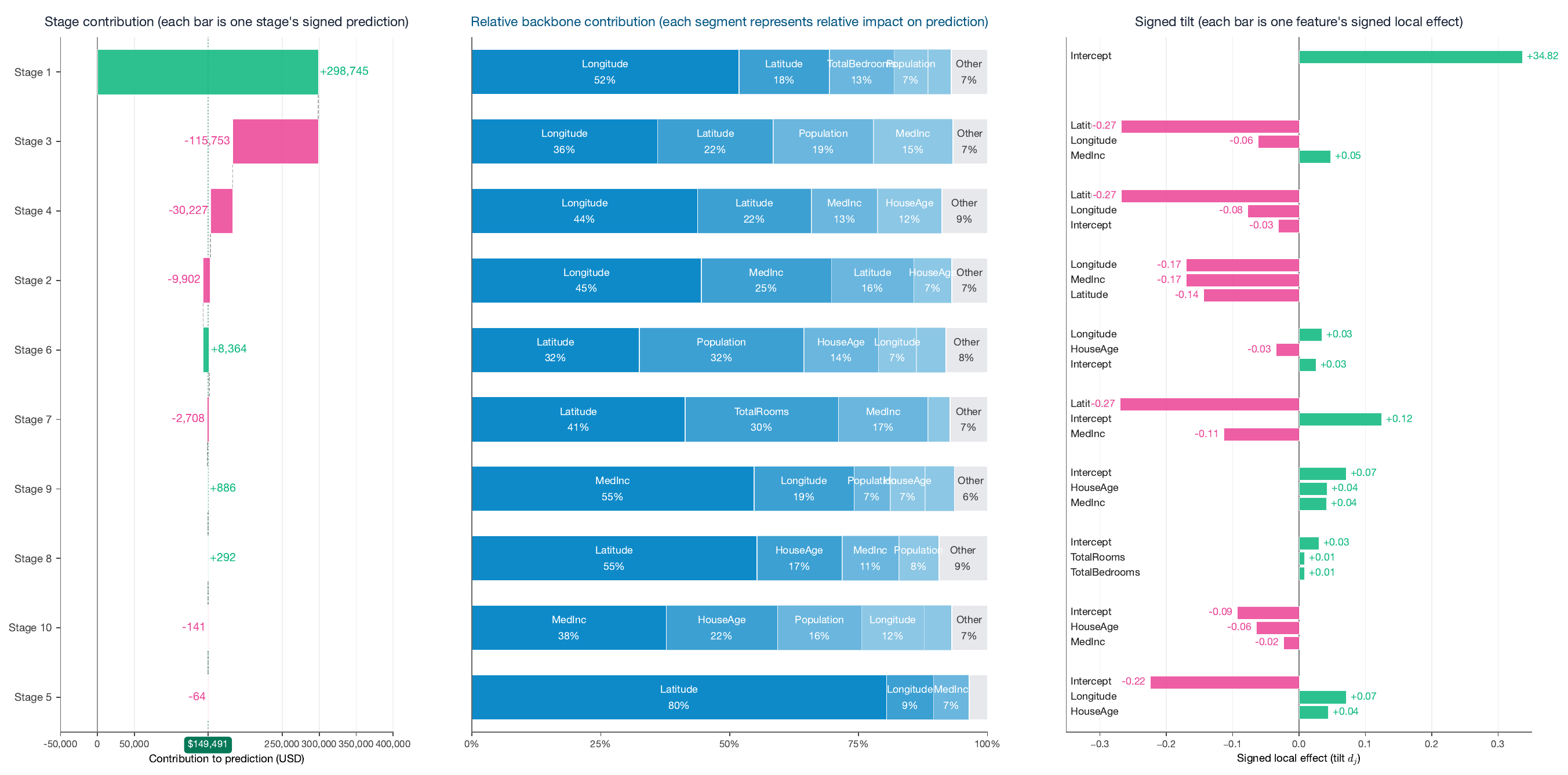}
    \caption{Desert point: longitude $-118.43$, latitude $37.40$ (eastern California, Inyo County / Owens Valley); house age $19$, total rooms $2460$, total bedrooms $405$, population $1225$, households $425$, median income $4.16$; median house value $\$141{,}500$.}
  \end{subfigure}\\[1.5em]
  \begin{subfigure}[b]{\textwidth}
    \centering
    \includegraphics[width=\textwidth,height=0.42\textheight,keepaspectratio]{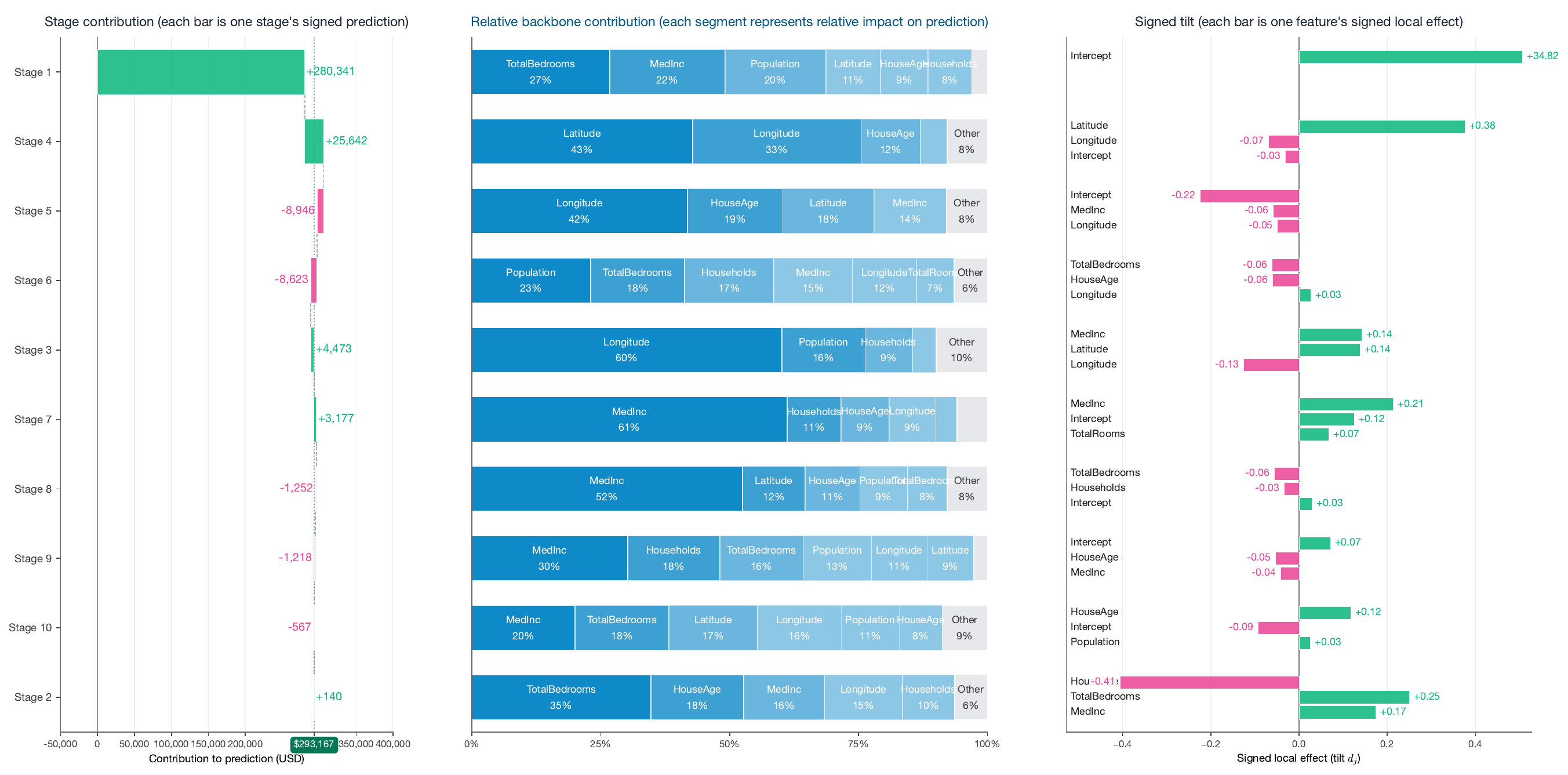}
    \caption{Coastal point: longitude $-118.25$, latitude $34.05$ (Los Angeles area, e.g.\ Long Beach / South Bay); house age $52$, total rooms $2806$, total bedrooms $1944$, population $2232$, households $1605$, median income $0.68$; median house value $\$350{,}000$.}
  \end{subfigure}
  \caption{Local explanations: stages are sorted by absolute contribution to the prediction, giving an interpretation of how each stage affects the prediction. For the desert point, stage~3 is especially important as it corrects for the prediction made in stage~1; for the coastal point, the majority of the mass is already captured in stage~1, so later stages matter less. Each California Housing record summarizes a census block group, so the total rooms, total bedrooms, population, and households reported above are block-group aggregates over hundreds of dwellings rather than counts for a single home.}
  \label{app:fig:california_local}
\end{figure}

\textbf{Parsimonious subselection via additivity.}
The stage contributions $\hat{m}^{(\ell)}(\mathbf{x})$ are additive: the prediction at $\mathbf{x}$ is $\hat{m}(\mathbf{x}) = \sum_{\ell=1}^R \hat{m}^{(\ell)}(\mathbf{x})$.
Thus, for any subset of stages $\mathcal{L} \subseteq [R]$, the partial sum $\sum_{\ell \in \mathcal{L}} \hat{m}^{(\ell)}(\mathbf{x})$ is a well-defined component of the prediction; in particular, one can always subselect certain stages and discard the rest, yielding a reduced model $\hat{m}_{\mathcal{L}}(\mathbf{x}) := \sum_{\ell \in \mathcal{L}} \hat{m}^{(\ell)}(\mathbf{x})$ that uses only $|\mathcal{L}|$ stages.
A potential use of the stage decompositions in \cref{app:fig:california_local} is the following: if the individuals we wish to predict in the future are known to lie in a particular feature region $\mathcal{R}$ (e.g., coastal or inland), we can first inspect which stages contribute most to points in $\mathcal{R}$ (e.g., via mean absolute contribution $\mathbb{E}_{X \in \mathcal{R}}[|\hat{m}^{(\ell)}(X)|]$ or similar), then subselect those stages and fit a parsimonious model $\hat{m}_{\mathcal{L}}$ that retains only the stages that matter for $\mathcal{R}$.
This way we can achieve parsimony for that region without retraining.

\begin{figure}[H]
  \centering
  \includegraphics[width=0.65\textwidth]{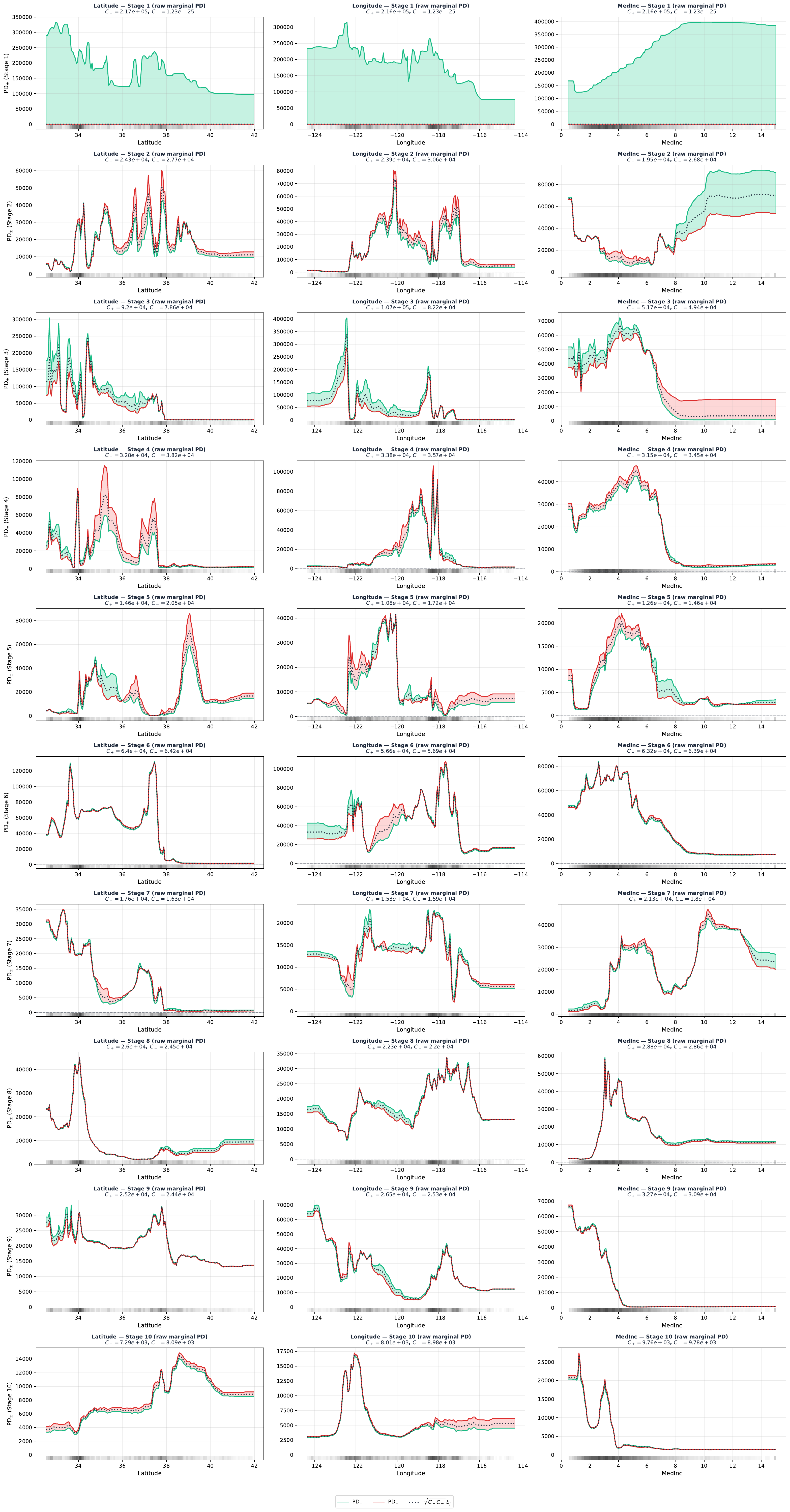}
  \caption{Partial dependence functions for the positive and negative products $\hat{m}_{+}$ and $\hat{m}_{-}$ over different stages for California housing: TSL with $R \le 10$ stages (see next figure for $R \le 2$). Features shown are the top three ranked by feature importance: latitude, longitude, and median income. The two curves in each panel are the 1D partial dependence of $\hat{m}_{+}$ and $\hat{m}_{-}$ on that feature; the shaded region between the curves represents the signed partial dependence along that axis (green positive, red negative), i.e., the net stage contribution after the ordered difference $\hat{m}_{+} - \hat{m}_{-}$ (main text \cref{prop:pd_identity} and \cref{fig:synthetic_masked_interaction}). Panel titles report the per-feature, per-stage scaling constants $C^{(\ell)}_{+,j}$ and $C^{(\ell)}_{-,j}$ defined in \cref{prop:pd_identity}.}
  \label{app:fig:california_pd_blackbox}
\end{figure}

\begin{figure}[H]
  \centering
  \includegraphics[width=\textwidth]{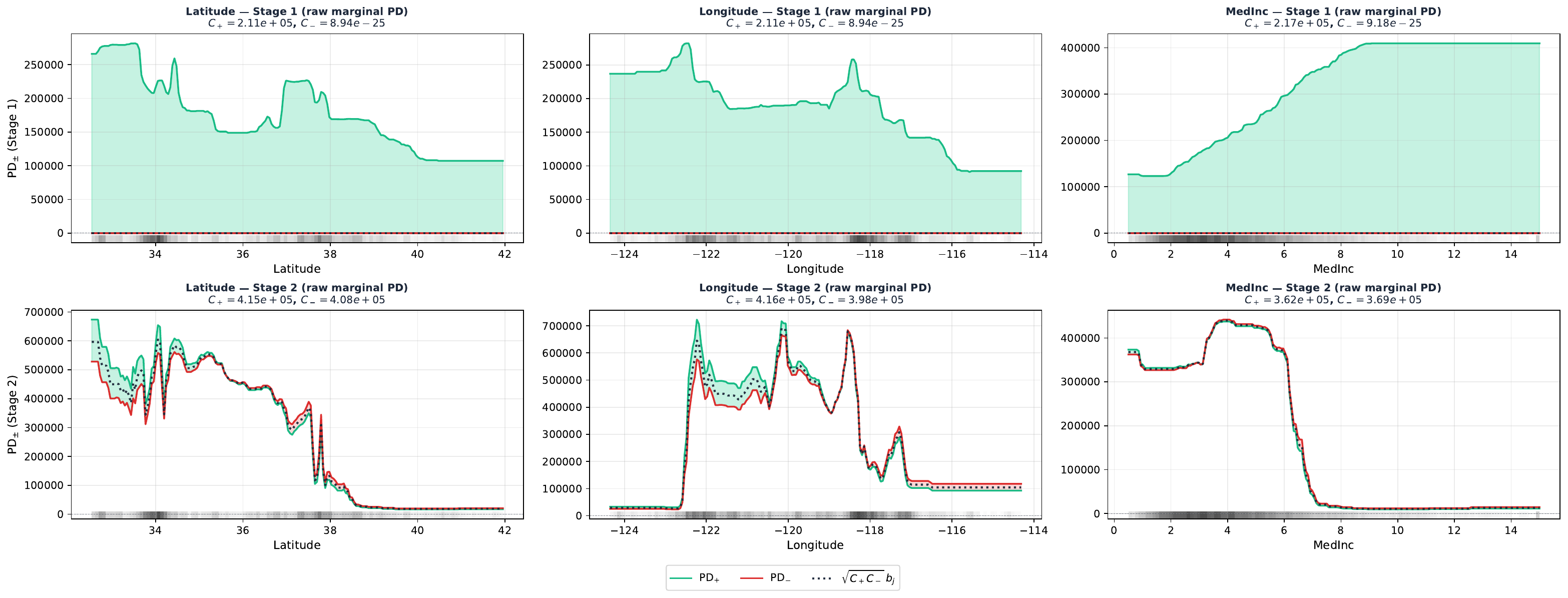}
  \caption{Partial dependence functions for $\hat{m}_{+}$ and $\hat{m}_{-}$ from TSL with $R \le 2$ stages. Features shown are the top three ranked by feature importance: latitude, longitude, and median income (same order as the previous figure); fewer stages yield a more parsimonious decomposition. Shaded region: signed partial dependence (green positive, red negative). Panel titles report the per-feature, per-stage scaling constants $C^{(\ell)}_{+,j}$ and $C^{(\ell)}_{-,j}$ defined in \cref{prop:pd_identity}.}
  \label{app:fig:california_pd_interpretable}
\end{figure}

\textbf{What feature importance is computing.}
Let $\hat{P}_X$ denote the empirical distribution of the covariates (e.g., training data).
For each stage $\ell \in [R]$ and feature $j \in [p]$, the fitted TSL model yields the univariate backbone $b_j^{(\ell)}(x_j)$ and tilt $d_j^{(\ell)}(x_j)$ (main text \cref{eq:m_as_backbone_tilt}).
Per-stage feature importance is the empirical variance of these quantities over $\hat{P}_X$:
$I_{j,\ell}^b := \widehat{\mathrm{Var}}_{X \sim \hat{P}_X}(b_j^{(\ell)}(X_j))$ (backbone) and
$I_{j,\ell}^d := \widehat{\mathrm{Var}}_{X \sim \hat{P}_X}(d_j^{(\ell)}(X_j))$ (tilt).
Recall the stage-$\ell$ contribution $\hat{m}^{(\ell)}(\mathbf{x}) = \hat{m}_+^{(\ell)}(\mathbf{x}) - \hat{m}_-^{(\ell)}(\mathbf{x})$ from main eq.~\eqref{eq:estimator}.
The stage weights are
$w_\ell := \mathbb{E}_{X \sim \hat{P}_X}[\hat{m}^{(\ell)}(X)^2] \, \big/ \, \sum_{k=1}^R \mathbb{E}_{X \sim \hat{P}_X}[\hat{m}^{(k)}(X)^2]$,
so that $w_\ell \geq 0$ and $\sum_{\ell=1}^R w_\ell = 1$.
Aggregated importance is
$I_j^b := \sum_{\ell=1}^R w_\ell I_{j,\ell}^b$
and
$I_j^d := \sum_{\ell=1}^R w_\ell I_{j,\ell}^d$.
The combined bar plot shows
$I_j := I_j^b + \gamma I_j^d$
with $\gamma = 1$.

\begin{figure}[H]
  \centering
  \includegraphics[width=\textwidth]{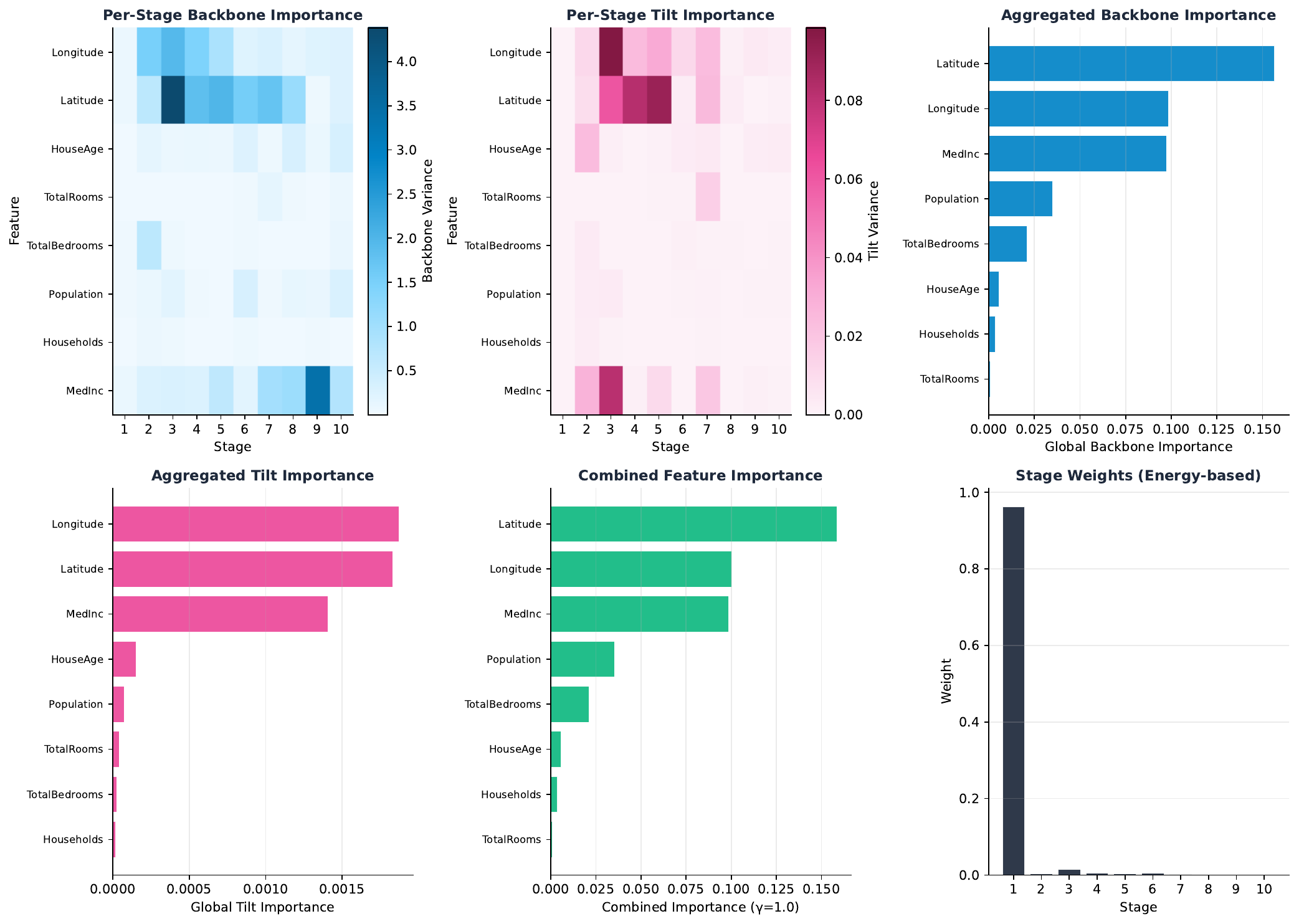}
  \caption{Feature importance for California housing (black-box TSL, $R \le 10$ stages).
    \textbf{Row 1:} (1) Per-stage backbone importance: heatmap of $I_{j,\ell}^b$ (rows = features, columns = stages); color scale is backbone variance.
    (2) Per-stage tilt importance: heatmap of $I_{j,\ell}^d$ (same layout); color scale is tilt variance.
    (3) Aggregated backbone importance: horizontal bar plot of $I_j^b$ per feature, sorted descending.
    \textbf{Row 2:} (4) Aggregated tilt importance: horizontal bar plot of $I_j^d$ per feature, sorted descending.
    (5) Combined importance: horizontal bar plot of $I_j = I_j^b + \gamma I_j^d$ ($\gamma=1$) per feature, sorted descending.
    (6) Stage weights: bar plot of $w_\ell$ by stage ($\sum_\ell w_\ell = 1$).}
  \label{app:fig:california_feature_importance_blackbox}
\end{figure}

\newpage
\subsection{Bike Sharing: Additional Figures}
\label{app:bike_sharing_figures}

We use the Bike Sharing dataset \citep{bike_sharing_275} as an interpretability case study; the figures below are the primary contribution and the RMSE comparison is illustrative only. We tuned TSL ($R \le 2$) and EBM on bike sharing the same way as for the other experiments: with Optuna and 200 iterations, using the hyperparameter search spaces in \cref{app:tab:hyperparams_tsl_interp} (TSL $R \le 2$) and \cref{app:tab:hyperparams_ebm} (EBM). Best test RMSE on a single 80/20 split: EBM $55.36$, TSL $55.19$. We do not run SepALS or the TSL (1-product) ablation here.

\begin{figure}[H]
  \centering
  \includegraphics[width=\textwidth]{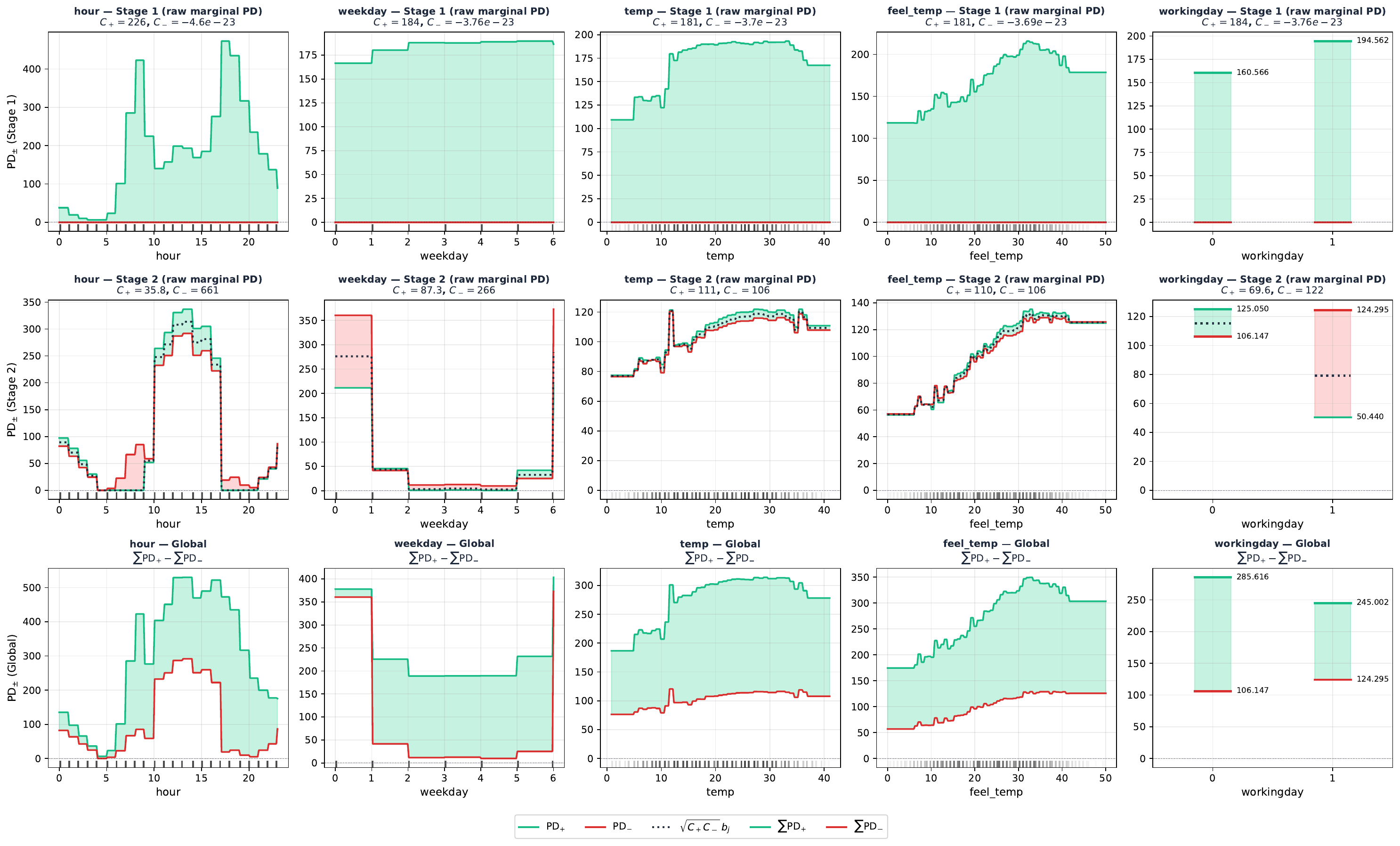}
  \caption{Partial dependence functions for the positive and negative products $\hat{m}_{+}$ and $\hat{m}_{-}$ for Bike Sharing: TSL with $R \le 2$ stages. The two curves in each panel are the 1D partial dependence of $\hat{m}_{+}$ and $\hat{m}_{-}$ on that feature; the shaded region between the curves represents the signed partial dependence along that axis (green positive, red negative), i.e., the net stage contribution after the ordered difference $\hat{m}_{+} - \hat{m}_{-}$ (main text \cref{prop:pd_identity} and \cref{fig:synthetic_masked_interaction}). Panel titles report the per-feature, per-stage scaling constants $C^{(\ell)}_{+,j}$ and $C^{(\ell)}_{-,j}$ defined in \cref{prop:pd_identity}; the dotted backbone curve is $\sqrt{C^{(\ell)}_{+,j} C^{(\ell)}_{-,j}} \cdot b_j^{(\ell)}$.}
  \label{app:fig:bike_scaled_first_order_pd}
\end{figure}

\begin{figure}[H]
  \centering
  \begin{subfigure}[t]{\textwidth}
    \centering
    \includegraphics[width=0.8\textwidth]{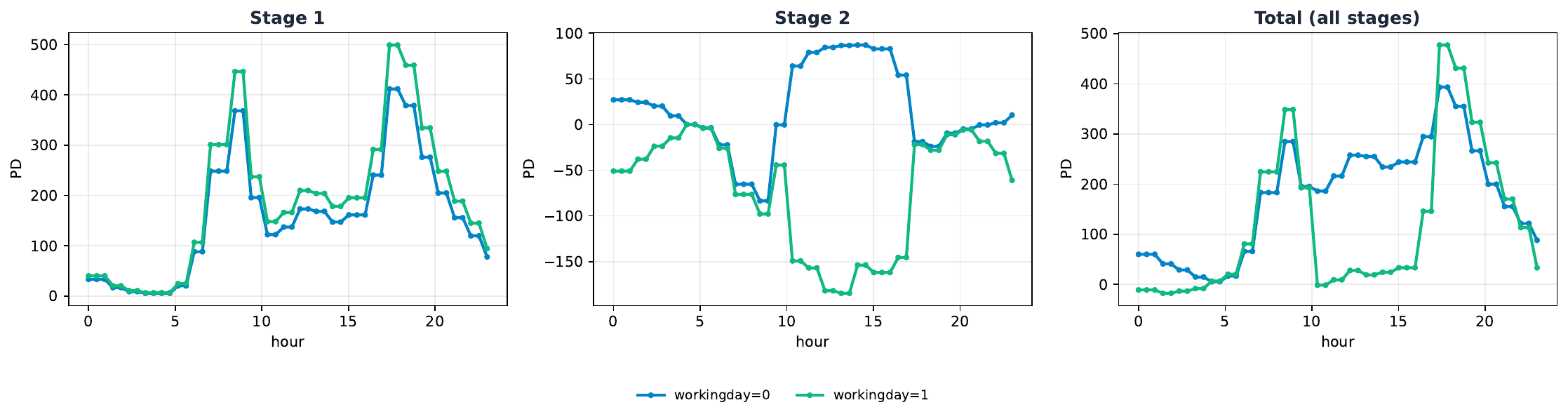}
    \caption{TSL: 2D partial dependence by stage. Panels show $\mathrm{PD}^{(1)}_{\mathrm{hour},\mathrm{workingday}}$ for stage 1, $\mathrm{PD}^{(2)}_{\mathrm{hour},\mathrm{workingday}}$ for stage 2, and their sum $\mathrm{PD}_{\mathrm{hour},\mathrm{workingday}}$; in each, two curves give partial dependence versus hour for workingday = 0 (non-weekday) and workingday = 1 (weekday).}
  \end{subfigure}\\[1em]
  \begin{subfigure}[t]{\textwidth}
    \centering
    \includegraphics[width=0.8\textwidth]{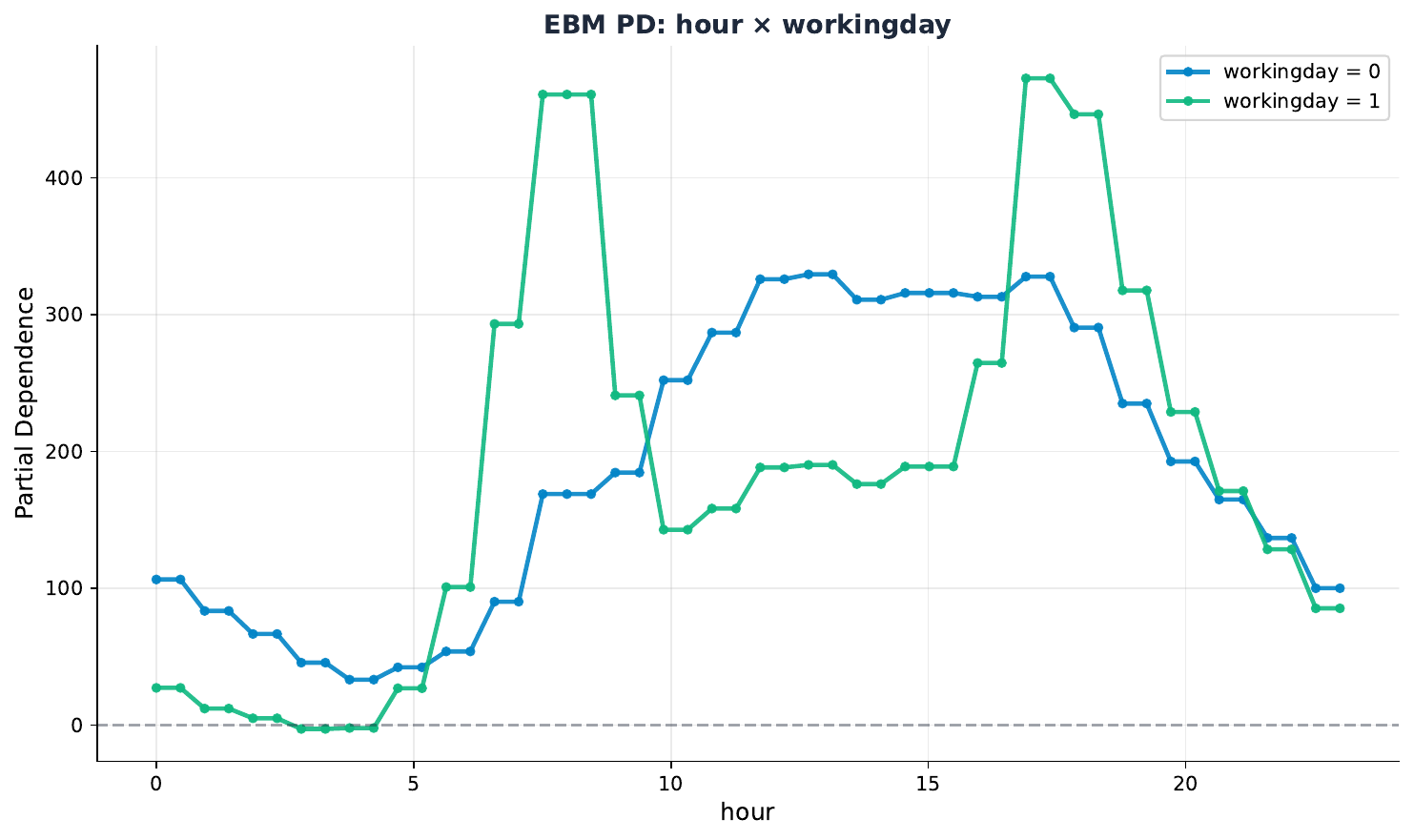}
    \caption{EBM: $\mathrm{PD}_{\mathrm{hour},\mathrm{workingday}}$. Two curves show partial dependence versus hour for workingday = 0 (non-weekday) and workingday = 1 (weekday).}
  \end{subfigure}
  \caption{2D partial dependence $\mathrm{PD}_{\mathrm{hour},\mathrm{workingday}}$ for the (hour, workingday) interaction on Bike Sharing: contribution to predicted demand versus hour of day, by weekday vs.\ non-weekday. For TSL, $\mathrm{PD}_{\mathrm{hour},\mathrm{workingday}} = \mathrm{PD}^{(1)}_{\mathrm{hour},\mathrm{workingday}} + \mathrm{PD}^{(2)}_{\mathrm{hour},\mathrm{workingday}}$, where $\mathrm{PD}^{(\ell)}_{\mathrm{hour},\mathrm{workingday}}$ is the 2D partial dependence function of stage $\ell$. In (a), stage 1 shows a similar profile for weekday and weekend (little separation). Stage 2 corrects by shifting mass toward weekend midday and away from weekday midday, sharpening the weekday commute peaks and raising the weekend midday bulge. This correction is only partial: in the final sum (rightmost panel), the weekend curve still exhibits morning and late-afternoon peaks, whereas in (b) the EBM weekend profile is flatter in those hours, with demand concentrated more clearly in midday.}
  \label{app:fig:bike_2d_pd}
\end{figure}

\subsection{Synthetic Masked Interaction: Additional Figures}
\label{app:synthetic_additional_figures}

\subsubsection{ICE Curves}

Individual Conditional Expectation (ICE) curves are a standard visualization tool: for each sample $i$, one plots the slice $x_1 \mapsto \hat{f}(x_1, x_{(-1)}^{(i)})$ as $x_1$ varies while holding other features fixed at that sample's values. ICE can reveal heterogeneity that partial dependence averages away (e.g., quadratic curves with signed amplitudes whose mean is zero), but it is purely a post-hoc visualization---it does not expose the model's structure. For any fitted predictor (TSL, EBM, or XGBoost), ICE applies the same recipe and returns a family of curves; it does not indicate whether the underlying function is additive, multiplicative, or how the interaction has been masked. Turning ICE into a single interpretable summary requires introducing and justifying an additional aggregation (e.g., quantiles, variance bands, or magnitude functionals), which is exactly what TSL's backbone provides naturally via its positive factorization. \Cref{app:fig:synthetic_ice} shows ICE curves for $x_1$ across all three models.

\begin{figure}[H]
  \centering
  \begin{subfigure}[b]{0.32\textwidth}
    \centering
    \includegraphics[width=\textwidth]{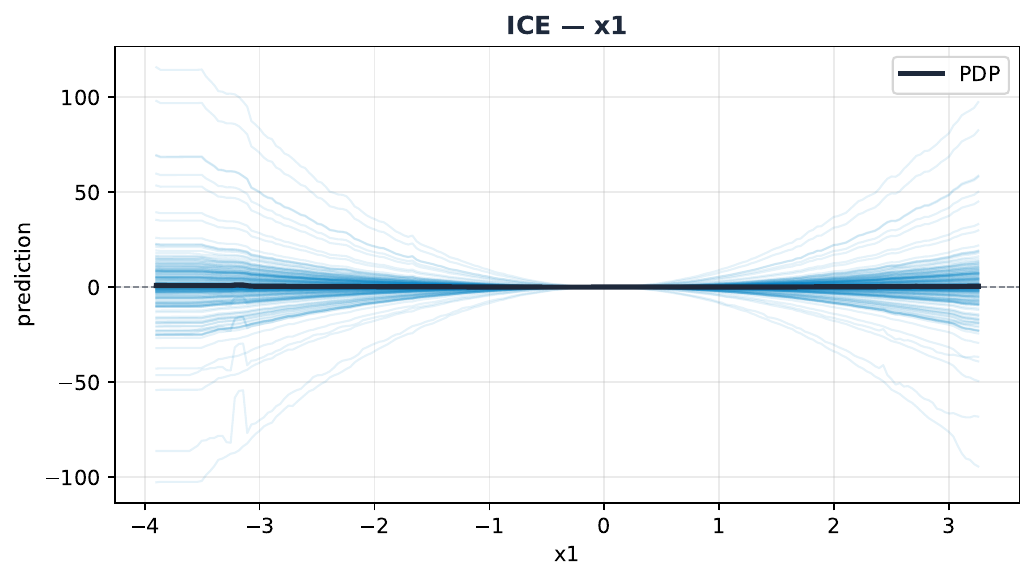}
    \caption{TSL}
  \end{subfigure}
  \begin{subfigure}[b]{0.32\textwidth}
    \centering
    \includegraphics[width=\textwidth]{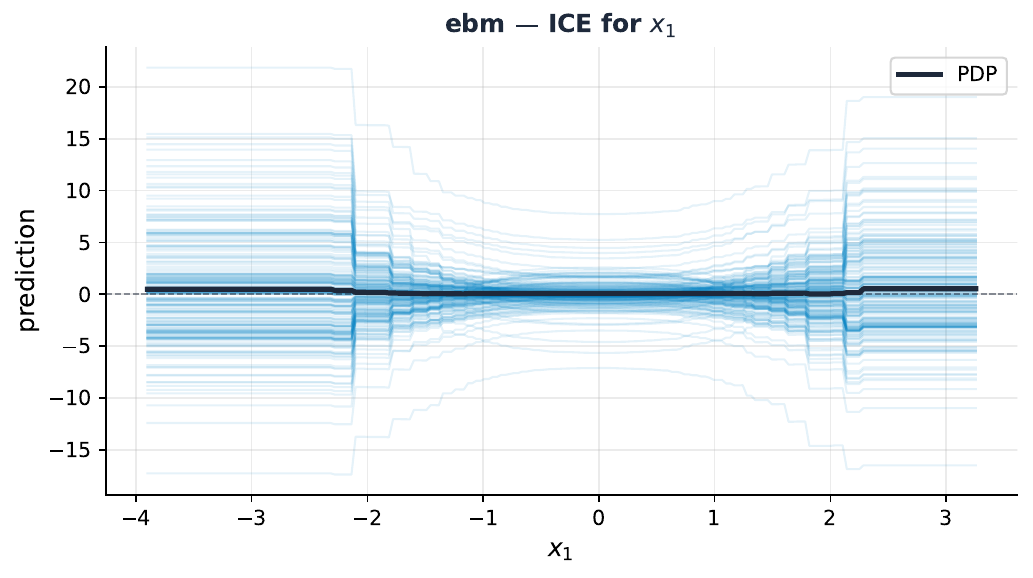}
    \caption{EBM}
  \end{subfigure}
  \begin{subfigure}[b]{0.32\textwidth}
    \centering
    \includegraphics[width=\textwidth]{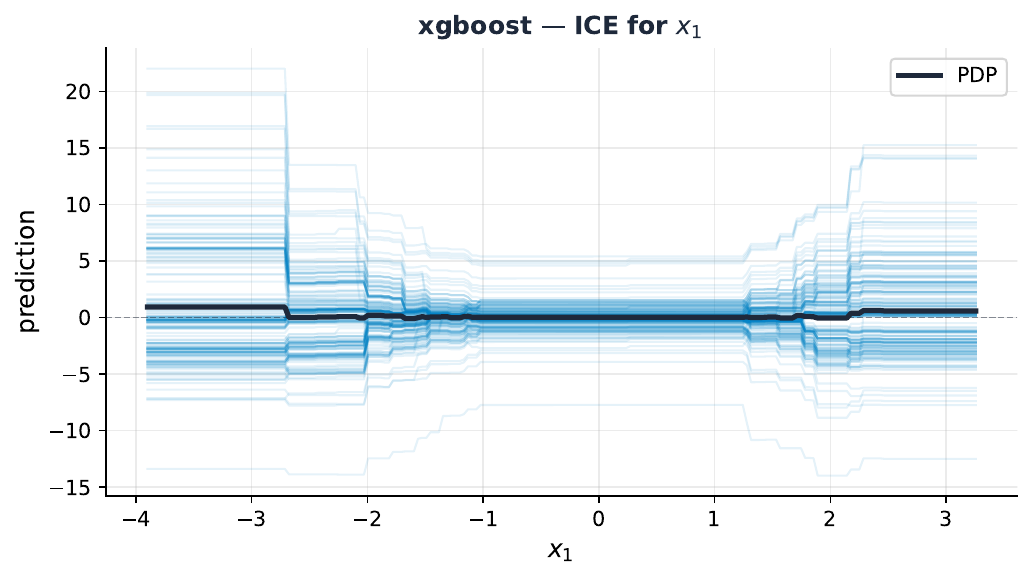}
    \caption{XGBoost}
  \end{subfigure}
  \caption{ICE curves for $x_1$ across all models, revealing heterogeneous quadratic patterns with signed amplitudes whose mean is zero. The wide spread of ICE curves with near-zero partial dependence overlay demonstrates the ``spaghetti plot'' problem: ICE reveals hidden signal but introduces interpretability challenges.}
  \label{app:fig:synthetic_ice}
\end{figure}

\begin{figure}[H]
  \centering
  \includegraphics[width=0.9\textwidth]{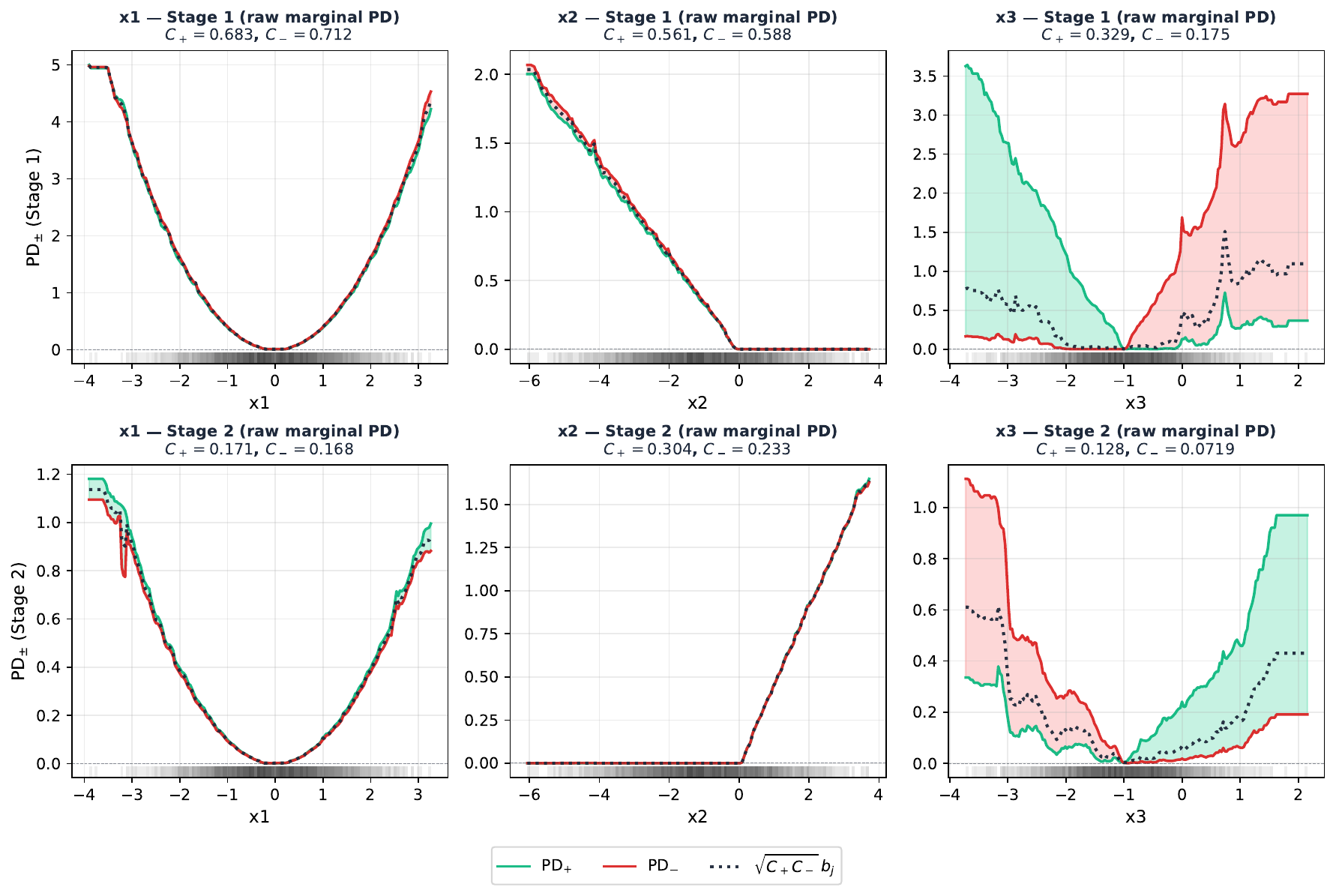}
  \caption{First-order partial dependence (TSL, synthetic setup): stage 1 (top row) and stage 2 (bottom row) for $x_1$, $x_2$, $x_3$. Each panel shows $\mathrm{PD}_{+,j}^{(\ell)}$ and $\mathrm{PD}_{-,j}^{(\ell)}$; panel titles report the per-feature, per-stage scaling constants $C^{(\ell)}_{+,j}$ and $C^{(\ell)}_{-,j}$ defined in \cref{prop:pd_identity} (main text).}
  \label{app:fig:synthetic_scaled_pd_tsl_both_stages}
\end{figure}

\subsubsection{SepALS Baseline and Sign Non-Identifiability}

For completeness we fit the SepALS baseline (\cref{app:sepals}) on the same synthetic problem under the same Optuna protocol (200 trials, 10-fold CV, search space of \cref{app:tab:hyperparams_sepals} with $r_{\max}=2$ and the \texttt{tent} basis excluded so only \texttt{monomial} is used). Cross-validation selects rank $r=1$ with monomial degree $2$; the fitted model writes, using the notation of \eqref{eq:sep_model}, as $\hat m_{\text{SepALS}}(\mathbf{x}) = s_1\, g_1^{(1)}(x_1)\,g_2^{(1)}(x_2)\,g_3^{(1)}(x_3)$ with $s_1 \approx -11.84$. \Cref{app:fig:synthetic_sepals_factors} plots the fitted univariate factors $g_j^{(1)}(x_j)$, one per feature.

\begin{figure}[H]
  \centering
  \includegraphics[width=0.9\textwidth]{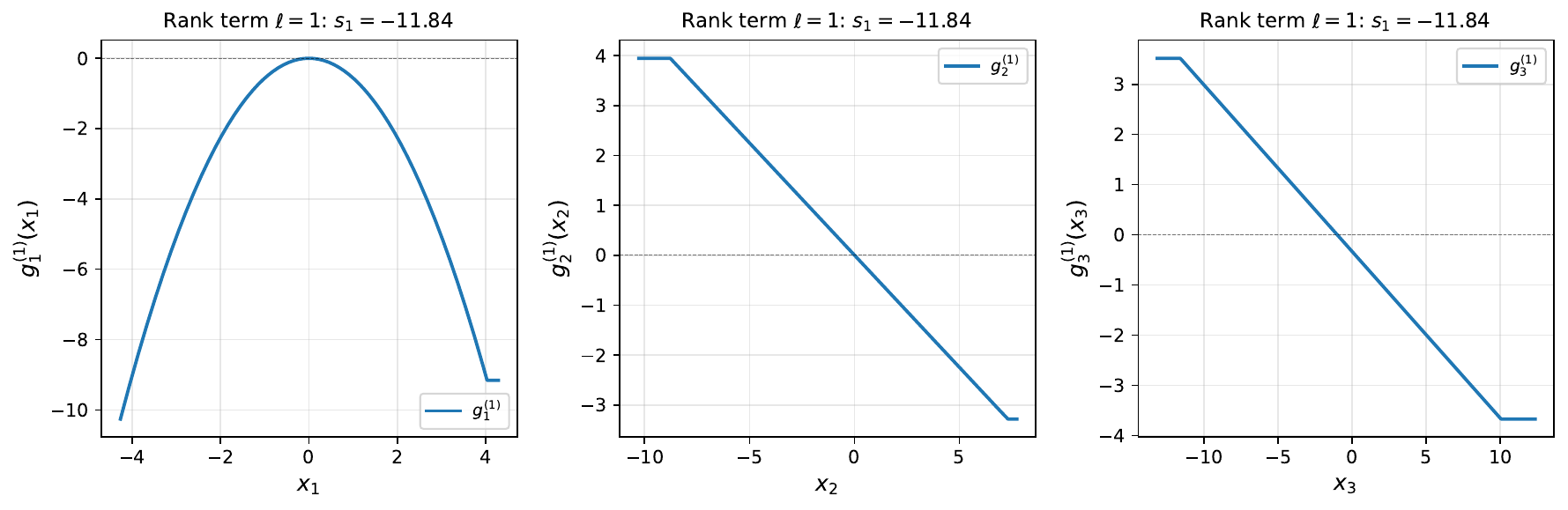}
  \caption{SepALS fitted separated factors $g_j^{(1)}(x_j)$ on the synthetic masked-interaction problem ($r=1$, monomial basis, degree $2$, $s_1 \approx -11.84$; test RMSE $0.2612$, close to the irreducible noise $0.25$). Despite SepALS being an unconstrained separable estimator, the rank-1 fit is interpretable here because the true function is itself exactly rank-1: each factor recovers a univariate component of $x_1^2 x_2 (1+x_3)$ up to a multiplicative constant absorbed in $s_1$. Reading off the panels gives $g_1^{(1)}(x_1) \approx -0.57\, x_1^2$ (downward parabola, zero at $x_1=0$), $g_2^{(1)}(x_2) \approx -0.45\, x_2$ (linear, zero at $x_2=0$), and $g_3^{(1)}(x_3) \approx -0.33\,(1+x_3)$ (linear, zero at $x_3 = -1$); the product $s_1 \prod_j g_j^{(1)}(x_j) \approx (-11.84)(-0.57)(-0.45)(-0.33)\,x_1^2 x_2 (1+x_3) \approx x_1^2 x_2 (1+x_3)$ recovers the data-generating expression, with the four minus signs from $s_1$, $g_1^{(1)}$, $g_2^{(1)}$, and $g_3^{(1)}$ cancelling out.}
  \label{app:fig:synthetic_sepals_factors}
\end{figure}

SepALS recovers an equivalent rank-1 representation of the data-generating function only up to the gauge invariances of the model in \eqref{eq:sepals_model}: the prediction is unchanged under any rescaling $(g_j^{(\ell)}, s_\ell) \mapsto (\alpha\, g_j^{(\ell)}, s_\ell/\alpha)$ for $\alpha\in\mathbb{R}\setminus\{0\}$ and, in particular, under any pairwise sign flip $(g_j^{(\ell)}, g_{j'}^{(\ell)}) \mapsto (-g_j^{(\ell)}, -g_{j'}^{(\ell)})$ for $j\neq j'$. Consequently, the sign of an individual factor $g_j^{(\ell)}$ is not identified; only the joint sign $\operatorname{sign}\bigl(s_\ell\prod_j g_j^{(\ell)}(x_j)\bigr)$ is. For the rank-1 fit of \cref{app:fig:synthetic_sepals_factors}, this means that the panel $g_2^{(1)}(x_2)$ alone is uninformative about the sign of the prediction: even though $g_2^{(1)}(x_2)>0$ for $x_2\le 0$ in the plotted gauge, a sign flip of two among $\{s_1, g_1^{(1)}, g_2^{(1)}, g_3^{(1)}\}$ yields an observationally equivalent model in which $g_2^{(1)}(x_2)<0$ on the same region. Reading off the effect of $x_2$ in isolation is therefore impossible.

\subsubsection{Directionality in TSL: Backbone and Tilt}

TSL avoids this ambiguity by representing the stage as the ordered difference of two positive products (\cref{prop:pd_identity} and \cref{eq:estimator}) and reading directionality through the backbone--tilt parametrization of \cref{eq:m_as_backbone_tilt}. From the PD identities of \cref{prop:backbone_tilt_pd}, the backbone is recovered pointwise as $b_j^{(\ell)}(x_j) = \sqrt{\mathrm{PD}_{+,j}^{(\ell)}(x_j)\,\mathrm{PD}_{-,j}^{(\ell)}(x_j)}\big/\sqrt{C_{+,j}^{(\ell)} C_{-,j}^{(\ell)}}$, while the tilt is
\begin{equation}
  d_j^{(\ell)}(x_j) \;=\; \tfrac{1}{2}\log\!\frac{\mathrm{PD}_{+,j}^{(\ell)}(x_j)}{\mathrm{PD}_{-,j}^{(\ell)}(x_j)} \;+\; \tfrac{1}{2}\log\!\frac{C_{-,j}^{(\ell)}}{C_{+,j}^{(\ell)}}.
  \label{eq:tilt_from_pd_with_offset}
\end{equation}
The second, $x_j$-independent term reflects a gauge in the stage decomposition: rescaling $\hat m_{\pm,j}^{(\ell)} \mapsto \alpha_{\pm,j}\hat m_{\pm,j}^{(\ell)}$ together with $\lambda_\pm^{(\ell)} \mapsto \lambda_\pm^{(\ell)}/\prod_j\alpha_{\pm,j}$ leaves the stage prediction unchanged but shifts $d_j^{(\ell)}$ by $\tfrac{1}{2}\log(\alpha_{+,j}/\alpha_{-,j})$. Only the \emph{variation} of $d_j^{(\ell)}$ across $x_j$ is gauge-invariant; any constant offset in $d_j^{(\ell)}$ can be absorbed into a corresponding rescaling of $\lambda_\pm^{(\ell)}$. The gauge-invariant diagnostic for whether coordinate $j$ carries directional information at stage $\ell$ is therefore not ``$d_j^{(\ell)}\approx 0$'' but ``$d_j^{(\ell)}$ approximately constant in $x_j$,'' which from \eqref{eq:tilt_from_pd_with_offset} is equivalent to $\mathrm{PD}_{+,j}^{(\ell)}/\mathrm{PD}_{-,j}^{(\ell)}$ being approximately constant in $x_j$, i.e.\ the two PD curves being approximately \emph{proportional} as functions of $x_j$. This gauge is internal to the TSL stage parametrization and is distinct from the sign-flip gauge of SepALS in \eqref{eq:sepals_model}: a constant tilt offset can be harmlessly absorbed into $\lambda_\pm^{(\ell)}$, whereas a SepALS sign flip propagates across factors and cannot be fixed by positivity.

Before turning to the figure, we note that a negative value of $d_j^{(\ell)}(x_j)$ at a point does \emph{not} by itself indicate that the variable contributes negatively to the stage's prediction there. There are two reasons. First, the zero point of $d_j^{(\ell)}$ is not identifiable: by the gauge above, a constant can be shifted between $d_j^{(\ell)}$ and the tilt intercept $d_0^{(\ell)}$ (equivalently, rescaled into $\lambda_\pm^{(\ell)}$) without changing any prediction, so the sign of $d_j^{(\ell)}$ at a point carries no absolute meaning. Second, even with the gauge fixed, the stage factorizes as
\begin{equation}
  \hat m^{(\ell)}(\mathbf{x}) \;=\; b^{(\ell)}(\mathbf{x})\bigl(e^{d^{(\ell)}(\mathbf{x})} - e^{-d^{(\ell)}(\mathbf{x})}\bigr) \;=\; 2\,b^{(\ell)}(\mathbf{x})\,\sinh\!\bigl(d^{(\ell)}(\mathbf{x})\bigr), \qquad d^{(\ell)}(\mathbf{x}) = d_0^{(\ell)} + \textstyle\sum_j d_j^{(\ell)}(x_j),
  \label{eq:stage_sign_from_accumulated_tilt}
\end{equation}
so the sign of the stage at $\mathbf{x}$ is the sign of the total tilt $d^{(\ell)}(\mathbf{x})$ (since $b^{(\ell)} > 0$ and $\sinh$ is odd and increasing): it flips when $d^{(\ell)}(\mathbf{x})$ crosses zero---equivalently, when the accumulated per-coordinate tilt $\sum_j d_j^{(\ell)}(x_j)$ crosses $-d_0^{(\ell)} = \tfrac{1}{2}\log(\lambda_-^{(\ell)}/\lambda_+^{(\ell)})$---not when any individual $d_j^{(\ell)}(x_j)$ crosses zero. A negative $d_j^{(\ell)}(x_j)$ lowers $d^{(\ell)}$, but whether that nudges the stage past zero depends on the other coordinates' tilts and on the intercept $d_0^{(\ell)}$ (equivalently $\lambda_\pm^{(\ell)}$). Per-coordinate tilt curves therefore localize \emph{where} sign-bearing variation lives (a gauge-invariant property), not the sign of the stage at any particular input.

\subsubsection{Reading the Stage-1 Factorization off the Figure}

In stage~1 of \cref{app:fig:synthetic_scaled_pd_tsl_both_stages} this proportionality holds for $j=1$ and $j=2$ but not for $j=3$: the $x_1$ and $x_2$ panels show $\mathrm{PD}_{+,j}^{(1)}$ and $\mathrm{PD}_{-,j}^{(1)}$ tracking each other up to a uniform scale (and visually coinciding, since the panel titles report $C_{+,j}^{(1)} \approx C_{-,j}^{(1)}$ for $j=1,2$, which pins the proportionality constant near $1$), while on $x_3$ the curves cross at $x_3=-1$ and are not proportional in any neighborhood. Equivalently, $d_1^{(1)}$ and $d_2^{(1)}$ are approximately constant in their respective coordinates, while $d_3^{(1)}$ carries the only sign-bearing variation. Absorbing the constant tilt levels of $d_1^{(1)}, d_2^{(1)}$ into a redefinition of $\lambda_\pm^{(1)}$ (which is exact, by the gauge above) lets us take $\hat m_{+,j}^{(1)} \approx \hat m_{-,j}^{(1)} \eqqcolon b_j^{(1)}$ for $j=1,2$, and stage~1 factorizes along the sign-bearing coordinate:
\begin{equation}
  \hat m^{(1)}(\mathbf{x}) \;\approx\; b_1^{(1)}(x_1)\, b_2^{(1)}(x_2)\,\Bigl[\lambda_{+}^{(1)}\,\hat m_{+,3}^{(1)}(x_3) - \lambda_{-}^{(1)}\,\hat m_{-,3}^{(1)}(x_3)\Bigr],
  \label{eq:stage1_factorization}
\end{equation}
where the bracketed quantity is the only signed object and is exactly the green/red shaded gap in the rightmost stage-1 panel of \cref{app:fig:synthetic_scaled_pd_tsl_both_stages}. The model-level statement readable directly off the figure is therefore: \emph{$x_1$ and $x_2$ enter the first stage only through positive magnitude factors $b_1^{(1)}, b_2^{(1)}$, and the sign of stage~1 is determined entirely by $x_3$, with the sign flip located at $x_3=-1$.} The SepALS panels of \cref{app:fig:synthetic_sepals_factors} admit no analogous reading because the gauge invariances of \eqref{eq:sepals_model} prevent attributing magnitude or direction to any individual $g_j^{(\ell)}$. We therefore argue that TSL is interpretable in this example: its positivity-constrained ordered-difference parametrization \eqref{eq:estimator} fixes the sign gauge so that each per-feature curve $\hat m_{\pm,j}^{(\ell)}$ has a uniquely-defined, positive meaning, the backbone $b_j^{(\ell)}$ summarises magnitude, the tilt $d_j^{(\ell)}$ summarises direction, and the only signed objects are the named scalars $\lambda_{\pm}^{(\ell)}$ and the per-feature signed PD $\mathrm{PD}_j^{(\ell)}$ visualised as the shaded gap. The reader can therefore read interpretation statements such as ``$x_1, x_2$ scale; $x_3$ directs, with the flip at $x_3=-1$'' directly off \cref{app:fig:synthetic_scaled_pd_tsl_both_stages}.

\subsection{Bimodal Bagged Representations: Illustrating Alignment and Filtering}
\label{app:bimodal_alignment_figures}

This subsection illustrates the bagged-grid aggregation pipeline of \cref{app:bagging_pseudocode} (\cref{alg:combine_bagged_two_tensor}) on a controlled example, making concrete why the normalization, anchoring, and similarity-filtering steps of \cref{sec:bagging} are needed. We generate $n=5000$ observations from the data-generating process
\begin{equation*}
  f(x_1,x_2) \;=\; \exp\!\bigl(\sin(x_1)\cos(x_2)\bigr) + x_1,
  \qquad x_1, x_2 \stackrel{\mathrm{iid}}{\sim} \mathrm{Uniform}[-4,4],
\end{equation*}
and fit a single TSL stage with $n_{\text{grids}}=389$ bagged grids using the hyperparameters tuned for this problem. \Cref{app:fig:bimodal_alignment} shows the resulting stage-1 backbone curves $b_1^{(1)}(x_1)$ and $b_2^{(1)}(x_2)$ across the bagged grids, organized in three rows to expose the bimodality, the role of the reference grid, and the population view.

\begin{figure}[H]
  \centering
  \includegraphics[width=\textwidth]{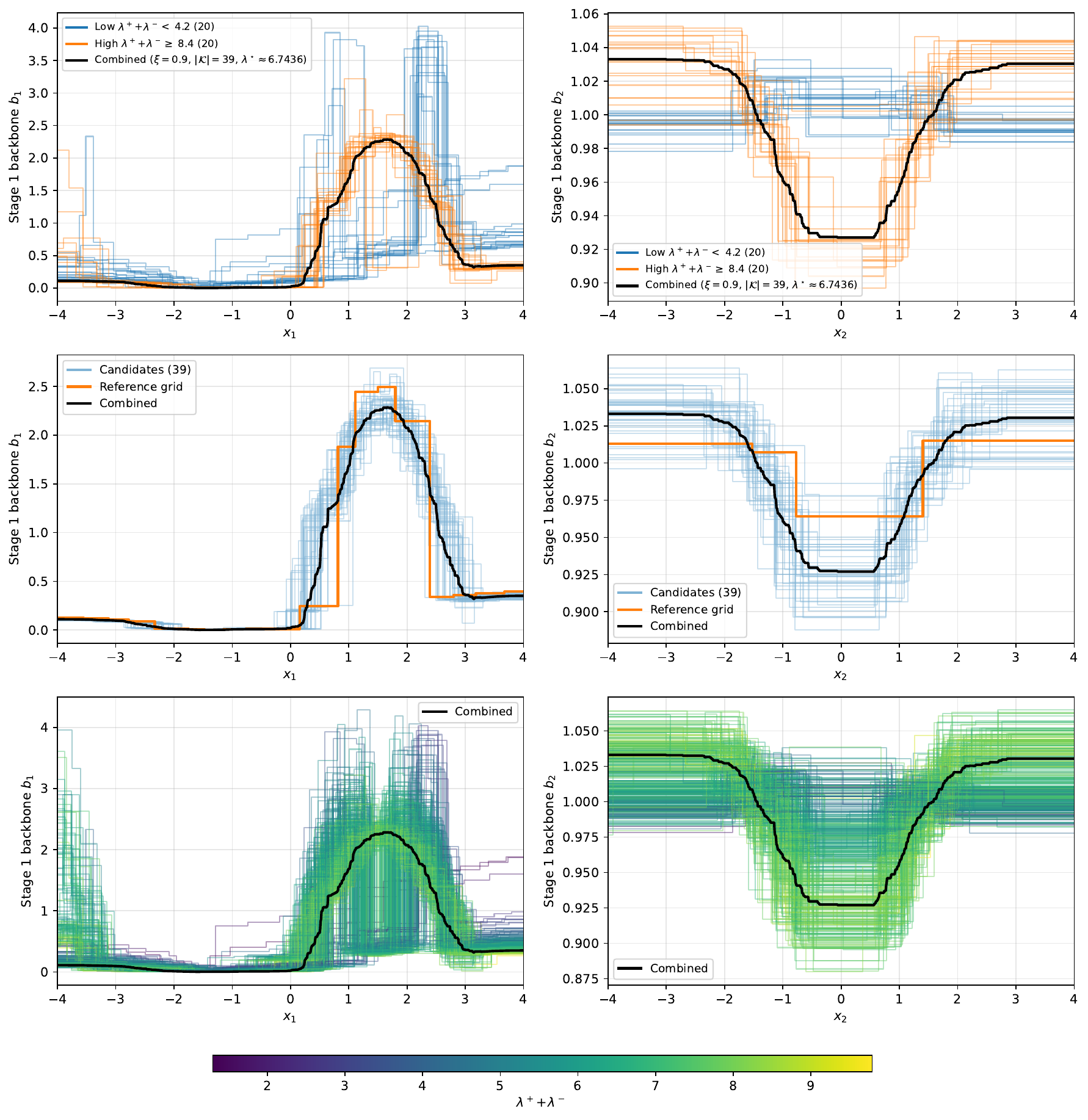}
  \caption{Bimodal bagged representations on $f(x_1,x_2)=\exp(\sin(x_1)\cos(x_2))+x_1$ ($n=5000$, $n_{\text{grids}}=389$, trimming parameter $\xi=0.9$); $\lambda_\pm$ denote the stage scales $\lambda_\pm^{(\ell)}$ from the two-tensor parametrization $\lambda_+\prod_j \hat{m}_{+,j} - \lambda_-\prod_j \hat{m}_{-,j}$ of \eqref{eq:estimator}. Columns: $b_1^{(1)}(x_1)$ (left), $b_2^{(1)}(x_2)$ (right). \textbf{Row 1 (bimodality):} the $20$ grids with smallest total scale $\lambda_++\lambda_-<4.2$ (blue) and the $20$ with largest $\lambda_++\lambda_-\ge 8.4$ (orange). Depending on the random seed, the bagged grids converge to one of two qualitatively distinct backbone representations of the \emph{same} fitted function: the high-$\lambda$ group concentrates the $x_1$ bump near $x_1\in[1,2.5]$, while the low-$\lambda$ group distributes the signal differently and the $x_2$ curve picks up a reversed orientation in part of the range. The black curve is the combined backbone produced by \cref{alg:combine_bagged_two_tensor}. \textbf{Row 2 (selection):} the kept set $\mathcal{K}$ of $\lceil(1-\xi)n_{\text{grids}}\rceil = 39$ top-ranked candidates by combined cosine similarity to the reference grid (closest to the $(\lambda_+,\lambda_-)$ centroid, orange); the competing representation is filtered out, the kept candidates are all close to the same shape, and the combined backbone (black) is a stable average. \textbf{Row 3 (population):} all $389$ candidates, colored continuously by $\lambda_++\lambda_-$, confirming that the two converged representations populate opposite ends of the $\lambda$ spectrum.}
  \label{app:fig:bimodal_alignment}
\end{figure}

\textbf{Why naive averaging fails.} Row~1 shows that independently bagged grids do not all converge to a single representation. The two product factors $b_1^{(1)}, b_2^{(1)}$ admit multiple componentwise allocations of the same fitted stage: at large $\lambda_++\lambda_-$ the $x_1$ factor carries a sharp bump and $x_2$ is nearly flat, while at small $\lambda_++\lambda_-$ the mass is redistributed across coordinates and the $x_2$ curve picks up a reversed orientation in part of the range. Both branches fit the data comparably well at the stage level---the predictions $\lambda_+^{(1)}\prod_j b_j^{(1)} - \lambda_-^{(1)}\prod_j b_j^{(1)}$ are similar---but the per-feature backbones differ substantially. Componentwise averaging is not component-aware: it cannot account for the fact that each branch compensates differently across features, and averaging incompatible representations produces a curve that minimizes neither the residual nor the per-feature shape, destroying the interpretable structure TSL is designed to provide.

\textbf{How similarity filtering resolves the issue.} Row~2 shows the output of \cref{alg:combine_bagged_two_tensor}: the reference grid $\mathcal{G}^\star$ (orange) is picked as the $(\lambda_+,\lambda_-)$-centroid medoid, and only the top $\lceil(1-\xi)n_{\text{grids}}\rceil = 39$ candidates ranked by the combined cosine similarity of \eqref{eq:sim_combined} to the reference are retained. Restricting the average to $\mathcal{K}$ yields a coherent set whose shapes all agree with $\mathcal{G}^\star$, so the combined backbone (black) is a sharp and stable estimate. The competing representation from row~1 is filtered out entirely. The trimming parameter $\xi$ is tuned by cross-validation alongside the other hyperparameters.

\textbf{Population view.} Row~3 plots all $389$ candidates colored continuously by $\lambda_++\lambda_-$. The color gradient separates by $\lambda$, confirming that the two converged backbone shapes populate opposite ends of the $\lambda$ spectrum: this is the non-rectangular-support / multiplicative-symmetry phenomenon discussed in \cref{sec:identifiability_stability} made visible at the level of fitted grids.

\textbf{Connection to the limitations.} The non-uniqueness exhibited in row~1 is the source of TSL's principal current limitation: because the bagged grids may converge to different separable representations of the same function, the similarity filter must discard grids not aligned with $\mathcal{G}^\star$, so only a fraction of the fitted bags contribute to the final stage. Developing an aggregation method that uses all bags---e.g.\ via Riemannian optimization on the positive rank-1 manifold \citep{liu2020simple}---is a natural next step.

\newpage

\section{SepALS Baseline}
\label{app:sepals}

As a separable-model baseline we implement \emph{SepALS}, a fixed-rank separable alternating least-squares estimator following the multivariate regression formulation of \citet{beylkin2009multivariate} (see also \citet{beylkin2005algorithms,garcke2010classification}). SepALS fits a sum of unconstrained separable rank-1 products
\begin{equation}
  \hat{m}_{\text{SepALS}}(\mathbf{x}) \;=\; \sum_{\ell=1}^{r} s_{\ell} \prod_{j=1}^{p} g_{j}^{(\ell)}(x_j),
  \label{eq:sepals_model}
\end{equation}
where $r$ is the \emph{separation rank} in the sense of \citet{beylkin2009multivariate} (i.e., the total number of separable products in the sum), each univariate factor $g_{j}^{(\ell)}$ is expanded in a fixed univariate basis (tent functions or monomials in our implementation) of configurable degree, and the stage scales $s_{\ell}\in\mathbb{R}$ are unconstrained. The model class in \eqref{eq:sepals_model} coincides with the separable model of \eqref{eq:sep_model} and is exactly the setting studied by Beylkin et al. \yrcite{beylkin2009multivariate}: there are no positivity, ordered-difference, or stagewise constraints, and all $rp$ univariate factors and $r$ scales are jointly optimized. Note that the separation rank $r$ used here for SepALS is distinct from the TSL stage count $R$ used in the main paper: because each TSL stage contributes two separable products ($\hat{m}_+^{(\ell)} - \hat{m}_-^{(\ell)}$), a TSL run with $R$ stages has total separation rank at most $2R$ in the sense of \eqref{eq:sepals_model} (with $2R - $ number of positive-only stages, since a positive-only stage contributes one product not two).

Following \citet{beylkin2009multivariate}, fitting proceeds by alternating least-squares (ALS) over the factor matrices: at each sweep, one mode $j$ is updated while the others are held fixed, reducing the problem to a linear least-squares system in the basis coefficients of $\{g_{j}^{(\ell)}\}_{\ell=1}^{r}$ with a Tikhonov (ridge) penalty and an optional smoothness penalty on the basis coefficients to regularize ill-conditioned modes. Stage scales $s_\ell$ are refit after every sweep. Multiple random initializations are used and the run with the lowest training residual is retained. The full hyperparameter search space (basis, degree, rank, ridge, smoothness, sweep budget, number of restarts) is reported in \cref{app:tab:hyperparams_sepals}; we use the same Optuna protocol as for the other baselines.

We use SepALS in two complexity regimes, mirroring our TSL conditions: \textbf{SepALS ($r\le 2$)}, with rank sampled from $\{1,2\}$, and \textbf{SepALS ($r\le 10$)}, with rank sampled from $\{1,\ldots,10\}$. These two settings let us compare TSL against the closest unconstrained separable baseline at matched rank budgets.

\newpage

\section{Hyperparameter Search Spaces}
\label{app:hyperparams}

All models were tuned via Optuna's TPE sampler with 200 trials; each configuration was evaluated by 10-fold cross-validation on the training split. Below we report the hyperparameter search space for each model. For TSL, both variants are interpretable: TSL with at most 2 stages ($R \le 2$) uses the search space in \cref{app:tab:hyperparams_tsl_interp}, and TSL with $R \le 10$ stages uses the search space in \cref{app:tab:hyperparams_tsl_blackbox}; throughout, $R$ denotes the number of TSL stages, distinct from the SepALS separation rank $r$. The TSL (1-product) ablation uses the same TSL ($R \le 2$) search space (\cref{app:tab:hyperparams_tsl_interp}) with two flags disabled---positivity (component positivity constraint dropped) and the ordered difference (one product per stage, $\hat{m}^{(\ell)} = \hat{m}_+^{(\ell)}$)---and $R$ extended to $\le 4$ so the total separation rank matches SepALS at $r \le 4$. SepALS (described in detail in \cref{app:sepals}) uses the search space in \cref{app:tab:hyperparams_sepals}, with the rank cap set to either $r \le 2$ or $r \le 10$. Ranges are given as distribution type and bounds; categorical choices are listed explicitly; fixed values are shown as ``fixed.''

\begin{table}[H]
  \centering
  \caption{TSL ($R \le 2$): hyperparameter search space. Implementation names and paper notation (where used) are given; $R$ is the number of TSL stages (each stage fits an ordered-difference pair $\hat{m}_+^{(\ell)} - \hat{m}_-^{(\ell)}$, so the total separation rank is at most $2R$), and $n_{\text{grids}}$ the number of bagged grid learners per stage.}
  \label{app:tab:hyperparams_tsl_interp}
  \small
  \begin{tabular}{p{4.2cm}p{5.2cm}l}
    \toprule
    Hyperparameter                & Description                                                                                                                                                                             & Search space                \\
    \midrule
    $\alpha$                      & Ridge regularization in the $2\times 2$ least-squares system for split scoring (Alg.~\ref{alg:fit_single_grid_tensor}).                                                                 & log-uniform($10^{-6}$, $1$) \\
    colsample\_bytree             & Fraction of features sampled when constructing each grid tensor (column subsampling).                                                                                                   & uniform(0.3, 1)             \\
    decay                         & Decay rate for the number of splits per stage (later stages get fewer splits when $<1$).                                                                                                & uniform(0.2, 1)             \\
    epochs ($R$)                  & Number of TSL stages.                                                                                                                                                                   & $\{1, 2\}$                  \\
    min\_interval\_samples        & Minimum observations per resulting sub-interval; only split positions with at least this many on both sides are valid. If no valid positions exist, no split is proposed for that step. & integer in [1, 100)         \\
    n\_iter                       & Max.\ grid refinement iterations (splits) per single grid tensor.                                                                                                                       & integer in [10, 251)        \\
    refinement\_strategy          & Loss for split scoring: L2 (squared error) or Huber.                                                                                                                                    & l2, huber                   \\
    similarity\_threshold         & Threshold for merging similar components when combining bagged grids.                                                                                                                   & uniform(0, 1)               \\
    split\_try                    & Number of candidate split points tried per feature at each refinement step.                                                                                                             & integer in [2, 20)          \\
    update\_clamp                 & Multiplicative updates clamped to $[e^{-\text{clamp}}, e^{\text{clamp}}]$ ($v_{\min}$, $v_{\max}$ in Alg.~\ref{alg:fit_single_grid_tensor}).                                            & uniform(0.5, 35)            \\
    n\_trees ($n_{\text{grids}}$) & Number of bagged grid learners per stage.                                                                                                                                               & fixed: 200                  \\
    \bottomrule
  \end{tabular}
\end{table}

\begin{table}[H]
  \begin{minipage}[t]{0.48\textwidth}
    \centering
    \captionof{table}{TSL ($R \le 10$): hyperparameter search space (same notation as \cref{app:tab:hyperparams_tsl_interp}; $R$ denotes the number of TSL stages).}
    \label{app:tab:hyperparams_tsl_blackbox}
    \small
    \begin{tabular}{ll}
      \toprule
      Hyperparameter         & Search space                   \\
      \midrule
      $\alpha$               & log-uniform($10^{-6}$, $1$)    \\
      colsample\_bytree      & uniform(0.3, 1)                \\
      decay                  & uniform(0.2, 1)                \\
      epochs                 & integer in $\{1, \ldots, 10\}$ \\
      min\_interval\_samples & integer in [1, 100)            \\
      n\_iter                & integer in [10, 251)           \\
      refinement\_strategy   & l2, huber                      \\
      similarity\_threshold  & uniform(0, 1)                  \\
      split\_try             & integer in [2, 20)             \\
      update\_clamp          & uniform(0.5, 35)               \\
      n\_trees               & fixed: 500                     \\
      \bottomrule
    \end{tabular}
  \end{minipage}\hfill
  \begin{minipage}[t]{0.48\textwidth}
    \centering
    \captionof{table}{SepALS fixed-rank separable ALS baseline: hyperparameter search space. For SepALS ($r \le 2$), rank is sampled from $\{1,2\}$; for SepALS ($r \le 10$), rank is sampled from $\{1,\ldots,10\}$.}
    \label{app:tab:hyperparams_sepals}
    \small
    \begin{tabular}{ll}
      \toprule
      Hyperparameter & Search space                         \\
      \midrule
      rank           & integer in $\{1, \ldots, r_{\max}\}$ \\
      degree         & integer in [2, 11)                   \\
      basis          & tent, monomial                       \\
      ridge          & log-uniform($10^{-12}$, $10^{-2}$)   \\
      smoothness     & log-uniform($10^{-10}$, $10$)        \\
      penalty\_kind  & degree, degree2, tent\_level         \\
      max\_sweeps    & integer in [10, 101)                 \\
      tol            & log-uniform($10^{-10}$, $10^{-4}$)   \\
      n\_init        & integer in [1, 6)                    \\
      refit\_scales  & fixed: true                          \\
      fit\_intercept & true, false                          \\
      \bottomrule
    \end{tabular}
  \end{minipage}
\end{table}

\begin{table}[H]
  \begin{minipage}[t]{0.48\textwidth}
    \centering
    \captionof{table}{XGBoost (interpretable): hyperparameter search space.}
    \label{app:tab:hyperparams_xgb_interp}
    \small
    \begin{tabular}{ll}
      \toprule
      Hyperparameter     & Search space           \\
      \midrule
      n\_estimators      & integer in [10, 100)   \\
      learning\_rate     & uniform(0.01, 0.3)     \\
      max\_depth         & integer in $\{1, 2\}$  \\
      subsample          & uniform(0.6, 1.0)      \\
      colsample\_bylevel & uniform(0.3, 0.9)      \\
      gamma              & log-uniform(0.01, 100) \\
      reg\_alpha         & log-uniform(0.01, 100) \\
      reg\_lambda        & uniform(0, 1)          \\
      \bottomrule
    \end{tabular}
  \end{minipage}\hfill
  \begin{minipage}[t]{0.48\textwidth}
    \centering
    \captionof{table}{XGBoost (black-box): hyperparameter search space.}
    \label{app:tab:hyperparams_xgb_blackbox}
    \small
    \begin{tabular}{ll}
      \toprule
      Hyperparameter     & Search space             \\
      \midrule
      n\_estimators      & integer in [100, 1000)   \\
      learning\_rate     & uniform(0.01, 0.3)       \\
      max\_depth         & integer in [3, 10)       \\
      subsample          & uniform(0.5, 1.0)        \\
      colsample\_bylevel & uniform(0.5, 1.0)        \\
      gamma              & log-uniform(0.001, 5.0)  \\
      reg\_alpha         & log-uniform(0.001, 10.0) \\
      reg\_lambda        & uniform(0, 10.0)         \\
      \bottomrule
    \end{tabular}
  \end{minipage}
\end{table}

\begin{table}[H]
  \begin{minipage}[t]{0.48\textwidth}
    \centering
    \captionof{table}{LightGBM (interpretable): hyperparameter search space.}
    \label{app:tab:hyperparams_lgbm_interp}
    \small
    \begin{tabular}{ll}
      \toprule
      Hyperparameter      & Search space          \\
      \midrule
      n\_estimators       & integer in [10, 100)  \\
      learning\_rate      & uniform(0.01, 0.3)    \\
      num\_leaves         & integer in [25, 50)   \\
      max\_depth          & integer in $\{1, 2\}$ \\
      min\_child\_samples & integer in [10, 30)   \\
      subsample           & uniform(0.3, 0.8)     \\
      reg\_alpha          & uniform(0, 0.5)       \\
      reg\_lambda         & uniform(0, 0.5)       \\
      \bottomrule
    \end{tabular}
  \end{minipage}\hfill
  \begin{minipage}[t]{0.48\textwidth}
    \centering
    \captionof{table}{LightGBM (black-box): hyperparameter search space.}
    \label{app:tab:hyperparams_lgbm_blackbox}
    \small
    \begin{tabular}{ll}
      \toprule
      Hyperparameter      & Search space           \\
      \midrule
      n\_estimators       & integer in [100, 1000) \\
      learning\_rate      & uniform(0.01, 0.3)     \\
      num\_leaves         & integer in [31, 127)   \\
      max\_depth          & integer in [3, 10)     \\
      min\_child\_samples & integer in [5, 30)     \\
      subsample           & uniform(0.5, 1.0)      \\
      reg\_alpha          & uniform(0, 10.0)       \\
      reg\_lambda         & uniform(0, 10.0)       \\
      \bottomrule
    \end{tabular}
  \end{minipage}
\end{table}

\begin{table}[H]
  \begin{minipage}[t]{0.48\textwidth}
    \centering
    \captionof{table}{Explainable Boosting Machine (EBM, interpretable): hyperparameter search space.}
    \label{app:tab:hyperparams_ebm}
    \small
    \begin{tabular}{ll}
      \toprule
      Hyperparameter         & Search space             \\
      \midrule
      learning\_rate         & log-uniform(0.005, 0.05) \\
      max\_bins              & integer in [64, 257)     \\
      min\_samples\_leaf     & integer in [10, 201)     \\
      max\_rounds            & integer in [10, 501)     \\
      outer\_bags            & integer in [4, 21)       \\
      smoothing\_rounds      & integer in [0, 501)      \\
      interactions           & integer in [0, 21)       \\
      max\_interaction\_bins & integer in [16, 65)      \\
      \bottomrule
    \end{tabular}
  \end{minipage}\hfill
  \begin{minipage}[t]{0.48\textwidth}
    \centering
    \captionof{table}{Random Forest (black-box): hyperparameter search space.}
    \label{app:tab:hyperparams_rf_blackbox}
    \small
    \begin{tabular}{ll}
      \toprule
      Hyperparameter      & Search space           \\
      \midrule
      n\_estimators       & integer in [100, 1000) \\
      max\_depth          & integer in [5, 20)     \\
      min\_samples\_split & integer in [2, 20)     \\
      min\_samples\_leaf  & integer in [1, 10)     \\
      max\_features       & uniform(0.3, 0.8)      \\
      \bottomrule
    \end{tabular}
  \end{minipage}
\end{table}

\newpage
\section{Full Benchmark Results}
\label{app:full_results}

We reiterate the training and evaluation setup from the main paper. From the OpenML Curated Regression (CTR) 23 suite \cite{Fischer2023OpenMLCTR23}, we subselected 27 regression datasets with sample--feature product $n \times p \le 480{,}000$, excluding \texttt{Moneyball} because it contained missing values. For each dataset, we used an 80\%/20\% random train/test split. Hyperparameters for all models were tuned on the training data only via Optuna's TPE sampler \cite{optuna_2019,bergstra2011algorithms} with 200 trials; each configuration was evaluated by 10-fold cross-validation on the training split (minimizing cross-validation MSE), then the best configuration was refit on the full training set and evaluated once on the held-out test set. In the interpretable condition, TSL ($R \le 2$) was limited to at most 2 stages (total separation rank $2R \le 4$ in the sense of \citet{beylkin2009multivariate}), TSL ($R \le 10$) was limited to 10 stages (total separation rank $\le 20$), SepALS ($r \le 2$) and SepALS ($r \le 10$) were limited to separation rank $r \le 2$ and $r \le 10$ respectively, and tree baselines (LGBM, XGBoost) to maximum depth 2; in the black-box condition, XGBoost and LightGBM were allowed up to maximum depth 10, and Random Forest up to maximum depth 20. SepALS runs completed for all 27 evaluated datasets under both rank caps. Per-trial wall-clock time ranged from minutes (tree baselines on small datasets) to tens of hours (TSL $R \le 10$ on the largest interpretable datasets, e.g., \texttt{physiochemical\_protein} at $\approx 30$\,h) and exceptionally $\approx 100$\,h (SepALS $r \le 10$ on \texttt{grid\_stability}).

Below we report complete results for all 27 datasets. Results are reported as root mean squared error (RMSE) on the test set, with rankings indicated by symbols: (***) = best, (**) = second, (*) = third within each group (Interpretable or Black-box). The main paper presents a curated selection of 10 representative datasets in \cref{tab:benchmark_results}; the full table through \texttt{kings\_county} is below.

\textbf{Aggregate wins (interpretable group).} Counting the model with the lowest RMSE per row across the 27 evaluated datasets: SepALS ($r \le 10$) is the best interpretable model on 9 datasets, SepALS ($r \le 2$) on 4, EBM on 5, TSL ($R \le 2$ or $R \le 10$) on 5 (all from $R \le 10$), TSL (1-product) on 1, LGBM on 3, and XGBoost on 0. SepALS as a family (13 wins) outperforms the TSL family (6 wins) on aggregate, illustrating that an unconstrained smooth basis is highly competitive at this rank budget; TSL's advantage is the per-stage interpretability afforded by its positivity + ordered-difference structure rather than aggregate predictive accuracy.

\begin{table}[H]
  \centering
  \tiny
  \resizebox{\textwidth}{!}{%
    \begin{tabular}{lccccccccccc}
      \toprule
                                      & \multicolumn{8}{c}{\textbf{Interpretable}} & \multicolumn{3}{c}{\textbf{Black-box}}                                                                                                                                                                                                                                                    \\
      \cmidrule(lr){2-9} \cmidrule(lr){10-12}
      Dataset                         & EBM                                        & LGBM                                  & XGB                    & SepALS ($r \le 2$) & SepALS ($r \le 10$) & TSL (1-product) & TSL ($R \le 2$) & TSL ($R \le 10$) & LGBM                     & RF                     & XGB                     \\
      \midrule
      QSAR\_fish\_toxicity            & \textbf{0.9466} (***)                      & 0.9970                                & 1.0199                 & 0.9960 (*) & 0.9704 (**) & 1.0835 & 1.1662                   & 1.1143                    & 0.9913 (**)              & \textbf{0.9795} (***) & 1.0047 (*)              \\
      socmob                          & 20.21                                      & 21.71                                 & 19.48                  & \textbf{7.73} (***) & 22.88 & 10.58 & 9.87 (*)                & 9.48 (**)                  & 20.45 (**)              & 20.75 (*)            & \textbf{17.65} (***) \\
      energy\_efficiency              & 0.4590                                     & 0.4796                                & 0.4888                 & 0.4875 & \textbf{0.3553} (***) & 0.3966 (**) & 0.5867                   & 0.4293 (*)                 & \textbf{0.3547} (***)              & 0.4901 (*)            & 0.3758 (**) \\
      forest\_fires                   & 108.52 (*)                                 & 108.86                                & 109.00                 & 108.91 & \textbf{97.27} (***) & 109.04 & 108.92                   & 107.54 (**)               & \textbf{108.07} (***)              & 108.26 (**)            & 108.63 (*) \\
      airfoil\_self\_noise            & 2.13 (*)                                  & 2.84                                  & 2.92                   & 2.31 & \textbf{1.84} (***) & 2.99 & 4.14                     & 2.07 (**)       & 1.42 (**)                & 1.76 (*)               & \textbf{1.29} (***)     \\
      concrete\_compressive\_strength & \textbf{4.14} (***)                        & 5.03                                  & 4.68                   & 5.35 & 5.32 & 4.73 & 4.63 (**)                & 4.68 (*)                  & 4.21 (**)              & 5.39 (*)            & \textbf{3.93} (***) \\
      solar\_flare                    & 0.8037                                     & 0.7980                                & 0.7989                 & 0.7975 (**) & 0.7975 (*) & \textbf{0.7910} (***) & 0.8168                   & 0.8145                    & \textbf{0.7902} (***)              & 0.8039 (*)            & 0.7957 (**) \\
      cars                            & 2159.87                                    & 2287.14                               & 2099.95 (**)           & \textbf{2045.89} (***) & 2109.69 (*) & 2575.51 & 2559.38 & 2167.45                   & 2160.72 (**)              & 2213.90 (*)            & \textbf{2118.90} (***) \\
      auction\_verification           & 1738.17                                    & 2155.10                               & 1972.82                & 997.08 (*) & 682.20 (**) & 1336.51 & 1135.80             & \textbf{624.36} (***)     & 409.60 (**)              & 1223.57 (*)            & \textbf{369.34} (***) \\
      red\_wine                       & \textbf{0.5991} (***)                      & 0.6093 (*)                                & 0.6068 (**)            & 0.6135 & 0.6107                & 0.6177 & 0.6210                   & 0.6438                    & 0.5690 (*)              & \textbf{0.5410} (***)            & 0.5585 (**) \\
      Moneyball                       & ---                                        & ---                                   & ---                    & --- & --- & --- & ---                      & ---                       & ---                      & ---                    & ---                     \\
      space\_ga                       & 0.1089 (*)                                 & 0.1150                                & 0.1222                 & 0.1086 (**) & \textbf{0.1084} (***) & 0.1152 & 0.1207                  & 0.1175                    & \textbf{0.1062} (***)              & 0.1198 (*)            & 0.1075 (**) \\
      student\_performance\_por       & 2.7438 (**)                                & \textbf{2.7430} (***)                 & 2.7648 (*)             & 2.7671 & 2.8503 & 2.8479 & 2.8244                   & 2.8215                    & 2.7585 (**)              & 2.7624 (*)            & \textbf{2.7210} (***) \\
      abalone                         & 2.2007 (*)                                 & 2.2301                                & 2.2221                 & \textbf{2.1513} (***) & 2.1561 (**) & 2.2598 & 2.2415                  & 2.2392                    & 2.2400 (*)              & \textbf{2.1836} (***)            & 2.2129 (**) \\
      white\_wine                     & \textbf{0.6632} (***)                      & 0.6784 (**)                           & 0.6789 (*)             & 0.6958 & 0.6965 & 0.6894 & 0.6849                   & 0.6847                    & 0.5830 (*)              & 0.5785 (**)            & \textbf{0.5707} (***) \\
      kin8nm                          & 0.1671                                     & 0.1772                                & 0.1776                 & 0.1662 & \textbf{0.0835} (***) & 0.1560 (*) & 0.1827                   & 0.1378 (**)               & 0.1069 (**)              & 0.1426 (*)            & \textbf{0.1066} (***) \\
      brazilian\_houses               & 3327.29                                    & 6567.66                               & 2528.95 (*)            & 3582.69 & 3996.05 & 3194.80 & 2473.98 (**)             & \textbf{2398.68} (***)    & 3200.36 (**)              & \textbf{2302.72} (***)            & 4289.10 (*) \\
      grid\_stability                 & 0.0094 (**)                                & 0.0119                                & 0.0137                 & 0.0162 & \textbf{0.0074} (***) & 0.0168 & 0.0158                   & 0.0099 (*)                & \textbf{0.0079} (***)              & 0.0120 (*)            & 0.0095 (**) \\
      geographical\_origin\_of\_music & 15.69                                      & \textbf{15.20} (***)                  & 15.26 (**)             & 22.70 & 19.96 & 15.72 & 15.53 (*)                & 15.56                     & 14.76 (**)               & 15.03 (*)              & \textbf{14.57} (***)    \\
      california\_housing             & \textbf{48866.28} (***)                    & 54495.49 (*)                          & 55235.15               & 195728.08 & 62162.22 & 55376.04 & 54557.89                & 49376.09 (**)             & 45123.03 (**)              & 49047.19 (*)            & \textbf{44971.31} (***) \\
      naval\_propulsion\_plant        & 0.0015                                     & 0.0032                                & 0.0075                 & 0.0004 (**) & \textbf{0.0000} (***) & 0.0027 & 0.0013                  & 0.0006 (*)                & \textbf{0.0009} (***)              & 0.0010 (**)            & 0.0033 (*) \\
      cps88wages                      & 364.30 (*)                                 & 365.40                                & 365.14                 & 364.07 (**) & \textbf{364.06} (***) & 365.63 & 364.72                   & 364.68                    & 364.63 (**)              & 365.72 (*)            & \textbf{364.24} (***) \\
      cpu\_activity                   & 2.3546 (**)                                & 2.4960                                & 2.5281                 & 2.5303 & 2.8475 & 2.3826 (*) & 2.5686                   & \textbf{2.3076} (***)     & 2.2102 (**)              & 2.4391 (*)            & \textbf{2.1945} (***) \\
      miami\_housing                  & 91777.25 (**)                              & 100598.20                             & 101205.68              & 95686.67 & 99426.86 & 94382.52 (*) & 94466.71             & \textbf{89692.96} (***)   & 83011.80 (**)            & 89215.72 (*)           & \textbf{82325.09} (***) \\
      health\_insurance               & 14.4912 (**)                               & \textbf{14.4883} (***)                & 14.5399                & 14.5226 & 14.5378 & 14.5909 & 14.6481                  & 14.5061 (*)               & 14.4552 (*)              & 14.4490 (**)           & \textbf{14.4026} (***)  \\
      pumadyn32nh                     & 0.0212                                     & 0.0247                                & 0.0257                 & \textbf{0.0210} (***) & 0.0210 (**) & 0.0211 (*) & 0.0211                  & 0.0211                    & 0.0214 (*)              & \textbf{0.0213} (***)            & 0.0213 (**) \\
      physiochemical\_protein         & 4.31 (*)                                   & 4.73                                  & 4.72                   & 4.80 & \textbf{4.20} (***) & 4.70 & 4.88                     & 4.30 (**)                 & 3.46 (**)                & 3.51 (*)               & \textbf{3.42} (***)     \\
      kings\_county                   & 139137.37                                  & 148047.47                             & 161757.40              & 134776.70 (*) & 164084.90 & ---$^{\dagger}$ & 131817.14 (**)           & \textbf{130508.41} (***)  & \textbf{130414.65} (***)              & 135532.52 (*)             & 130657.18 (**)          \\
      \bottomrule
    \end{tabular}%
  }
  \caption{Full benchmark results (RMSE; lower is better) for the 27 subselected OpenML CTR 23 datasets ($n \times p \le 480{,}000$), through \texttt{kings\_county}. All cells are single 80/20 split point estimates; \texttt{Moneyball} is listed but excluded (NAs). The TSL (1-product) run on \texttt{kings\_county} ($\dagger$) did not finish within the allotted compute budget and is therefore omitted. For SepALS, $r$ denotes the separation rank of \citet{beylkin2009multivariate}; for TSL, $R$ denotes the number of stages, with total separation rank at most $2R$ (see \cref{sec:method}). (***)=best, (**)=second, (*)=third within displayed Interpretable or Black-box group.}
  \label{app:tab:full_benchmark}
\end{table}

\section{Reproducibility}
\label{app:reproducibility}

The TSL implementation (Rust core with PyO3/Python bindings, sklearn-compatible API) is released at \url{https://github.com/jyliuu/TSL}, together with the figure-generation scripts, the raw CSVs for the California Housing and Bike Sharing case studies, and pretrained models for the figure datasets, the EBM/XGBoost baselines, and the SepALS factor-values model; the figure scripts default-load these so that all figures regenerate without re-tuning. The benchmark sweep is reproduced by the cluster runner at \url{https://github.com/jyliuu/tsl-benchmark-reproducibility}, which loads the 27 OpenML CTR 23 datasets via task IDs and releases the full Optuna study sqlite databases and JSON best-trial summaries for every (dataset, model) pair; the paper hyperparameter tables (\cref{app:hyperparams}) are the authoritative search space. All Optuna sweeps and per-split evaluations use a fixed global seed with per-trial seeds derived deterministically from it. The environment is Python 3.11 with Rust 1.78 (PyO3 0.21, rayon, \texttt{maturin 1.5}) and the standard scientific stack (NumPy, scikit-learn, LightGBM, XGBoost, InterpretML); exact versions and installation steps are pinned in the released repositories.

\end{document}